\documentclass{article}

\usepackage{amsmath,amsfonts,bm,mathtools}
\usepackage{bbm}
\usepackage[mathscr]{eucal}




\def\1{\bm{1}}

\def\<{\langle}
\def\>{\rangle}








\def\vtheta{{\bm{\theta}}}

\def\va{{\bm{a}}}
\def\vb{{\bm{b}}}
\def\vc{{\bm{c}}}

\def\vs{{\bm{s}}}

\def\vu{{\bm{u}}}
\def\vv{{\bm{v}}}
\def\vw{{\bm{w}}}
\def\vx{{\bm{x}}}
\def\vy{{\bm{y}}}


\def\mA{{\bm{A}}}

\def\mD{{\bm{D}}}

\def\mQ{{\bm{Q}}}

\def\mU{{\bm{U}}}

\def\mY{{\bm{Y}}}

\def\mPhi{{\bm{\Phi}}}

\DeclareMathAlphabet{\mathsfit}{\encodingdefault}{\sfdefault}{m}{sl}
\SetMathAlphabet{\mathsfit}{bold}{\encodingdefault}{\sfdefault}{bx}{n}


\def\cD{{\mathcal{D}}}

\def\cH{{\mathcal{H}}}

\def\cL{{\mathcal{L}}}
\def\cM{{\mathcal{M}}}


\def\bE{{\mathbb{E}}}

\def\bP{{\mathbb{P}}}

\def\bR{{\mathbb{R}}}
\def\bS{{\mathbb{S}}}









\newcommand{\R}{\mathbb{R}}



\DeclareMathOperator*{\argmin}{arg\,min}

\usepackage{amsmath,amssymb,amsthm,amsfonts,bm,amsthm,color}
\usepackage{bbm}
\usepackage{enumitem}
\usepackage[normalem]{ulem}
\usepackage{pifont}
\usepackage{wrapfig}
\usepackage{appendix}
\usepackage{pgfplots}
\pgfplotsset{compat=1.18}

\usepackage{caption}
\usepackage{subcaption}

\usepackage{natbib}
\setcitestyle{authoryear,round}

\usepackage{tcolorbox}
\usepackage{fullpage}
\usepackage[utf8]{inputenc} 
\usepackage[T1]{fontenc}    
\usepackage{hyperref}       
\usepackage{url}            
\usepackage{booktabs}       
\usepackage{amsfonts}       
\usepackage{nicefrac}       
\usepackage{microtype}      
\usepackage{xcolor}         
\usepackage[nameinlink, capitalize]{cleveref}
\AtBeginEnvironment{appendices}{\crefalias{section}{appendix}}
\AtBeginEnvironment{appendices}{\crefalias{subsection}{appendix}}

\newtheorem{theorem}{Theorem}[section]
\newtheorem{corollary}{Corollary}
\newtheorem{proposition}{Proposition}
\newtheorem{lemma}[theorem]{Lemma}

\newtheorem{remark}{Remark}

\newcommand{\vdelta}{\boldsymbol{\delta}}



\DeclareMathOperator{\BV}{BV}

\DeclareMathOperator{\sgn}{sgn}

 \def\vc{\boldsymbol{c}} 
\def\T{{\top}}
\def\btheta{{\boldsymbol{\theta}}}

\newcommand{\jupyter}[1]{\href[pdfnewwindow=true]{#1}{\smash{\begingroup
\setbox0=\hbox{\includegraphics[height=1.5em]{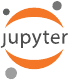}}%
\parbox{\wd0}{\box0}\endgroup}}}

\title{\textbf{
The Effects of Multi-Task Learning on ReLU Neural Network Functions}}
\author{
Julia Nakhleh \\
University of Wisconsin-Madison\\
\texttt{jnakhleh@wisc.edu}\\
\and
Joseph Shenouda\\
University of Wisconsin-Madison\\
\texttt{jshenouda@wisc.edu} \\
\and
Robert D. Nowak\\
University of Wisconsin-Madison\\
\texttt{rdnowak@wisc.edu}\\
}
\date{}

\begin{document}

\maketitle

\begin{abstract}
This paper studies the properties of solutions to multi-task shallow ReLU neural network learning problems, wherein the network is trained to fit a dataset with minimal sum of squared weights. Remarkably, the solutions learned for each individual task resemble those obtained by solving a kernel regression problem, revealing a novel connection between neural networks and kernel methods. It is known that single-task neural network learning problems are equivalent to a minimum norm interpolation problem in a non-Hilbertian Banach space, and that the solutions of such problems are generally non-unique. In contrast, we prove that the solutions to univariate-input, multi-task neural network interpolation problems are almost always unique, and coincide with the solution to a minimum-norm interpolation problem in a Sobolev (Reproducing Kernel) Hilbert Space. We also demonstrate a similar phenomenon in the multivariate-input case; specifically, we show that neural network learning problems with large numbers of tasks are approximately equivalent to an $\ell^2$ (Hilbert space) minimization problem over a fixed kernel determined by the optimal neurons.
\end{abstract}

\section{Introduction}
This paper characterizes the functions learned by multi-output shallow ReLU neural networks trained with weight decay regularization, wherein each network output fits a different ``task'' (i.e., a different set of labels on the same data points). We show that the solutions to such multi-task training problems can differ dramatically from those obtained by fitting separate neural networks to each task individually. Unlike standard intuitions \cite{caruana1997multitask} and existing theory \cite{ben2003exploiting,maurer2016benefit} regarding the effects and benefits of multi-task learning, our results do not rely on similarity between tasks. 

We focus on shallow, vector-valued (multi-output) neural networks with Rectified Linear Unit (ReLU) activation functions, which are functions $f_{\btheta}:\R^d\rightarrow \R^T$  of the form
\begin{eqnarray} \label{eq:nn}
    f_{\btheta}(\vx) \ = \ \sum_{k=1}^K \vv_k \big(\vw_k^\T {\vx}+b_k\big)_+ \ + \ \mA \vx + \vc
    \label{relu_net}
\end{eqnarray}
where $(\cdot)_+ = \max\{0,\cdot\}$ is the ReLU activation function, 
$\vw_k \in \R^d$, $\vv_k\in\R^T$, and $b_k\in\R$ are the input and output weights and bias of the $k^{\text{th}}$ neuron.  $K$ is the number of neurons and $T$ denotes the number of tasks (outputs) of the neural network. The affine term $\mA \vx + \vc$ is the residual connection (or skip connection), where $\mA \in \R^{T\times d}$ and $\vc \in \R^T$. The set of all parameters is denoted by $\btheta:=\big(\{\vv_k,\vw_k,b_k\}_{k=1}^K,\mA,\vc\big)$. 

Neural networks are trained to fit data using gradient descent methods and often include a form of regularization called \emph{weight decay}, which penalizes the $\ell^2$ norm of the network weights.  We consider weight decay applied only to the input and output weights of the neurons---no regularization is applied to the biases or residual connection. This is a common setting studied frequently in past work \cite{savarese2019infinite,ongie2019function,parhi2021banach}. Intuitively, only the input and output weights---not the biases or residual connection---affect the ``regularity'' of the neural network function as measured by its second (distributional) derivative, which is why it makes sense to regularize only these parameters. Given a set of training data points $(\vx_1, \vy_1), \dots, (\vx_N, \vy_N) \in \R^d \times \R^T$ and a fixed width\footnote{By an argument similar to the proof of Theorem 5 of \cite{shenouda2024variation}, as long as $K \geq N^2$, problem \eqref{opt:pn} is well-posed and attains the same minimal objective value (regardless of which $K \geq N^2$ is chosen). Therefore, in this work, we always assume that $K$ is some fixed value larger than $N^2$.} \label{fn:K_N_squared} $K \geq N^2$, we consider the weight decay interpolation problem:

\begin{equation} \label{opt:wd}
 \min_{\btheta}  \sum_{k=1}^{K} \|\vv_k\|_{2}^{2} + \|\vw_k\|_{2}^{2} \ , \ \mbox{subject to } f_{\btheta}(\vx_i)=\vy_i, \, i=1,\dots,N \ .
\end{equation}
By homogeneity of the ReLU activation function (meaning that $(\alpha x)_+ = \alpha (x)_+$ for any $\alpha \geq 0$), the input and output weights of any ReLU neural network can be rescaled as $\vw_k \mapsto \vw_k / \| \vw_k \|_2$ and $\vv_k \mapsto \vv_k \| \vw_k \|_2$ without changing the function that the network represents. Using this fact, several previous works \cite{grandvalet1998least,grandvalet1998outcomes,NeyshaburInductiveBias,parhi2023deep} note that problem \eqref{opt:wd} is equivalent to
\begin{eqnarray}
\label{opt:pn}
 \min_{\btheta} 
  \sum_{k=1}^{K} \|\vv_k\|_{2}, \  \mbox{subject to }  \{\|\vw_k\|_2=1\}_{k=1}^K, \,   f_\btheta(\vx_i)=\vy_i, \, i=1,\dots,N \ 
\end{eqnarray}
in that the minimal objective values of both training problems are the same, and any network $f_{\btheta}$ which solves \eqref{opt:wd} also solves \eqref{opt:pn}, while any $f_{\btheta}$ which solves \eqref{opt:pn} also solves \eqref{opt:wd} after rescaling of the input and output weights. The regularizer $\sum_{k=1}^K \| \vv_k \|_2$ is reminiscent of the multi-task lasso \cite{obozinski2006multi}. It has recently been shown to promote \emph{neuron sharing} in the network, such that only a few neurons contribute to all tasks \cite{shenouda2024variation}.

The optimizations in \eqref{opt:wd} and \eqref{opt:pn} are non-convex and in general, they may have multiple global minimizers.  As an example, consider the single-task, univariate dataset in \cref{fig:uni}. For this dataset, \eqref{opt:pn} has infinitely many global solutions \cite{savarese2019infinite,ergen2021convex,debarre2022sparsest,hanin2021ridgeless}. Two of the global minimizers are shown in \cref{fig:uni}. In some scenarios, the solution on the right may be preferable to the one on the left, since the interpolation function stays closer to the training data points, and hence is more adversarially robust by most definitions \cite{carlini2019evaluating}. Moreover, recent theoretical work shows that this solution has other favorable generalization and robustness properties \cite{joshi2024noisy}.  Current training methods, however, might produce any one of the infinite number of solutions, depending on the random initialization of neural network weights as well as other possible sources of randomness in the training process.   It is impossible to control this using existing training algorithms, which might explain many problems associated with current neural networks such as their sensitivity to adversarial attacks. In contrast, as we show in this paper, training a network to interpolate the data in \cref{fig:uni} along with additional interpolation tasks with different labels almost always produces a unique solution, given by the (potentially preferable) interpolation depicted on the right. This demonstrates that the solutions to multi-task learning problems can be profoundly different than those of single-task learning problems.

\begin{figure}[h]
    \centering
    {\includegraphics[width=12cm]{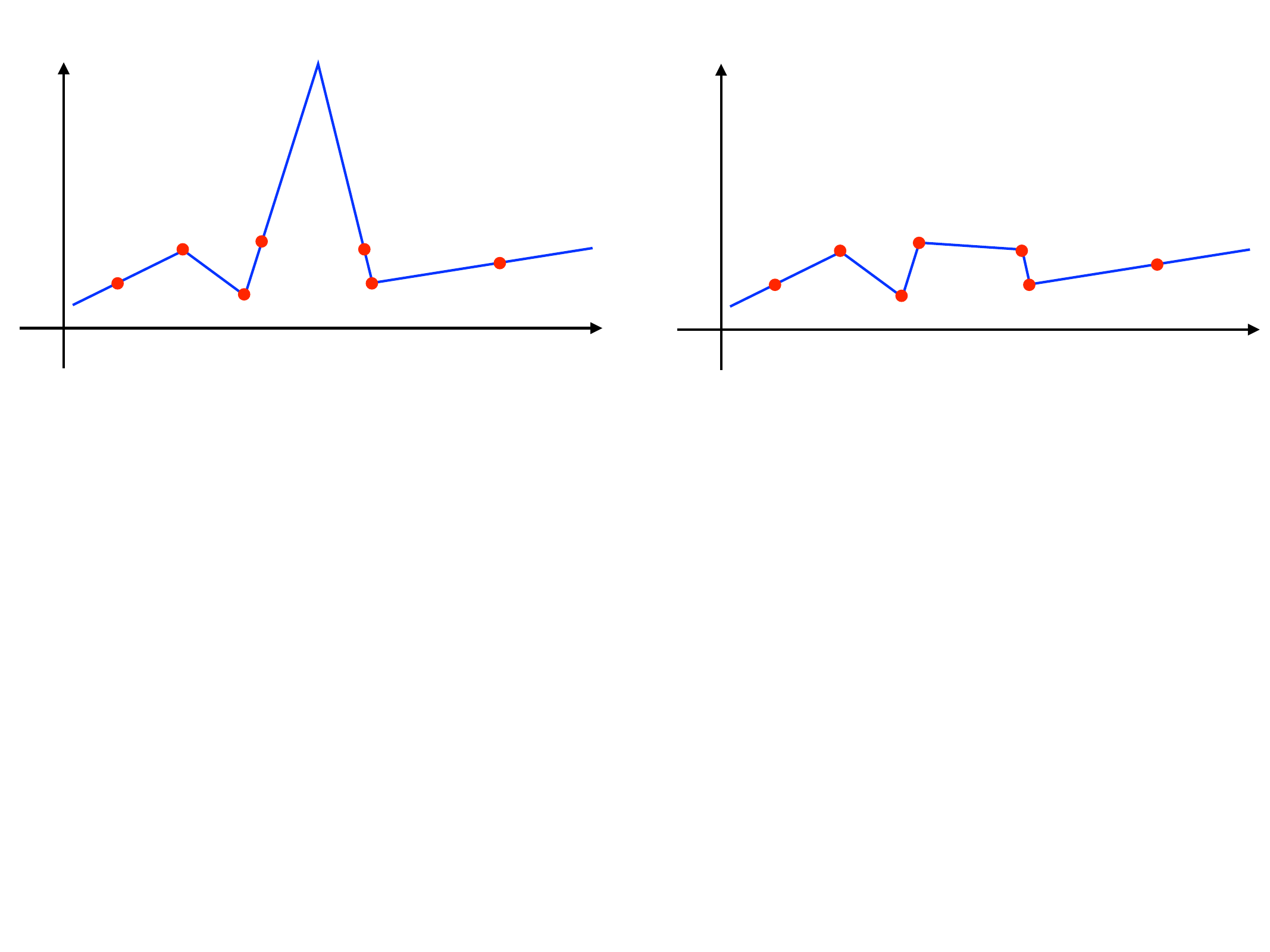}} 
    \caption{Two solutions to ReLU neural network interpolation (blue) of training data (red).  The functions on the left and right both interpolate the data and both are global minimizers of \eqref{opt:wd} and \eqref{opt:pn}, and minimize the second-order total variation of the interpolation function \cite{parhi2021banach}. In fact, all convex combinations of the two solutions above are also solutions to this learning problem.}
    \label{fig:uni}
\end{figure}

The main contributions of our paper are:
\begin{enumerate}[wide, labelwidth=!, labelindent=0pt]
    \item[\bf Uniqueness of Multi-Task Solutions.] In the univariate setting ($d=1$) we prove that the solution to multi-task learning problems with different tasks almost always represent a unique function, and we give a precise condition for the exceptional cases where solutions are non-unique. 
    \item[\bf Multi-Task Learning $\equiv$ Kernel Method (almost always).] When the solution to the univariate weight decay problem is unique, it is given by the connect-the-dots interpolant of the training data points: i.e., the optimal solution is a linear spline which performs straight-line interpolation between consecutive data points in all tasks.  On the support of the data, this solution agrees with the minimum-norm interpolant in the first-order Sobolev space $H^1$, a reproducing kernel Hilbert space (RKHS) which contains all functions with first derivatives in $L^2$ \cite{de1966splines}.  In contrast, solutions to the single-task learning problem are non-unique in general and are given by minimum-norm interpolating functions in the non-Hilbertian Banach space $\BV^2$ \cite{parhi2021banach}, which contains all functions with second distributional derivatives\footnote{Technically, $\BV^2$ contains all functions with second distributional derivatives in $\cM$, the space of Radon measures with finite total variation. $\cM$ can be viewed as a ``generalization'' of $L^1$ (see Ch. 7.3, p.223 in \cite{folland1999real}).} in $L^1$. This shows that the individual task solutions to a multi-task learning problem are almost always equivalent to those of a minimum-norm kernel interpolation problem, whereas single-task solutions generally are not. 
    \item[\bf Insights on Multivariate Multi-Task Learning.] We provide empirical evidence and mathematical analysis which indicate that similar conclusions hold in multivariate settings. Specifically, the individual task solutions to a multi-task learning problem are approximately minimum-norm solutions in a particular RKHS determined by the optimal neurons. In contrast, learning each task in isolation results in solutions that are minimum-norm with respect to a non-Hilbertian Banach norm over the optimal neurons.
\end{enumerate}
\section{Related Works}
\paragraph{\textbf{Characterizations of ReLU neural network solutions: }} 
\cite{hanin2021ridgeless,stewart2023regression} characterized the neural network solutions to \eqref{opt:wd} in the univariate input/output setting. \cite{boursier2023penalising} showed that in the univariate input/output case, when weight decay is modified to include the biases of each neuron, the solution is unique. Moreover, under certain assumptions, it is the sparsest interpolant (i.e., the interpolant with the fewest neurons). Our work differs from these in that we study the multi-task setting, showing that univariate-input multi-task solutions are almost always unique and equivalent to the connect-the-dots solution, which is generally \textit{not} the sparsest, and is a minimum-norm solution in a Sobolev RKHS.
While characterizing solutions to \eqref{opt:wd} in the multivariate setting is more challenging, there exist some results for very particular datasets \cite{ergen2021convex, ardeshir2023intrinsic, zeno2024minimum}.

\paragraph{\textbf{Function spaces associated with neural networks:}}
For single-output ReLU neural networks, \cite{savarese2019infinite, ongie2019function}
related weight decay regularization on the parameters of the model to regularizing a particular semi-norm on the neural network function. \cite{ongie2019function} showed that this semi-norm is not an RKHS semi-norm, highlighting a fundamental difference between learning with neural networks and kernel methods. \cite{parhi2021banach,parhi2022kinds, bartolucci2023understanding, UnserUnifyingRepresenter} studied the function spaces associated with this semi-norm, and developed representer theorems showing that optimal solutions to the minimum-norm data fitting problem over these spaces are realized by finite-width ReLU networks. Consequently, finite-width ReLU networks trained with weight decay are optimal solutions to the regularized data-fitting problem posed over these spaces. Function spaces and representer theorems for multi-output and deep neural networks were later developed in \cite{korolev2022two, parhi2022kinds, shenouda2024variation}.

\paragraph{\textbf{Multi-Task Learning: }}
 The advantages of multi-task learning have been extensively studied in the machine learning literature \cite{obozinski2006multi, obozinski2010joint, argyriou2006multi, argyriou2008convex,caruana1997multitask}. In particular, the theoretical properties of multi-task neural networks have been studied in \cite{lindsey2023implicit,collins2024provable, shenouda2024variation}. The underlying intuition in these past works has been that learning multiple related tasks simultaneously can help select or learn the most useful features for all tasks. Our work differs from this traditional paradigm as we consider multi-task neural networks trained on very general tasks which may be diverse and unrelated.

\section{Univariate Multi-Task Neural Network Solutions} \label{sec:univariate}
For any function $f$ that can be represented by a neural network \eqref{eq:nn} with width $K$, we define its representational cost to be
\begin{eqnarray}\label{eq:rep_cost}
    R(f) := \inf_{\bm{\theta}} \sum_{k=1}^{K} \|\vv_k\|_{2} \ , \  \text{subject to } \|\vw_k\|_2 = 1 \ \forall k , \, f= f_{\vtheta}
\end{eqnarray}
where $\vtheta = \left(\{\vv_k, \vw_k, b_k\}^{K}_{k=1}, \mA, \vc\right)$. Taking an inf over all possible neural network parameters is necessary as there are multiple neural networks which can represent the same function.
Solutions to \eqref{opt:pn} minimize this representational cost subject to the data interpolation constraint. This section gives a precise characterization of the solutions to the multi-task neural network interpolation problem in the univariate setting ($d=1$).

For the training data points $(x_1, \vy_1), \dots, (x_N, \vy_N) \in \R \times \R^T$, let $y_{it}$ denote the $t^{\text{th}}$ coordinate of the label vector $\vy_i$. For each $t = 1, \dots, T$, let $\cD_t$ denote the univariate dataset $(x_1, y_{it}), \dots, (x_N, y_{Nt}) \in \R \times \R$, and let
\begin{equation}
    s_{it} = \frac{y_{i+1,t} - y_{it}}{x_{i+1} - x_{i}}
\end{equation}
denote the slope of the straight line between $(x_i, y_{it})$ and $(x_{i+1}, y_{i+1 t})$.
The connect-the-dots interpolant of the dataset $\cD_t$  is the function $f_{\cD_t}$ which connects the consecutive points in dataset $\cD_t$ with straight lines (see \cref{fig:ctd_interp}). Its slopes on $(-\infty, x_2]$ and $[x_{N-1}, \infty)$ are $s_{1t}$ and $s_{N-1 t}$, respectively. In the following section, we state a simple necessary and sufficient condition under which the connect-the-dots interpolation $f_{\cD} = (f_{\cD_1}, \dots, f_{\cD_T})$ is the \emph{unique} optimal interpolant of the datasets $\cD_1, \dots, \cD_T$. We also demonstrate that the set of multi-task datasets which satisfy the necessary condition for non-uniqueness, viewed as a subset of $\R^N \times \R^{T \times N}$, has Lebesgue measure zero. 

This result raises an interesting new connection between data fitting with ReLU neural networks and traditional kernel-based learning methods. Indeed, connect-the-dots interpolation is also the minimum-norm interpolant over the first-order Sobolev space $H^1([x_1, x_N])$, itself an RKHS whose norm penalizes the $L^2$ norm of the derivative of the function. In particular, $f_{\cD_t}$ agrees on $[x_1, x_N]$ with the function $f(x) = \sum_{j=1}^N \alpha_j k(x, x_j)$ whose coefficients $\alpha_j$ solve the kernel optimization problem 
\begin{equation}
\min_{\alpha_1,\dots,\alpha_N \in \R} \sum_{i=1}^N\sum_{j=1}^N \alpha_i\alpha_j  k(x_i,x_j) \ , \ \mbox{subject to } \sum_{j=1}^N \alpha_j k(x_i,x_j) = y_{it}, \, i=1,\dots,N \ .
\end{equation}
with the kernel $k(x, x') = 1-(x-x')_+ + (x-x_1)_+ + (x_1 - x')_+$ \cite{de1966splines}. Therefore, our result shows that the individual outputs of solutions to \eqref{opt:pn} for $T > 1$ tasks almost always coincide on $[x_1, x_N]$ with this kernel solution. In contrast, optimal solutions to the \eqref{opt:pn} in the case $T = 1$ are generally non-unique and may not coincide with the connect-the-dots kernel solution \cite{hanin2022implicit, debarre2022sparsest}. We note that for $T = 1$, our result is consistent with the characterizations of univariate solutions to \eqref{opt:pn} in \cite{hanin2022implicit, debarre2022sparsest}.

\begin{figure}
    \centering
    \includegraphics[width=.6\textwidth]{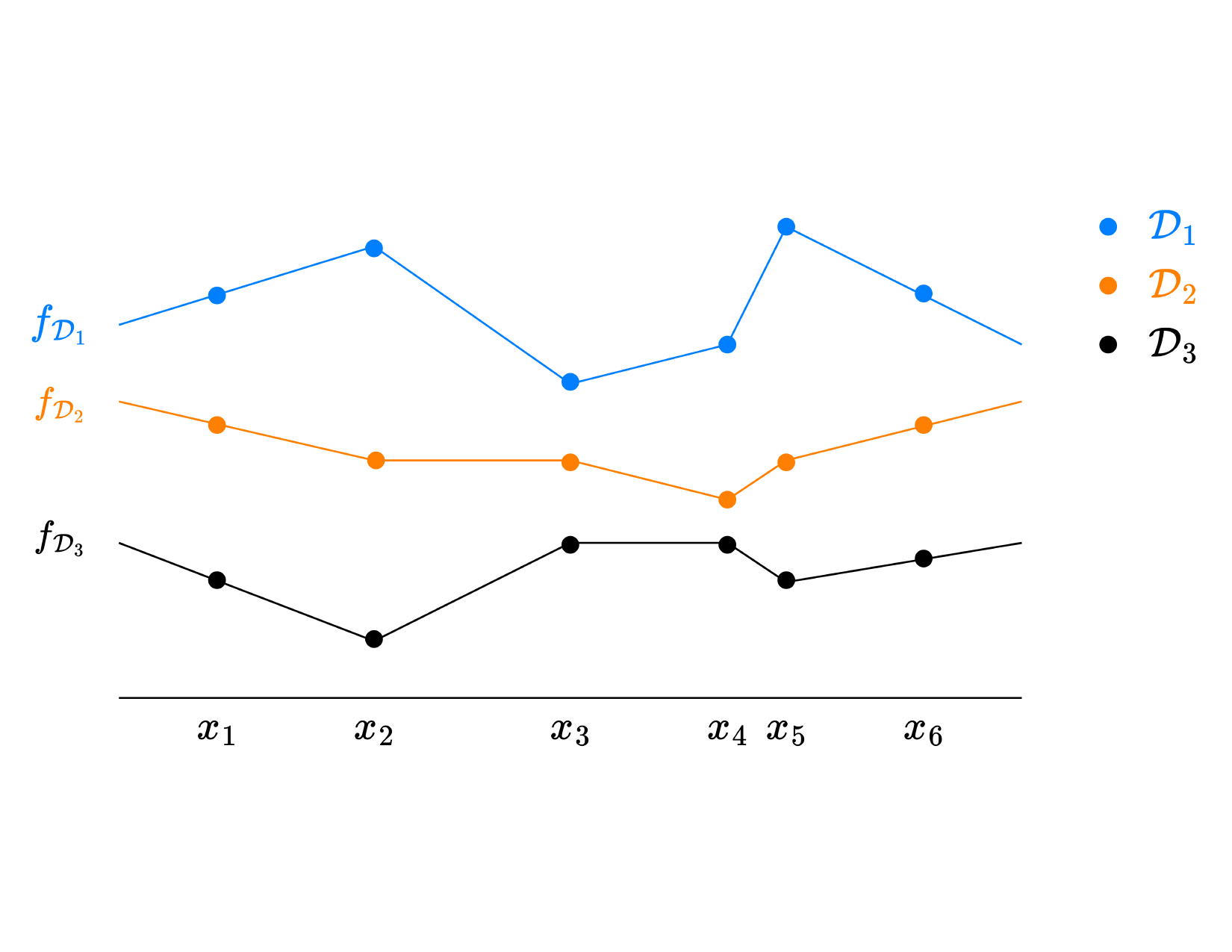}
    \caption{The connect-the-dots interpolant $f_{\cD} = (f_{\cD_1}, f_{\cD_2}, f_{\cD_3})$ of three datasets $\cD_1, \cD_2, \cD_3$.}
    \label{fig:ctd_interp}
\end{figure}

\subsection{Characterization and Uniqueness}

Our main result is stated in the following theorem:

\begin{theorem}\label{th:univariate_ctd_main_theorem}
The connect-the-dots function $f_{\cD}$ is always a solution to \eqref{opt:pn}. Moreover, the solution to problem \eqref{opt:pn} is non-unique if and only if the following condition is satisfied: for some $i = 2, \dots, N-2$, the two vectors
\begin{align}
    \vs_i - \vs_{i-1} = \frac{\vy_{i+1} - \vy_i}{x_{i+1}-x_i} - \frac{\vy_i - \vy_{i-1}}{x_i - x_{i-1}}
\end{align}
and
\begin{align}
    \vs_{i+1} - \vs_i = \frac{\vy_{i+2}-\vy_{i+1}}{x_{i+2}-x_{i+1}} - \frac{\vy_{i+1} - \vy_i}{x_{i+1}-x_i}
\end{align}
are both nonzero and aligned.\footnote{Two vectors $\vu_1$ and $\vu_2$ are \emph{aligned} if $\vu_1^{\top}\vu_2 = \|\vu_1\|\|\vu_2\|$. \label{fn:aligned_def}} 

If this condition not satisfied, then $f_{\cD}$ is the unique solution to \eqref{opt:pn}. Furthermore, as long as $T > 1$ and $N > 1$, the set of all possible data points $x_1, \dots, x_N \in \R$ and $\vy_1, \dots, \vy_N \in \R^T$ which admit non-unique solutions has Lebesgue measure zero (as a subset of $\R^N \times \R^{T \times N}$).
\end{theorem}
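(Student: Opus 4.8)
The plan is to establish the final, measure-zero assertion, taking the preceding if-and-only-if characterization as given. Write $\vY = (\vy_1, \dots, \vy_N) \in \R^{T \times N}$ and $\vx = (x_1, \dots, x_N) \in \R^N$, and let $B \subseteq \R^N \times \R^{T\times N}$ denote the set of data admitting non-unique solutions to \eqref{opt:pn}. By the characterization, $B = \bigcup_{i=2}^{N-2} B_i$, where $B_i$ is the set on which the two second-difference vectors $\vu_1^{(i)} := \vs_i - \vs_{i-1}$ and $\vu_2^{(i)} := \vs_{i+1} - \vs_i$ are both nonzero and aligned. Since a finite union of null sets is null (and since $B = \emptyset$ trivially when $N < 4$, making the index range empty), it suffices to show each $B_i$ has Lebesgue measure zero.

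First I would reduce the alignment condition to a polynomial one. Alignment implies, in particular, that $\vu_1^{(i)}$ and $\vu_2^{(i)}$ are linearly dependent, so I may drop the sign and nonvanishing requirements and work with the larger, containing set. To clear the denominators $x_{j+1}-x_j$, I pass to the rescaled (polynomial) vectors
\begin{align}
\tilde{\vu}_1^{(i)} &= (x_i - x_{i-1})(\vy_{i+1}-\vy_i) - (x_{i+1}-x_i)(\vy_i - \vy_{i-1}), \\
\tilde{\vu}_2^{(i)} &= (x_{i+1}-x_i)(\vy_{i+2}-\vy_{i+1}) - (x_{i+2}-x_{i+1})(\vy_{i+1}-\vy_i),
\end{align}
which are positive scalar multiples of $\vu_1^{(i)}, \vu_2^{(i)}$ on the full-measure open set $U$ where the $x_j$ are distinct. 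On $U$, linear dependence of $\vu_1^{(i)}, \vu_2^{(i)}$ is equivalent to the vanishing of every $2\times 2$ minor of $[\tilde{\vu}_1^{(i)} \mid \tilde{\vu}_2^{(i)}]$. Fixing two task coordinates (possible precisely because $T > 1$), I single out one such minor,
\begin{align}
P_i(\vx, \vY) = (\tilde{\vu}_1^{(i)})_1 (\tilde{\vu}_2^{(i)})_2 - (\tilde{\vu}_1^{(i)})_2 (\tilde{\vu}_2^{(i)})_1,
\end{align}
a genuine polynomial in the coordinates of $(\vx, \vY)$, so that $B_i \cap U \subseteq \{P_i = 0\}$.

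The crux is to verify $P_i \not\equiv 0$, for then its zero set has Lebesgue measure zero and we are done (the complement $U^c$ is already null). This is exactly where the hypothesis $T > 1$ is essential: with two free task coordinates the two tasks can curve independently at the relevant nodes. Concretely, I would take $x_j = j$ (so that $\tilde{\vu}_1^{(i)}, \tilde{\vu}_2^{(i)}$ reduce to ordinary second differences) and choose the first two coordinates of the $\vy_j$ so that these second differences equal, say, $(1,0)$ and $(0,1)$ respectively, giving $P_i = 1 \neq 0$. I expect exhibiting this single non-aligned configuration to be the only real content of the argument, and it is immediate once $T > 1$; I would also remark that the argument genuinely breaks for $T = 1$, since any two scalars are trivially linearly dependent, which is consistent with the theorem's hypothesis.

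Finally, I would assemble the pieces: $B \cap U \subseteq \bigcup_{i=2}^{N-2} \{P_i = 0\}$ is a finite union of null sets and hence null, while $U^c = \{\text{some } x_j = x_{j'}\}$ is null; therefore $B$ is null in $\R^N \times \R^{T\times N}$. As an alternative I would keep a Fubini argument in reserve: for each fixed distinct $\vx$, the map $\vY \mapsto (\vu_1^{(i)}, \vu_2^{(i)})$ is a surjective linear map onto $\R^T \times \R^T$ (its $2\times 4$ coordinate blocks have rank $2$ when the $x_j$ are distinct), the set of aligned pairs is null in $\R^T \times \R^T$ for $T > 1$, and preimages of null sets under surjective linear maps are null; integrating the null $\vY$-slices over $\vx$ then yields the claim.
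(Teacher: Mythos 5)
Your proposal establishes only the final, measure-zero assertion of the theorem and explicitly assumes everything that precedes it --- that $f_{\cD}$ is always optimal, that non-uniqueness occurs exactly when two consecutive second-difference vectors are both nonzero and aligned, and that otherwise $f_{\cD}$ is the unique solution. That assumed part is the main content of the theorem and the bulk of the paper's proof: it rests on \cref{lemma:keylemma} (removing an extraneous knot never increases the representational cost $R(f)$, and strictly decreases it unless the surrounding slope-change vectors $\va-\vb$ and $\vb-\vc$ are aligned), on the residual-connection argument that gives a one-to-one correspondence between knots and neurons, on an explicit construction of a one-parameter family of alternative optima when the alignment condition holds, and on a separate triangle-inequality argument handling extraneous knots in the boundary intervals $[x_1,x_2]$ and $[x_{N-1},x_N]$ (which is precisely why the index $i$ in the condition runs only over $2,\dots,N-2$). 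None of this appears in your proposal, so as a proof of the stated theorem it has a substantial gap.

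For the part you do prove, the argument is correct and closely parallels the paper's: both reduce the alignment condition to the vanishing of a nontrivial polynomial (or rational function) in the data and invoke the fact that such a zero set is Lebesgue-null. The paper forms $\mU = \mY[\va_1,\va_2]\in\R^{T\times 2}$ and uses $\det(\mU\mU^\top)=0$, i.e.\ the vanishing of the sum of squares of all $2\times 2$ minors; you instead single out one minor $P_i$ in two fixed task coordinates. Your variant is marginally cleaner, since the containment $B_i\cap U\subseteq\{P_i=0\}$ needs only one minor and the nonvanishing $P_i\not\equiv 0$ follows from a single witness configuration; the Fubini argument you keep in reserve is also valid. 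Both routes use $T>1$ in the same essential way, and both correctly discard the sign and nonvanishing parts of the alignment condition by passing to the larger linear-dependence set.
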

\begin{corollary}
    If $T > 1$ and $N >1$ and the data points $x_1, \dots, x_N \in \R$ and label vectors $\vy_1, \dots, \vy_N \in \R^T$ are sampled from an absolutely continuous distribution with respect to the Lebesgue measure on $\R^N \times \R^{T \times N}$, then with probability one, the connect-the-dots function $f_{\cD}$ is the unique solution to \eqref{opt:pn}.
\end{corollary}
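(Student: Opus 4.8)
The plan is to reformulate \eqref{opt:pn} as a minimum-norm second-derivative interpolation problem and then run a dual-certificate (complementary slackness) argument. In $d=1$ the constraint $\|\vw_k\|_2=1$ forces $w_k=\pm 1$, so each neuron contributes a single kink whose distributional second derivative is $\vv_k\delta_{\tau_k}$ at a knot $\tau_k$; merging neurons at a common knot shows that, for functions realizable by \eqref{eq:nn}, the representational cost $R(f)$ equals the $\ell^2$-total variation $\|D^2 f\|_{\cM}$ of the vector measure $D^2 f$ (the univariate specialization of the representer results of \cite{shenouda2024variation}), with the affine part $\mA x+\vc$ left free. Since $K\ge N^2$ the infimum is attained by a piecewise-linear network, so \eqref{opt:pn} is equivalent to minimizing $\|D^2 f\|_{\cM}$ subject to $f(x_i)=\vy_i$. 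The connect-the-dots interpolant satisfies $D^2 f_{\cD}=\sum_{i=2}^{N-1}(\vs_i-\vs_{i-1})\delta_{x_i}$, hence has cost $\sum_{i=2}^{N-1}\|\vs_i-\vs_{i-1}\|_2$, and the entire theorem reduces to showing this is the optimal value and characterizing its minimizers.

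For optimality I would build a dual certificate. Let $\vu_i=(\vs_i-\vs_{i-1})/\|\vs_i-\vs_{i-1}\|_2$ at each interior knot where the slope change is nonzero, and let $\vphi:\R\to\R^T$ be the continuous, piecewise-linear field with $\vphi(x_i)=\vu_i$, $\vphi(x_1)=\vphi(x_N)=\vzero$, and $\vphi\equiv\vzero$ outside $[x_1,x_N]$; at interior nodes with zero slope change the value is chosen so as to keep the field inside the unit ball. By convexity of the Euclidean ball $\|\vphi(x)\|_2\le 1$ everywhere. Two integrations by parts give $\langle\vphi,D^2 f\rangle=\langle D^2\vphi,f\rangle=\sum_i \vg_i^\top\vy_i$, a quantity depending only on the data (not on which interpolant $f$ we use) since $\vphi$ is compactly supported; evaluating it at $f_{\cD}$ shows the common value is exactly $\sum_{i=2}^{N-1}\|\vs_i-\vs_{i-1}\|_2$. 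Then for any interpolant, $\sum_{i}\|\vs_i-\vs_{i-1}\|_2=\langle\vphi,D^2 f\rangle\le\|\vphi\|_{\infty,2}\,\|D^2 f\|_{\cM}\le\|D^2 f\|_{\cM}$, proving that $f_{\cD}$ is a solution.

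Uniqueness follows from the equality case. Complementary slackness forces any optimal $f$ to place its second-derivative mass only on the saturation set $\{x:\|\vphi(x)\|_2=1\}$ and aligned with $\vphi$ there. Since $\vphi$ is linear between unit vectors, strict convexity gives $\|\vphi(x)\|_2<1$ on the open segment $(x_i,x_{i+1})$ unless $\vu_i=\vu_{i+1}$, i.e. unless the slope changes at $x_i$ and $x_{i+1}$ are both nonzero and aligned. Thus, if the theorem's condition fails at every $i$, one can arrange (treating zero slope changes by perturbing the corresponding nodal values) that every open segment is strictly interior; then $D^2 f$ is supported on $\{x_2,\dots,x_{N-1}\}$, forcing $f$ to be piecewise linear with breakpoints only at interior data points, which interpolation pins down to $f_{\cD}$. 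Conversely, if the condition holds for some $i$, the segment $[x_i,x_{i+1}]$ is saturated, and I would produce a distinct minimizer by redistributing the two kinks at $x_i,x_{i+1}$ onto an interior knot; this knot-splitting perturbation preserves interpolation and, because $\vu_i=\vu_{i+1}$, preserves the $\ell^2$-TV cost. The delicate part, and the main obstacle, is exactly this equivalence: rigorously extracting from equality both the support restriction and the directional alignment of the vector measure $D^2 f$ (over all optimal dual fields, with the degenerate zero-slope-change nodes handled correctly), since this is where the sharp ``if and only if'' is won or lost.

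Finally, for the measure-zero claim and the corollary, observe that each $\vs_i-\vs_{i-1}$ is a polynomial function of the data after clearing the nonzero denominators $x_{i+1}-x_i$. For $T>1$, ``nonzero and aligned'' implies the vanishing of every $2\times 2$ minor $a_p b_q-a_q b_p$ of the pair, at least one of which is a polynomial in the data that is not identically zero (vary a single label in two coordinates). Hence for each $i$ the non-uniqueness set lies in a proper algebraic subset of $\R^N\times\R^{T\times N}$, which is Lebesgue-null; the finite union over $i$, together with the null set where some $x_{i+1}=x_i$, remains null, establishing the last sentence of the theorem. The corollary is then immediate, since an absolutely continuous law assigns probability zero to any Lebesgue-null set: with probability one the data avoid the non-uniqueness set, so $f_{\cD}$ is the unique solution.
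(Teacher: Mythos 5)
Your handling of the corollary itself coincides with the paper's: given the theorem, it is the one-line observation that an absolutely continuous law assigns probability zero to the Lebesgue-null set of data admitting non-unique solutions. Where you genuinely diverge is in how you re-derive the theorem. The paper argues entirely in the primal: its key lemma applies the triangle inequality to the three slope vectors surrounding an extraneous knot to show that deleting the knot never increases $\sum_k \|\vv_k\|_2$ and strictly decreases it unless $\va-\vb$ and $\vb-\vc$ are aligned; repeated knot removal then gives optimality of $f_{\cD}$, and uniqueness follows by contradiction, with a separate strict-inequality computation for the boundary intervals $[x_1,x_2]$ and $[x_{N-1},x_N]$. You instead pass to the equivalent problem of minimizing the $\ell^2$-total variation of $D^2 f$ and exhibit a dual certificate $\vphi$, so that optimality of $f_{\cD}$ and the support/alignment restrictions on any optimizer fall out of the equality case of $\langle\vphi, D^2 f\rangle \le \|D^2 f\|_{\cM}$. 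This is a legitimately different and arguably more systematic route: the certificate makes the role of the alignment condition transparent (a segment $(x_i,x_{i+1})$ saturates exactly when $\vu_i=\vu_{i+1}$, i.e., when the two slope changes are nonzero and aligned), and it dispatches the boundary intervals automatically via $\vphi(x_1)=\vphi(x_N)=0$, where the paper needs its separate argument. The price is the step you yourself flag: to conclude uniqueness when alignment fails everywhere, you must fix one admissible $\vphi$ whose saturation set is exactly $\{x_2,\dots,x_{N-1}\}$ — which requires assigning the free values at zero-slope-change nodes a norm strictly less than one (any such choice works, since those nodes contribute nothing to $\langle\vphi, D^2 f_{\cD}\rangle$) — and then note that a CPWL interpolant with knots only at interior data points is necessarily $f_{\cD}$. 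That step is routine but must be written out for the ``if and only if'' to be complete; as it stands your argument is an outline at precisely the point where the theorem is won. Your measure-zero argument via vanishing $2\times 2$ minors is essentially the paper's, which tests $\det(\mU\mU^\top)=0$ — the sum of squares of those same minors — and both reduce to the zero set of a nontrivial polynomial after clearing the denominators $x_{i+1}-x_i$.
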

\begin{remark}
    The proof of \cref{th:univariate_ctd_main_theorem}, which relies mainly on \cref{lemma:keylemma} as we describe below, also characterizes solutions of the regularized loss problem
    \begin{eqnarray} \label{eq:reg_loss}
         \min_{\btheta} \sum_{i=1}^{N} \mathcal{L}(f_{\btheta}(\vx_i), \vy_i) + \lambda \sum_{k=1}^{K} \|\vv_k\|_2 \quad  \mbox{subject to} \quad |w_k|=1, \: k=1,\dots, K  
    \end{eqnarray}
    for input dimension $d = 1$, any $\lambda > 0$, and any loss function $\cL$ which is lower semicontinuous in its second argument. Specifically, any $f_{\vtheta}$ which solves \eqref{eq:reg_loss} is linear between consecutive data points $[x_i, x_{i+1}]$ unless the vectors $\hat{\vs}_i - \hat{\vs}_{i-1}$ and $\hat{\vs}_{i+1} - \hat{\vs}_i$ are both nonzero and aligned, where $\hat{\vs}_i := \frac{\hat{\vy}_{i+1}-\hat{\vy}_i}{x_{i+1}-x_i}$ and $\hat{\vy}_i := f_{\vtheta}(x_i)$. 
\end{remark}

Previous works \cite{shenouda2024variation} and \cite{lindsey2023implicit} showed that multi-task learning encourages \textit{neuron sharing}, where all task are encouraged to utilize the same set of neurons or representations. Our result above shows that univariate multi-task training is an extreme example of this phenomenon, since $f_{\cD}$ can be represented using only $N-1$ neurons, all of which contribute to all of the network outputs. Therefore, in the scenario we study here, neuron sharing almost always occurs even if the tasks are unrelated.

The full proof of \cref{th:univariate_ctd_main_theorem} appears in \cref{appendix:proof_univariate_ctd_main_theorem}. We outline the main ideas here. Our proof relies on the fact that any $\R \to \R^T$ ReLU neural network of the form \eqref{eq:nn} which solves \eqref{opt:pn} represents $T$ continuous piecewise linear (CPWL) functions, where the change in slope of the $t^\textrm{th}$ function at the $k^{\text{th}}$ knot is equivalent the $t^\textrm{th}$ entry of the $k^{\text{th}}$ output weight vector (see \cref{appendix:proof_univariate_ctd_main_theorem} for further detail). 
This fact allows us to draw a one-to-one correspondence between each knot in the function and each neuron in the neural network. The proof relies primarily on the following lemma:
 
\begin{lemma} \label{lemma:keylemma}
Let $f: \R \to \R^T$ be a function for which each output $f_t$ is CPWL and interpolates $\cD_t$. Suppose that at some $\tilde{x}$ between consecutive data points, one or more of the outputs $f_t$ has a knot. Let $\tilde{x}_1$  and $\tilde{x}_2$ be the $x$-coordinates of the closest knots before and after $\tilde{x}$, respectively. Denote the slopes of $f_t$ around this interval in terms of $a_t$, $b_t$, $c_t$, and $\delta_t$ as in \cref{fig:generic_example}, and let $\va, \vb, \vc, \vdelta \in \R^T$ be the vectors containing the respective values for each task.
 
Then removing the knot at $x$ and instead connecting $x_i$ and $x_{i+1}$ by a straight line does not increase $R(f)$. Furthermore, if $\va - \vb$ and $\vb - \vc$ are \emph{not} aligned, then doing so will strictly decrease $R(f)$.
\end{lemma}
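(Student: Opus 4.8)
The plan is to turn the representational cost into a sum of Euclidean norms of slope-change vectors, and then to recognize the claimed inequality as the fact that a polyline in $\R^T$ cannot be lengthened by taking a shortcut through one of its edges. First I would invoke the neuron--knot correspondence described above: for a CPWL interpolant $f$ represented with $\|\vw_k\|_2=1$, each neuron sits at a single knot and its output weight $\vv_k$ records the jump in slope of $f$ across that knot, so the cheapest representation uses one neuron per knot and $R(f)=\sum_{\text{knots}}\|\bm{\delta}\|$, the sum over knots of the norms of the slope-change vectors $\bm{\delta}$. This identity is exactly what lets the whole argument be made at the level of the function rather than over all parametrizations. With the labeling of \cref{fig:generic_example}, the three knots $\tilde{x}_1<\tilde{x}<\tilde{x}_2$ contribute precisely $\|\vb-\va\|+\|\vc-\vb\|+\|\vdelta-\vc\|$ to $R(f)$, and nothing outside $[\tilde{x}_1,\tilde{x}_2]$ is touched by a modification supported there.

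Next I would pin down the modification. Replacing $f$ on $[\tilde{x}_1,\tilde{x}_2]$ by the straight segment joining the unchanged endpoint values removes the knot at $\tilde{x}$ while preserving continuity; since in this configuration the only data points in the interval are its endpoints $x_i,x_{i+1}$ (they are consecutive), interpolation is preserved and the new function is still representable with at most $K$ neurons. The new slope $\vm$ is the secant slope, and because $f(\tilde{x}_2)-f(\tilde{x}_1)=\vb(\tilde{x}-\tilde{x}_1)+\vc(\tilde{x}_2-\tilde{x})$, it is the convex combination $\vm=\lambda\vb+(1-\lambda)\vc$ with $\lambda=(\tilde{x}-\tilde{x}_1)/(\tilde{x}_2-\tilde{x}_1)\in(0,1)$. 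The modified function then has knots only at $\tilde{x}_1$ and $\tilde{x}_2$, with slope changes $\vm-\va$ and $\vdelta-\vm$, so its contribution to $R$ is $\|\vm-\va\|+\|\vdelta-\vm\|$.

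The heart of the argument is the inequality
\[
\|\vm-\va\|+\|\vdelta-\vm\| \ \le \ \|\vb-\va\|+\|\vc-\vb\|+\|\vdelta-\vc\|.
\]
I would read $\va,\vb,\vc,\vdelta$ as points of $\R^T$: the right-hand side is the length of the polyline $\va\to\vb\to\vc\to\vdelta$, and the left-hand side is the length of the shortcut $\va\to\vm\to\vdelta$ through the point $\vm$, which lies on the segment $[\vb,\vc]$. The two triangle inequalities $\|\vm-\va\|\le\|\vb-\va\|+\|\vm-\vb\|$ and $\|\vdelta-\vm\|\le\|\vdelta-\vc\|+\|\vc-\vm\|$, together with the collinearity identity $\|\vm-\vb\|+\|\vc-\vm\|=\|\vc-\vb\|$ (valid since $\vm\in[\vb,\vc]$), give the bound, proving $R$ does not increase. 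For strictness, equality in the first triangle inequality forces $\va,\vb,\vm$ collinear with $\vb$ between $\va$ and $\vm$, i.e.\ $\vb-\va$ a nonnegative multiple of $\vm-\vb=(1-\lambda)(\vc-\vb)$; this is exactly the condition that $\vb-\va$ and $\vc-\vb$, equivalently $\va-\vb$ and $\vb-\vc$, be aligned in the sense that $(\va-\vb)^{\top}(\vb-\vc)=\|\va-\vb\|\,\|\vb-\vc\|$. Hence if $\va-\vb$ and $\vb-\vc$ are not aligned the first inequality is strict and $R(f)$ strictly decreases.

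The triangle inequalities and the convex-combination identity are routine; the parts needing care are the reduction $R(f)=\sum\|\bm{\delta}\|$, which licenses the function-to-function comparison and is the only place the network structure really enters, and the careful translation of the triangle-inequality equality case into the algebraic alignment condition, including the degenerate subcases where $\vb-\va$ or $\vc-\vb$ vanishes (which count as aligned and for which no strict decrease is claimed). I expect the bookkeeping around interpolation validity — ensuring the straight segment genuinely reconnects the adjacent data points with no interior data point to violate — to be the main point to state cleanly, while the inequality itself is short.
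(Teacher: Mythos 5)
Your reduction of $R(f)$ to a sum of Euclidean norms of slope-change vectors, followed by a triangle-inequality argument on those vectors, is essentially the paper's own proof: the paper writes the contribution of the three knots as $\|\vdelta+\vb-\va\|_2+\tfrac{1}{1-\tau}\|\vdelta\|_2+\|\vc-\vb+\tau\vdelta/(1-\tau)\|_2$ (with $\vb$ the secant slope on $[\tilde{x}_1,\tilde{x}_2]$ and $\vdelta$ the perturbation at $\tilde{x}$) and applies the reverse triangle inequality termwise, which is algebraically identical to your ``polyline shortcut'' inequality $\|\vm-\va\|+\|\vdelta-\vm\|\le\|\vb-\va\|+\|\vc-\vb\|+\|\vdelta-\vc\|$ once your four successive slopes are translated into the paper's parametrization. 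The non-increase claim is therefore correctly established, and your identification of the neuron--knot correspondence as the one place where network structure enters matches the paper.

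There is, however, a gap in the strictness part. You analyze the equality case of only the \emph{first} triangle inequality, and the alignment condition you extract is between $\va-\vb$ and $\vb-\vc$ \emph{in your own labeling}, where $\vb$ and $\vc$ are the slopes on the subintervals $(\tilde{x}_1,\tilde{x})$ and $(\tilde{x},\tilde{x}_2)$. In the lemma and in \cref{fig:generic_example}, $\vb$ denotes the secant slope of $f$ on $[\tilde{x}_1,\tilde{x}_2]$ (your $\vm$) and $\vc$ denotes the outgoing slope after $\tilde{x}_2$ (your $\vdelta$), so the condition you prove is not the condition the lemma asserts. Concretely, if $f$ has no knot at $\tilde{x}_1$, then your $\va=\vb$ and your first triangle inequality holds with equality for trivial reasons, so your argument yields no strict decrease, even though the lemma's hypothesis (your $\va-\vm$ and $\vm-\vdelta$ not aligned) may hold and the conclusion remains true via strictness of the \emph{second} triangle inequality. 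To close the gap, observe that overall equality forces equality in both triangle inequalities; when the knot at $\tilde{x}$ is nontrivial this forces all three slope-change vectors $\vb-\va$, $\vc-\vb$, $\vdelta-\vc$ to be nonnegative multiples of a common direction, which in turn forces $\vm-\va$ and $\vdelta-\vm$ to be aligned; the contrapositive is the lemma's claim. You should also carry through the degenerate subcases where some of these vectors vanish, which you correctly anticipate but do not execute.
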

\begin{figure}[h]
    \begin{center}
    \includegraphics[width=0.6\linewidth]{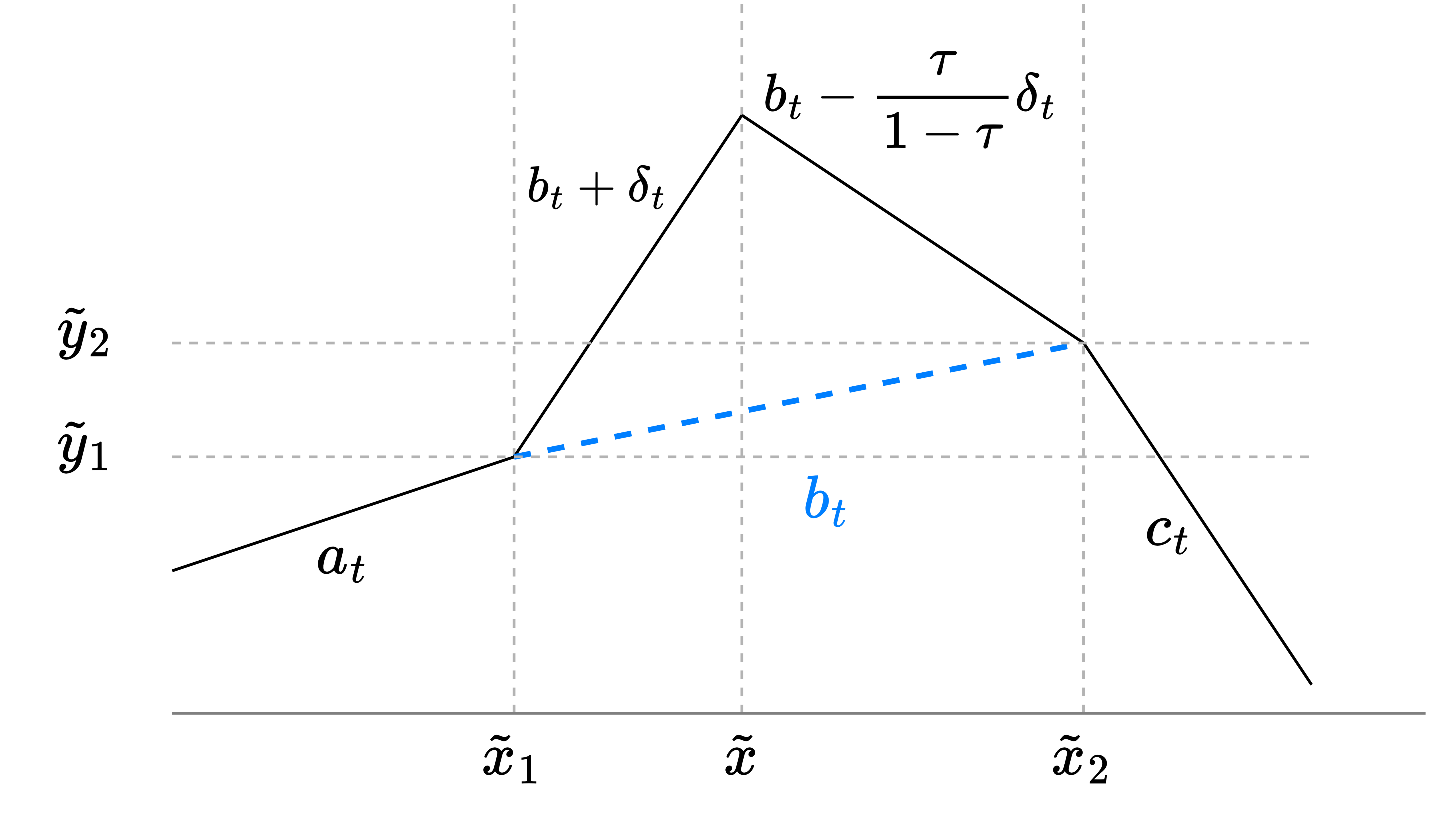}
    \caption{The function output $f_t$ around the knot at $\tilde{x}$, where $\tau = \frac{\tilde{x}-\tilde{x}_1}{\tilde{x}_2-\tilde{x}_1}$. Each line segment in the figure is labeled with its slope. For any particular output $t$, it may be the case that $f_t$ does not have a knot at $\tilde{x}$ (in which case $\delta_t = 0$); does not have a knot at $\tilde{x}_1$ (in which case $a_t = b_t + \delta_t$); and/or does not have a knot at $\tilde{x}_2$ (in which case $b_t - \frac{\tau}{1-\tau} \delta_t = c_t$).}
    \label{fig:generic_example}
    \end{center}
\end{figure}
\begin{proof}[Proof of \cref{lemma:keylemma}]
When represented by a ReLU neural network, each knot in the function corresponds to a neuron. The representational cost  \eqref{eq:rep_cost} is separable across knots/neurons. The contribution of these knots to $R(f)$ is:
\begin{eqnarray}
     & & \hspace{-.5in} \|\vdelta+\vb-\va\|_2 \ + \ \frac{1}{1-\tau} \|\vdelta\|_2 \ + \ \|\vc-\vb+\tau\vdelta/(1-\tau)\|_2 \nonumber \\
    &\geq &\|\vb-\va\|_2-\|\vdelta\|_2 \ + \ \frac{1}{1-\tau} \|\vdelta\|_2 \ + \ \|\vc-\vb\|_2 -\frac{\tau}{1-\tau}\|\vdelta\|_2 \label{eq:rte} \\
    &= &\|\vb-\va\|_2 \ + \ \|\vc-\vb\|_2 \  \nonumber
\end{eqnarray}
where the inequality follows from the triangle inequality.
This shows that taking $\delta_t = 0$ for all outputs, which corresponds to connecting $\tilde{x}_1$ and $\tilde{x}_2$ with a straight line in all outputs, will never increase the representational cost of $f$. The triangle inequality used in \eqref{eq:rte} holds with equality for some $\vdelta \neq \bm{0}$ if and only if $\va - \vb$, $\vb - \vc$, and $\vdelta$ are aligned with $\| \vdelta \|_2 \leq \min \{ \| \va - \vb \|_2, \frac{1-\tau}{\tau} \| \vb - \vc \|_2 \}$.
\end{proof}

\cref{lemma:keylemma} states that removing neurons which are located away from the data points and replacing them with a straight line will never increase the cost of the network, and it will strictly decrease the cost unless $\va-\vb$ and $\vb-\vc$ are aligned. This result implies that the connect-the-dots interpolant $f_{\cD}$ is always a solution to \eqref{opt:pn}, since we may take any solution $f$ of \eqref{opt:pn} and remove all knots from it (resulting in the function $f_{\cD}$) without increasing its representational cost. If $\vs_i - \vs_{i-1}$ and $\vs_{i+1}-\vs_i$ are aligned for some $i =2, \dots, N-2$, we can view any interpolant on the interval $[x_i, x_{i+1}]$ as an instance of \cref{fig:generic_example} with $\va = \vs_{i-1}$, $\vb = \vs_i$, and $\vc = \vs_{i+1}$. By \cref{lemma:keylemma}, any CPWL function with a knot at some point $\tilde{x} \in (x_i, x_{i+1})$ can have the same representational cost as the connect-the-dots solution on this interval, only if $\va - \vb$ and $\vb - \vc$ are aligned. 

We can also prove by contradiction that optimal solutions are unique on $[x_i, x_{i+1}]$ as long as $\vs_i - \vs_{i-1}$ and $\vs_{i+1}- \vs_i$ are \emph{not} aligned. Suppose that there is some other optimal interpolant $f$ which is \emph{not} the connect-the-dots solution $f_{\cD}$ on an interval $[x_i, x_{i+1}]$ for which $\vs_i - \vs_{i-1}$ and $\vs_{i+1}- \vs_i$ are not aligned. Then apply the lemma repeatedly to remove all knots from $f_\vtheta$ which are not located at the data points, except for a single remaining knot at some $\tilde{x}$ between consecutive data points. If this knot occurs after $x_2$ (the second data point) or before $x_{N-1}$ (second to last data point), the lemma implies automatically (again taking $\va = \vs_{i-1}$, $\vb = \vs_i$, and $\vs_{i+1}$) that removing this knot would strictly decrease the representational cost of the function, contradicting optimality of $f$. To conclude the proof, it remains only to show that any optimal interpolant of the dataset must agree with the connect-the-dots interpolant $f_{\cD}$ before $x_2$ and after $x_{N-1}$; the details of this argument appear in \cref{appendix:proof_univariate_ctd_main_theorem}.  As our theorem and corollary quantify, real-world regression datasets (which are typically real-valued and often assumed to incorporate some random noise from an absolutely continuous distribution, e.g. Gaussian) are extremely unlikely to satisfy this special alignment condition; hence, our claim that connect-the-dots interpolation is almost always the unique solution to \eqref{opt:pn}.

\subsection{Numerical Illustration of \cref{th:univariate_ctd_main_theorem}}
We provide numerical examples to illustrate the difference in solutions obtained from single task versus multi-task training and validate our theorem. The first row in \cref{fig:uni_experiments} shows three randomly initialized univariate neural networks trained to interpolate the five red points with minimum sum of squared weights. 
While all three of the learned functions have the same representational cost (i.e., all minimize the second-order total variation subject to the interpolation constraint), they each learn different interpolants. This demonstrates that gradient descent does not induce a bias towards a particular solution. 
The second row shows the function learned for the first output of a multi-task neural network. This network was trained on two tasks. The first task consists of interpolating the five red points while the second consists of interpolating five randomly generated labels sampled from a standard Gaussian distribution. When trained to interpolate with minimum sum of squared weights we see that the connect-the-dots solution is the only solution learned regardless of initialization, verifying \cref{th:univariate_ctd_main_theorem}.  This solution simultaneously minimizes the second-order total variation and the norm in the first-order Sobolev RKHS $H^1([x_1, x_N])$ associated with the kernel $k(x, x') = 1-(x-x')_+ + (x-x_1)_+ + (x_1 - x')_+$ \cite{de1966splines}, subject to the interpolation constraints.\footnote{The code for all numerical experiments can be found by clicking on the Jupyter logo \jupyter{https://github.com/joeshenouda/effects-mtl-nns} in the caption, which links to the notebook that reproduces that figure.}

\begin{figure}[!h]
\begin{subfigure}{0.3\linewidth}
    \includegraphics[width=\linewidth]{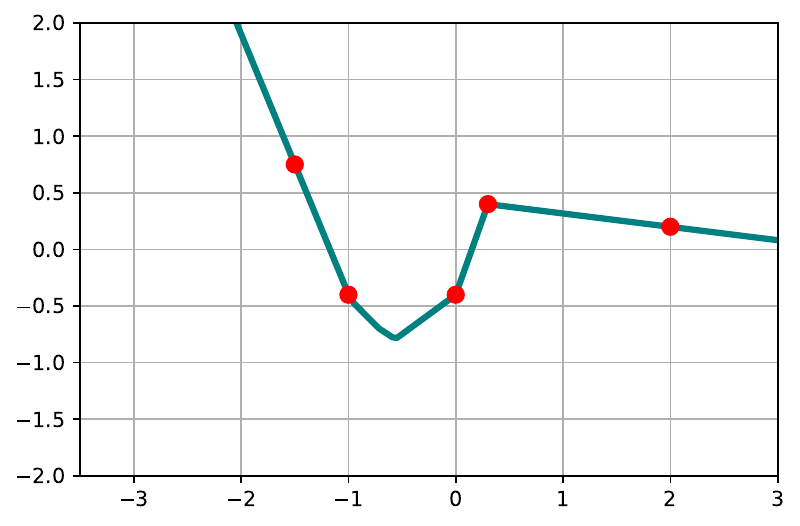}
\end{subfigure}
\hfill
\begin{subfigure}{0.3\linewidth}
    \includegraphics[width=\linewidth]{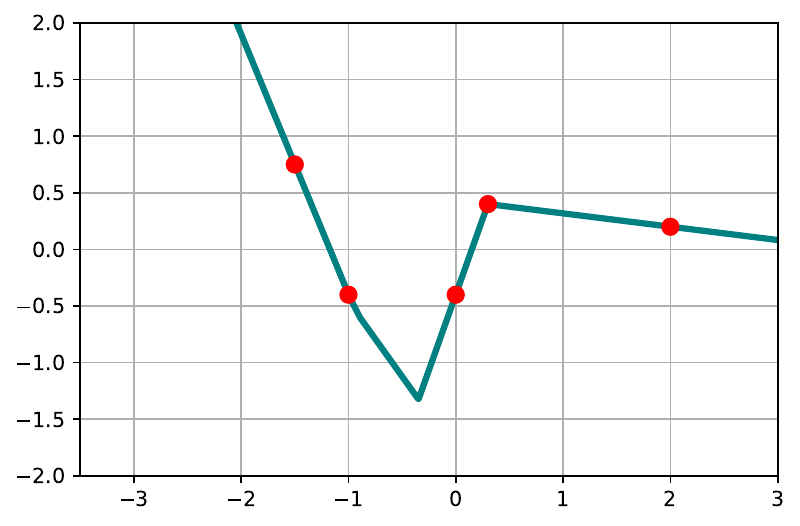}
\end{subfigure}
\hfill
\begin{subfigure}{0.3\linewidth}
    \includegraphics[width=\linewidth]{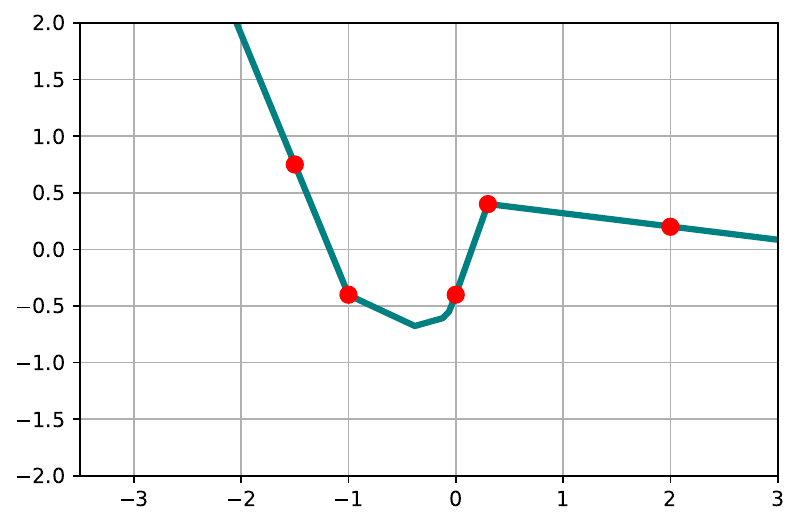}
\end{subfigure}

\bigskip

\begin{subfigure}{0.3\linewidth}
    \includegraphics[width=\linewidth]{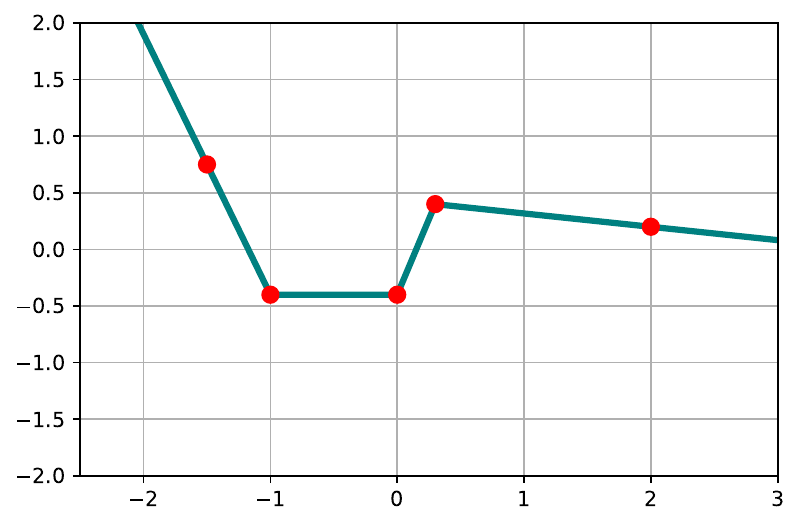}
\end{subfigure}
\hfill
\begin{subfigure}{0.3\linewidth}
    \includegraphics[width=\linewidth]{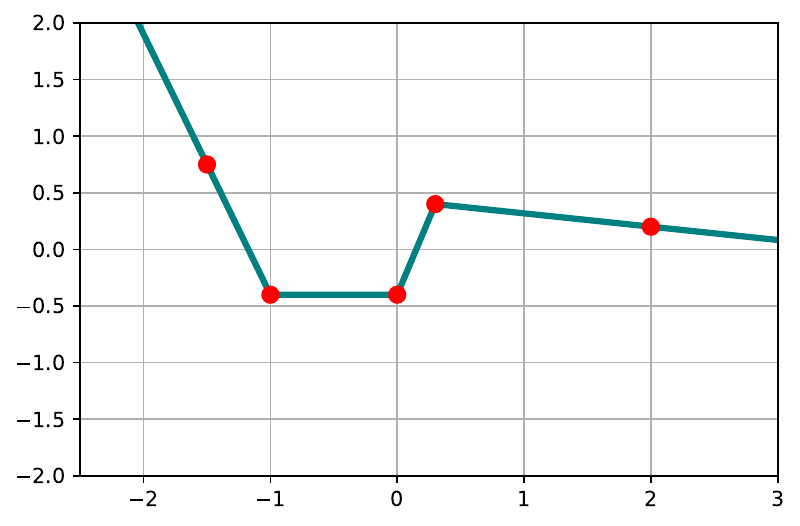}
\end{subfigure}
\hfill
\begin{subfigure}{0.3\linewidth}
    \includegraphics[width=\linewidth]{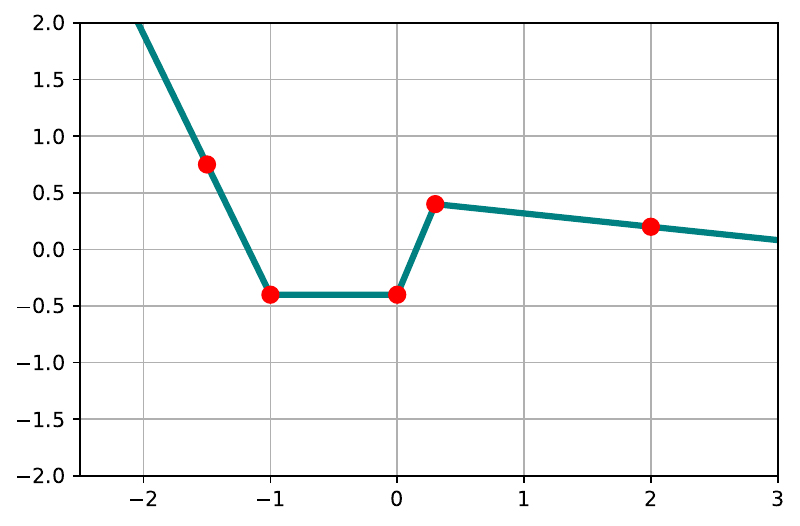}
\end{subfigure}
\caption{\textit{Top Row}: Three randomly initialized neural networks trained to interpolate the five red points with minimum sum of squared weights. \textit{Bottom Row}: Three randomly initialized two-output neural networks trained to interpolate a multi-task dataset with minimum sum of squared weights. The labels for the first task are the five red points shown while the labels for the second were randomly sampled from a standard Gaussian distribution. \jupyter{https://github.com/joeshenouda/effects-mtl-nns}}
\label{fig:uni_experiments}
\end{figure}
\newpage

\section{Multivariate Multi-Task Neural Network Training} \label{sec:multivariate} 
In \cref{sec:univariate}, we proved that the univariate-input functions learned by neural networks trained on multiple tasks simultaeously can be profoundly different from the functions learned by networks trained for each task separately.  In this section, we demonstrate that a similar phenomenon occurs in multivariate settings. Here we analyze neural networks of the form
\begin{equation} \label{eq:nn_no_skip}
    f_{\vtheta}(\vx) = \sum_{k=1}^K \vv_k \big( \vw_k^\top \vx + b_k \big)_+
\end{equation}
where $\vw_k \in \bS^{d-1}$, $b_k \in \R$, $\vv_k \in \R^T$, and $\vtheta := \{ \vv_k, \vw_k, b_k \}_{k=1}^K$. Since the analysis in this section is not dependent on the residual connection, we omit it for ease of exposition. We consider the multivariate-input, $T$-task neural network training problem 
\begin{equation}\label{opt:multivariate_problem_T_tasks}
     \min_{\btheta}  \sum_{i=1}^N \cL \left( \vy_{i}, f_{\vtheta}(\vx_i) \right) + \lambda \sum_{k=1}^K \| \vv_k \|_2 
\end{equation} 
for some dataset $(\vx_1, \vy_{1}), \dots, (\vx_N, \vy_{N}) \in \R^d \times \R^T$, where $\cL$ is any loss function which is lower semicontinuous in its second argument and separable across the $T$ tasks\footnote{In other words, $\cL$ is expressible as $\cL(\vu, \vv) = \sum_{t=1}^T \tilde{\cL}(u_t, v_t)$ for any $\vu, \vv \in \mathbb{R}^T$, for some univariate loss function $\tilde{\cL}$. For notational convenience, we denote both the multivariate and univariate loss functions as $\cL$.}, and $K \geq N^2$ (see footnote \footref{fn:K_N_squared}).

We are interested in analyzing the behavior of solutions to \eqref{opt:multivariate_problem_T_tasks} as the number of tasks $T$ grows. Intuitively, if $T$ is very large, it
it is reasonable to expect that the optimal output weight $v_{ks}^*$ for an individual neuron $k$ and task $s$ would be relatively small compared to the sum of the output weights $v_{kt}^*$ for tasks $t \neq s$. In this case, the $k^\textrm{th}$ term of the regularizer in \eqref{opt:multivariate_problem_T_tasks} would be approximately equal to
\begin{align} \label{eq:taylor_approx_intuition}
    \| \vv_k^* \|_2 \ = \ \sqrt{(v_{ks}^*)^2 + \| \vv_{k \setminus s}^* \|_2^2 } \ \approx \ \| \vv_{k \setminus s}^* \|_2 + \frac{(v_{ks}^*)^2}{2 \| \vv_{k \setminus s}^* \|_2} 
\end{align}
for any individual task $s$, where $\| \vv_{k \setminus s}^* \|_2^2 := \sum_{t \neq s} (v_{ks}^*)^2$. The approximation above comes from the Taylor expansion $f(x) = \sqrt{x^2 + c^2} = c + \frac{x^2}{2c} - \frac{x^4}{8 c^3} + \frac{x^6}{16c^5} - \dots$, whose higher order terms quickly become negligible if $0 < x \ll c$. Notice that the right hand side of \eqref{eq:taylor_approx_intuition} is a quadratic function of $v_{ks}^*$, which suggests that the regularization term of \eqref{opt:multivariate_problem_T_tasks} resembles a weighted $\ell^2$ regularizer when $v_{ks}$ is close to its optimal value $v_{ks}^*$. 

The above reasoning can be made precise using the concept of \textit{exchangeability}. A sequence of random variables is exchangeable if their joint distribution remains unchanged for any finite permutation of the sequence. Intuitively, exchangeability captures the notion that the order of the random variables does not matter. This is naturally consistent with multi-task learning problems, in which the order in which the tasks are enumerated and assigned to the outputs of the network is irrelevant.  Therefore, the $T$ tasks can be assigned uniformly at random to the $T$ outputs of the network. This random assignment process induces a distribution on the training data (labels) for each output, denoted by $\vy_{\cdot,1}, \dots, \vy_{\cdot,T} \in \R^N$. These vectors are comprised of the $N$ labels for each task corresponding to the inputs $\vx_1,\dots,\vx_N$.
In particular, $\vy_{\cdot,1}, \dots, \vy_{\cdot, T}$ are exchangeable. In the remainder of the discussion, we assume that the vectors $\vy_{\cdot,1}, \dots, \vy_{\cdot,T}$ have been generated according to this procedure. 

Using the concept of task exchangeability, we consider the problem of minimizing
\begin{align} \label{eq:J}
    J(v_{1s}, \dots, v_{Ks}) :=  \sum_{i=1}^N \cL \left(   y_{is}, \sum_{k=1}^K v_{ks} \mPhi_{ik}   \right) + \lambda \sum_{k=1}^K \left\| \begin{bmatrix}
        v_{ks} \\ \vv_{k \setminus s}^*
    \end{bmatrix} \right\|_2
\end{align}
over $\bR^K$, where $s$ is an individual task. $J$ is simply the objective function of \eqref{opt:multivariate_problem_T_tasks} with all parameters except for $v_{1s}, \dots, v_{Ks}$ held fixed at their optimal values, and with the (constant) data fitting terms from the other tasks except $s$ removed. Loosely speaking, we can view $J$ as the objective function of \eqref{opt:multivariate_problem_T_tasks} from the perspective of a single task $s$. Note that the optimal values $v_{1s}^*, \dots, v_{Ks}^*$ for \eqref{opt:multivariate_problem_T_tasks} also minimize $J$; otherwise they would not be optimal for \eqref{opt:multivariate_problem_T_tasks}. The following theorem shows that, when the task labels are exchangeable, the regularization term in $J$ is approximately quadratic near its minimizer and that this approximation get stronger as the number of tasks $T$ increases.
\begin{theorem} \label{th:main_multivariate}
    Let $\mathcal{S}$ be the set of optimal active neurons\footnote{A neuron $\eta(\vx) = \vv (\vw^{T} \vx + b)_+$  is \emph{active} if $\|\vv\|_2 > 0 $.} solving \eqref{opt:multivariate_problem_T_tasks}, that is, the neurons for which $\|\vv^{*}_k\|_2>0$. For an individual task $s$, consider the objective
    \begin{align} \label{eq:H}
        H(v_{1s}, \dots, v_{Ks}) :=  \sum_{i=1}^N  \cL \left(   y_{is}, \sum_{k \in \mathcal{S}} v_{ks} \mPhi_{ik}   \right) + \lambda \sum_{k \in \mathcal{S}} \left( \| \bm{v}_{k \setminus s}^* \|_2 + \frac{v_{ks}^2}{2 \| \bm{v}_{k \setminus s}^* \|_2} \right)
    \end{align}
    where $\vv_{k \setminus s}^*$ denotes the vector $\vv_k^*$ with its $s^{\text{th}}$ element $v_{ks}^*$ excluded, and $\mPhi \in \R^{N \times K}$  is a matrix whose $i,k^{\text{th}}$ entry is $\mPhi_{ik} = \big( \vx_i^\top \vw_k^*  + b_k^* \big)_+$. Then with probability at least $1-O(T^{-1/3})$, the regularization term in $H$ is well-defined, and
    \begin{align} \label{eq:J_H_bound}
        |J(v_{1s}, \dots, v_{Ks}) - H(v_{1s}, \dots, v_{Ks})| \ = \   O(T^{-1/4})
    \end{align}
    whenever $|v_{ks}| \leq T^{1/16} |v_{ks}^*|$ for all $k = 1, \dots, K$. As a consequence, the global minimizer $v_{1s}', \dots, v_{Ks}'$ of $H$ satisfies
    \begin{align} \label{eq:J_v_prime_v_star}
        J(v_{1s}', \dots, v_{Ks}') - J(v_{1s}^*, \dots, v_{Ks}^*) \ = \  O(T^{-1/4})
    \end{align}
    with probability at least $ 1-O \left( T^{-1/3} \right)$.
\end{theorem}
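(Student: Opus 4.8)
The plan is to split \cref{th:main_multivariate} into a deterministic approximation estimate and two high-probability bounds coming from task exchangeability, and then to obtain \eqref{eq:J_v_prime_v_star} from a short sandwich that never requires localizing the minimizer of $H$. First I would record two deterministic facts about the per-neuron regularizers. Write $c_k := \|\vv_{k\setminus s}^*\|_2$, so that the $k$-th term of the regularizer in \eqref{eq:J} is $\sqrt{v_{ks}^2+c_k^2}$ while the one in \eqref{eq:H} is $c_k+\tfrac{v_{ks}^2}{2c_k}$. Since $\sqrt{1+u}\le 1+u/2$, we have $\sqrt{v_{ks}^2+c_k^2}\le c_k+\tfrac{v_{ks}^2}{2c_k}$, and because the data-fitting terms of $J$ and $H$ agree once the inactive coordinates are set to zero, this yields the pointwise inequality $J\le H$ everywhere, requiring only $c_k>0$. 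In the other direction, Taylor's theorem for the even function $g(x)=\sqrt{x^2+c_k^2}$ gives $0\le c_k+\tfrac{x^2}{2c_k}-g(x)\le \tfrac{x^4}{8c_k^3}$ for $|x|\le c_k$. Hence, on the constraint region $|v_{ks}|\le T^{1/16}|v_{ks}^*|$,
\[
 0\ \le\ H-J\ \le\ \frac{\lambda}{8}\sum_{k\in\mathcal{S}}\frac{\big(T^{1/16}|v_{ks}^*|\big)^4}{c_k^3}\ =\ \frac{\lambda\,T^{1/4}}{8}\sum_{k\in\mathcal{S}}\frac{(v_{ks}^*)^4}{c_k^3},
\]
so the whole problem reduces to showing $\sum_{k\in\mathcal{S}}(v_{ks}^*)^4/c_k^3 = O(T^{-1/2})$ with high probability, which simultaneously forces each $c_k>0$, i.e.\ well-definedness of $H$.

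Next I would supply the two probabilistic ingredients. Setting $r_k:=(v_{ks}^*)^2/\|\vv_k^*\|_2^2$, exchangeability of the task labels (hence of the columns of the optimal weight matrix) gives $\E[r_k]=\tfrac1T$ for each neuron, so Markov's inequality yields $\Pr(r_k>T^{-2/3})\le T^{-1/3}$; a union bound over the (at most $K$, a fixed constant) neurons shows that, with probability $1-O(T^{-1/3})$, every active neuron satisfies $r_k\le T^{-2/3}$, whence $c_k^2\ge\tfrac12\|\vv_k^*\|_2^2>0$ and $(v_{ks}^*)^4/c_k^3\le 2^{3/2}\,T^{-4/3}\|\vv_k^*\|_2$. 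Second, I would bound $M_T:=\sum_k\|\vv_k^*\|_2$ by comparing the optimal objective of \eqref{opt:multivariate_problem_T_tasks} to an interpolating reference network with a fixed set of neurons and output weights $\mPhi^{-1}Y$; using that the exchangeable labels are identically distributed with finite second moment gives $\E[M_T]=O(\sqrt T)$, and Markov at level $T^{5/6}$ gives $M_T=O(T^{5/6})$ with probability $1-O(T^{-1/3})$.

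On the intersection of these two events (probability $1-O(T^{-1/3})$) we obtain $\sum_{k\in\mathcal{S}}(v_{ks}^*)^4/c_k^3\le 2^{3/2}T^{-4/3}M_T=O(T^{-4/3+5/6})=O(T^{-1/2})$, so the displayed bound gives $H-J=O(T^{1/4-1/2})=O(T^{-1/4})$, which is exactly \eqref{eq:J_H_bound}; the thresholds $T^{-2/3}$ and $T^{5/6}$ and the exponent $1/16$ are chosen precisely so that both failure probabilities equal $T^{-1/3}$ and the approximation error equals $T^{-1/4}$. Finally, \eqref{eq:J_v_prime_v_star} follows from the chain
\[
 J(v^*)\ \le\ J(v')\ \le\ H(v')\ \le\ H(v^*)\ \le\ J(v^*)+O(T^{-1/4}),
\]
where the first inequality is optimality of $v^*$ for $J$, the second is the pointwise bound $J\le H$, the third is optimality of $v'$ for $H$, and the last is \eqref{eq:J_H_bound} evaluated at $v^*$ (which trivially lies in the constraint region). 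Crucially, the pointwise inequality $J\le H$ means I never have to argue that the minimizer $v'$ of $H$ stays in the constraint region.

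I expect the main obstacle to be the probabilistic half rather than the approximation analysis. Establishing $\E[M_T]=O(\sqrt T)$ rigorously requires producing a concrete interpolating reference and a uniform second-moment control on the exchangeable labels, and the union bounds only remain $O(T^{-1/3})$ because the number of active neurons is bounded independently of $T$ (a representer-theorem fact consistent with the hypothesis $K\ge N^2$). Balancing these estimates so as to hit the stated rates $T^{-1/4}$ and $T^{-1/3}$ simultaneously is the delicate bookkeeping step, but no single estimate is deep once the reduction to $\sum_{k\in\mathcal{S}}(v_{ks}^*)^4/c_k^3$ is in place.
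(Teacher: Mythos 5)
Your proposal is correct and follows essentially the same route as the paper's proof: the same exchangeability-plus-Markov control of $(v_{ks}^*)^2/\|\vv_k^*\|_2^2$ with the thresholds $T^{-2/3}$ and failure probability $O(T^{-1/3})$, the same Taylor approximation of the per-neuron regularizer $\sqrt{v_{ks}^2+\|\vv_{k\setminus s}^*\|_2^2}$, and the identical sandwich $J(\vv^*)\le J(\vv')\le H(\vv')\le H(\vv^*)\le J(\vv^*)+O(T^{-1/4})$ for the second claim. The only differences are in bookkeeping: you use the global fourth-order bound $0\le c_k+\tfrac{x^2}{2c_k}-\sqrt{x^2+c_k^2}\le \tfrac{x^4}{8c_k^3}$ together with an aggregate bound on $\sum_k\|\vv_k^*\|_2$, whereas the paper uses a third-order Lagrange remainder combined with the per-coordinate bound $|v_{ks}^*|\le\sqrt{CT}$ (derived, like your reference-network bound, from an auxiliary boundedness assumption on the labels), and both arrive at the same rates.
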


The proof of \cref{th:main_multivariate} is in \cref{appendix:proof_multivariate_main_theorem}. The theorem states that the solution to the followed weighted $\ell^2$ regularized problem
\begin{align} \label{opt:weighted_l2}
    \min_{v_{1s}, \dots, v_{Ks}} \sum_{i=1}^N  \cL \left(   y_{is}, \sum_{k \in \mathcal{S}} v_{ks} \mPhi_{ik}   \right) + \frac{\lambda}{2} \sum_{k \in \mathcal{S}} \gamma_{ks} v_{ks}^2 
\end{align}
where $\gamma_{ks} := 1/\|\vv^{*}_{k/s}\|_2$ is approximately optimal for the original objective \eqref{eq:J}, with stronger approximation as $T$ increases. In contrast, when $T = 1$, the optimization
\begin{align} \label{opt:l1}
     \min_{v_{1}, \dots, v_{K}} \sum_{i=1}^N \cL \left(   y_{i}, \sum_{k=1}^K v_{k} \mathbf{\Psi}_{ik}   \right) + \lambda \sum_{k=1}^K |v_{k}|
\end{align}
yields output weights which are exactly optimal for \eqref{opt:multivariate_problem_T_tasks}. Note that the matrices $\mPhi$ in \eqref{opt:weighted_l2} and $\mathbf{\Psi}$ in \eqref{opt:l1} are not the same, since they are determined by the optimal input weights and biases for \eqref{opt:multivariate_problem_T_tasks}, which are themselves data- and task-dependent. Nonetheless, comparing \eqref{opt:weighted_l2} and \eqref{opt:l1} highlights the different nature of solutions learned for \eqref{opt:multivariate_problem_T_tasks} in the single-task versus multi-task case. The multi-task learning problem with exchangeable tasks favors linear combinations of the optimal neurons which have a minimal weighted $\ell^2$ regularization penalty. In contrast, the single-task learning problem favors linear combinations of optimal neurons which have a minimal $\ell^1$ penalty. Therefore, multi-task learning with a large number of tasks promotes a fundamentally different linear combination of the optimal features learned in the hidden layer. 

Additionally, exchangeability of the tasks allows for the following characterization of the coefficients $\gamma_{ks}$ in \eqref{opt:weighted_l2} (cf \cref{lemma:subvectors} in \cref{appendix:proof_multivariate_main_theorem}):
\begin{lemma} \label{lemma:subvectors_gamma}
For any $s = 1, \dots, T$, any $k = 1, \dots, K$, and any $0 < \beta < 1$:
\begin{align}
    \left(1-T^{\beta-1} \right) \| \vv_k^* \|_2^2 \leq \| \vv_{k \setminus s}^* \|_2^2 \leq \| \vv_k^* \|_2^2
\end{align}
with probability at least $1-T^{-\beta}$.
\end{lemma}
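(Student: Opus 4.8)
The plan is to reduce the two-sided bound to a single tail estimate, then combine task exchangeability with Markov's inequality. First I would observe that, since $\vv_{k\setminus s}^*$ is obtained from $\vv_k^*$ by deleting its $s$-th coordinate, we have the identity $\|\vv_{k\setminus s}^*\|_2^2 = \|\vv_k^*\|_2^2 - (v_{ks}^*)^2$. The upper bound $\|\vv_{k\setminus s}^*\|_2^2 \le \|\vv_k^*\|_2^2$ is then immediate, since we subtract a nonnegative quantity, and it holds deterministically. After rearranging, the lower bound is equivalent to the event $(v_{ks}^*)^2 \le T^{\beta-1}\,\|\vv_k^*\|_2^2$, so it suffices to show this holds with probability at least $1 - T^{-\beta}$. (When $\|\vv_k^*\|_2 = 0$ both inequalities reduce to $0 \le 0$, so I may assume $\|\vv_k^*\|_2 > 0$ and work with the normalized ratio.)

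The key step is to argue that the coordinates $v_{k1}^*, \dots, v_{kT}^*$ of the optimal output weight vector for a fixed neuron $k$ are exchangeable random variables. This is inherited from the exchangeability of the task-label vectors $\vy_{\cdot,1}, \dots, \vy_{\cdot,T}$: a permutation of the tasks acts on the dataset purely by permuting the label coordinates, and since the input weights $\vw_k$ and biases $b_k$ are shared across all tasks, applying that permutation to an optimal parameter set $\btheta^*$ yields an optimal parameter set for the permuted dataset in which only the coordinates of each $\vv_k^*$ are correspondingly permuted. Because the permuted dataset has the same distribution as the original, the law of $(v_{k1}^*, \dots, v_{kT}^*)$ is invariant under coordinate permutations, i.e.\ these coordinates are exchangeable. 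Consequently the normalized quantities $Z_{kt} := (v_{kt}^*)^2 / \|\vv_k^*\|_2^2$ are nonnegative, exchangeable, and satisfy $\sum_{t=1}^T Z_{kt} = 1$, whence $\E[Z_{ks}] = 1/T$ for every $s$.

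I would then finish with Markov's inequality applied to $Z_{ks}$:
\[
\bP\!\left[ Z_{ks} \ge T^{\beta-1} \right] \ \le \ \frac{\E[Z_{ks}]}{T^{\beta-1}} \ = \ \frac{1/T}{T^{\beta-1}} \ = \ T^{-\beta},
\]
so $(v_{ks}^*)^2 \le T^{\beta-1}\|\vv_k^*\|_2^2$ holds with probability at least $1 - T^{-\beta}$, which yields the lower bound and completes the proof.

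The main obstacle I anticipate is making the exchangeability argument fully rigorous in the presence of possible non-uniqueness of the minimizer of \eqref{opt:multivariate_problem_T_tasks}, which is a non-convex problem. To conclude that $(v_{k1}^*, \dots, v_{kT}^*)$ is genuinely exchangeable I need a measurable selection rule for the optimal $\btheta^*$ that commutes with task permutations (a permutation-equivariant choice of global minimizer); alternatively, one argues that the set of label configurations admitting multiple minimizers is negligible, or restricts attention to active neurons $k \in \mathcal{S}$ so that $Z_{ks}$ is well defined. Handling this selection carefully---rather than the probabilistic estimate itself, which is a one-line Markov bound---is where the real work lies.
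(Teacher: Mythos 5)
Your proposal is correct and follows essentially the same route as the paper: both arguments reduce the claim to showing that the normalized share $(v_{ks}^*)^2/\|\vv_k^*\|_2^2$ has expectation $1/T$ by exchangeability of the optimal output weights (the paper's \cref{prop:exchangeable} and \cref{lemma:exp_one}, where the permutation-invariant selection rule you flag is exactly the assumption the paper makes), and then apply Markov's inequality with the threshold $T^{\beta-1}$. The only cosmetic difference is that you obtain the expectation directly from identical marginals summing to one, whereas the paper conditions on $\frac{1}{T}\sum_t (v_{kt}^*)^2$ and uses the tower property.
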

The proof is in \cref{appendix:proof_multivariate_main_theorem}. This result suggests that the weights in the weighted $\ell^2$ regularizer for each task converge to a common value across all the tasks with high probability as $T$ grows larger. In particular, for large $T$:
\begin{align}
    \gamma_{ks} \approx \gamma_{k} = \frac{1}{\|\vv^{*}_{k}\|_2}
\end{align}
with high probability. 
This reveals a novel connection between the problem of minimizing \eqref{eq:J} and a norm-regularized data fitting problem in an RKHS. Specifically, consider the finite-dimensional linear space
\begin{align}
    \cH := \left\{ f_\vv = \sum_{k=1}^K v_k \phi_k: \vv \in \R^K \right\}
\end{align}
where $\phi_k(\vx) = \left(\vw_k^* \cdot \vx + b_k^* \right)_+$, equipped with the inner product
\begin{align}
    \langle f_\vv, f_\vu \rangle_\cH = \vv^\top \mQ \vu
\end{align}
where $\mQ = \textrm{diag} \left( \frac{\gamma_1}{2}, \dots, \frac{\gamma_K}{2}  \right)$. As a finite-dimensional inner product space, $\cH$ is necessarily a Hilbert space; furthermore, finite-dimensionality of $\cH$ implies that all linear functionals (including the point evaluation functional) on $\cH$ are continuous. Therefore, $\cH$ is an RKHS, with reproducing kernel
\begin{align}
    \kappa (\vx, \vx') = \sum_{k=1}^K \phi_k (\vx) Q_{kk}^{-1} \, \phi_k (\vx').
\end{align}
Note that $\kappa$ indeed satisfies the reproducing property, that is, $\langle \kappa(\cdot, \vx), f\rangle_{\mathcal{H}} = f(\vx)$ for any $f \in \mathcal{H}$ and any $\vx$. To see this, write
\begin{align}
    \langle \kappa(\cdot, \vx), f\rangle_{\mathcal{H}} =  \left\langle \sum_{k=1}^{K} Q^{-1}_{kk} \phi_k(\vx) \phi_k, \sum_{k=1}^{K} v_k \phi_k \right\rangle_{\mathcal{H}}.
\end{align}
We can view the term on the left as a function $g_{\vu} \in \mathcal{H}$ where $u_k = Q_{kk}^{-1} \phi_k(\vx)$ or $\vu = \phi(\vx) \mQ^{-1}$, so this is equivalent to
\begin{align}
    \langle \kappa(\cdot, \vx), f\rangle_{\mathcal{H}} =  \left\langle g_{\vu}, f \right\rangle_{\mathcal{H}} = \phi(\vx) \mQ^{-1} \mQ \vv = \sum_{k=1}^{K} v_k \phi(\vx) = f(\vx).
\end{align}
Finding a minimizer of $H$ over $\R^K$ is thus equivalent to solving
\begin{align} \label{opt:RKHS_problem}
    \argmin_{f \in \cH} \sum_{i=1}^N \cL (y_{is}, f(\vx_i)) + \lambda \| f \|_\cH^2.
\end{align}
We provide empirical evidence for the claims presented in this section in \Cref{fig:bi} on a simple multi-variate dataset. First, we demonstrate the variety of solutions that interpolate this dataset in a single task setting. In contrast, we show that the solutions obtained via multi-task learning with additional random tasks are very similar and often much smoother than those obtained by single-task learning supporting our claim that these solutions are well approximated by solving a kernel ridge regression problem. We also verify that the optimization \eqref{eq:H} is a good approximation for \eqref{eq:J}.  We include additional experiments in \cref{appendix:additional_experiments} that demonstrate that these observations hold across multiple trials.
\begin{figure} 
    \centering
    \begin{subfigure}{0.3\textwidth}
        \includegraphics[width=\textwidth]{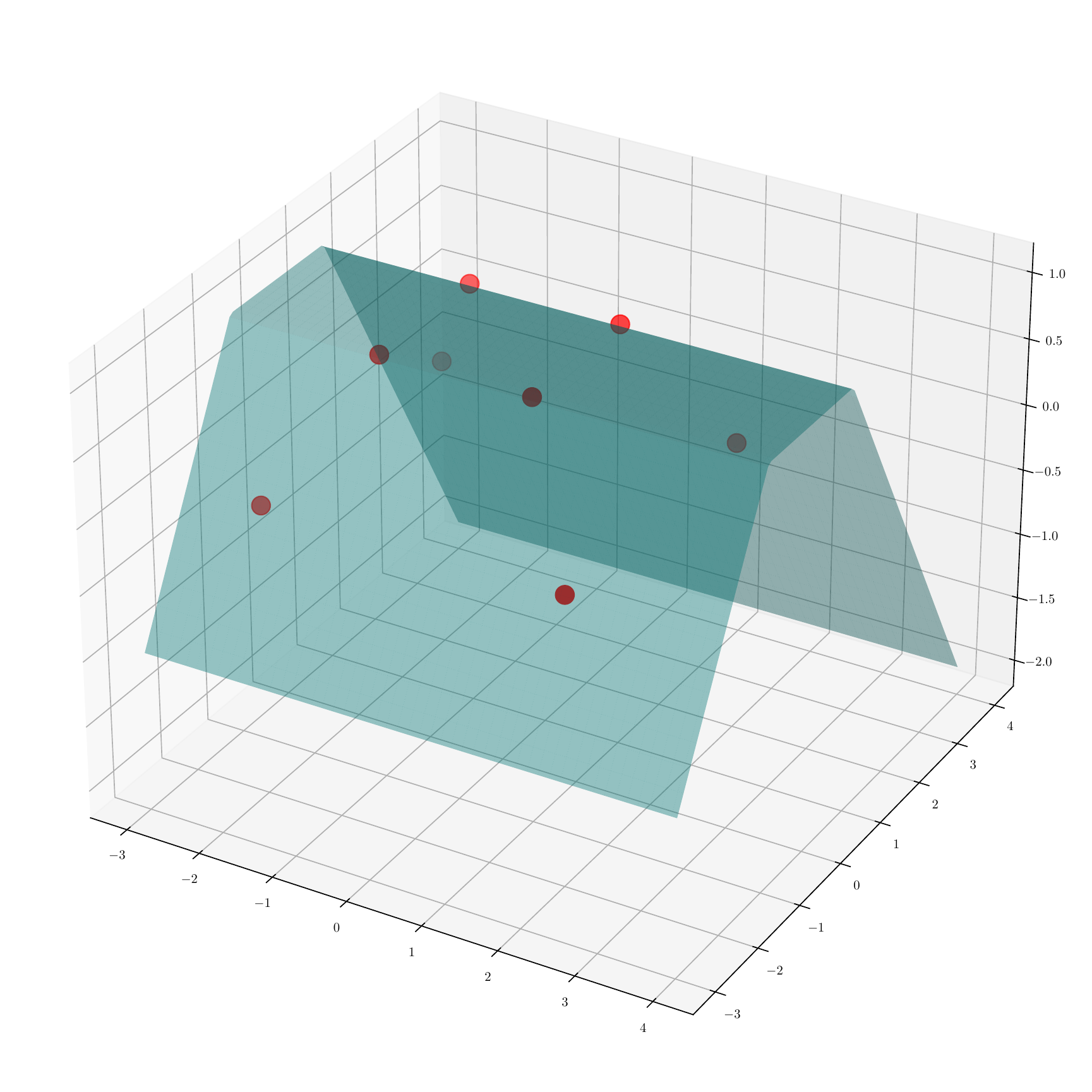}
        \caption{Solution to \eqref{opt:multivariate_problem_T_tasks} with $T=1$ (single-task training)}
        \label{fig:sing_sol_1}
    \end{subfigure}
    \hfill
    \begin{subfigure}{0.3\textwidth}
        \includegraphics[width=\textwidth]{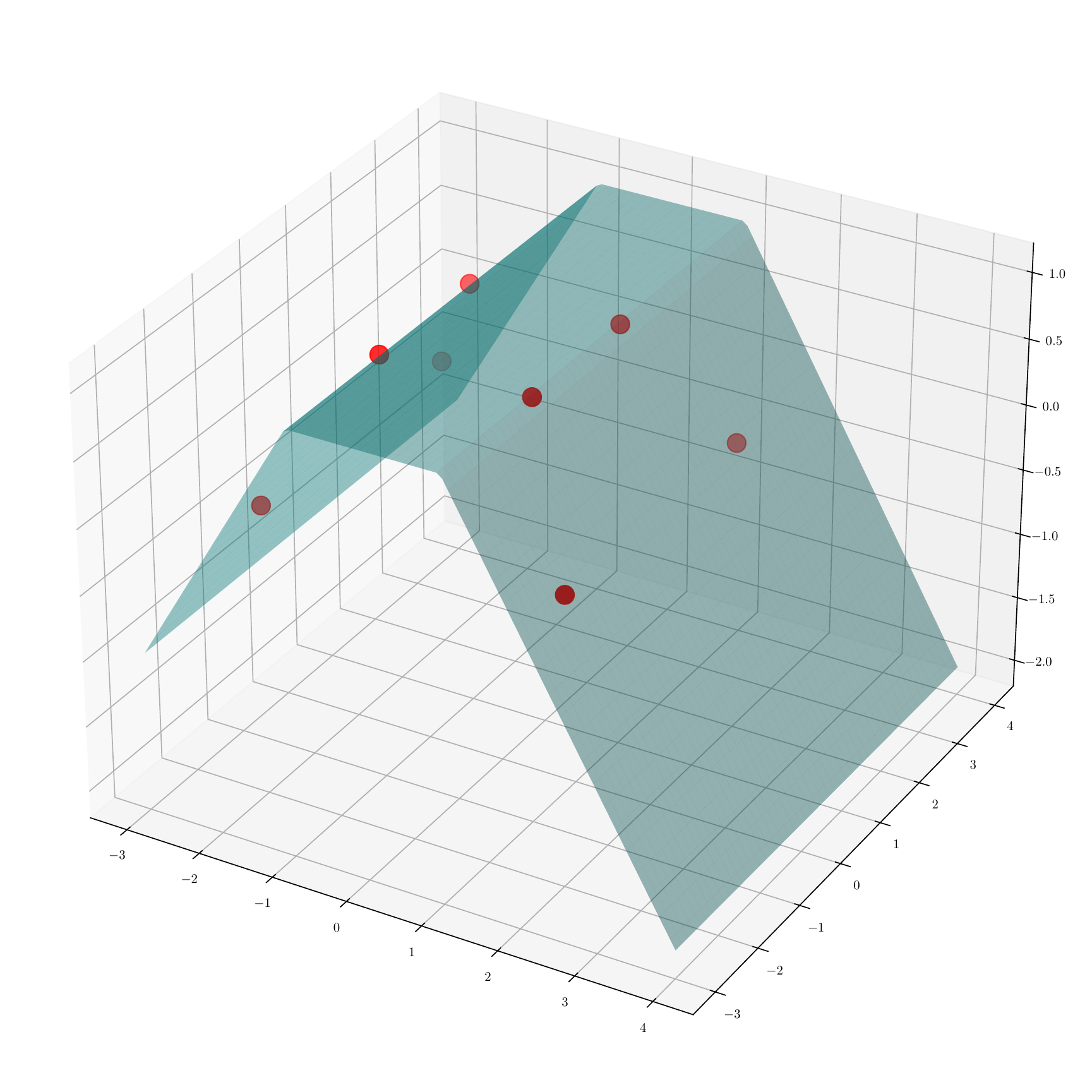}
        \caption{Solution to \eqref{opt:multivariate_problem_T_tasks} with $T=1$ (single-task training)}
        \label{fig:sing_sol_2}
    \end{subfigure}
    \hfill
    \begin{subfigure}{0.3\textwidth}
        \includegraphics[width=\textwidth]{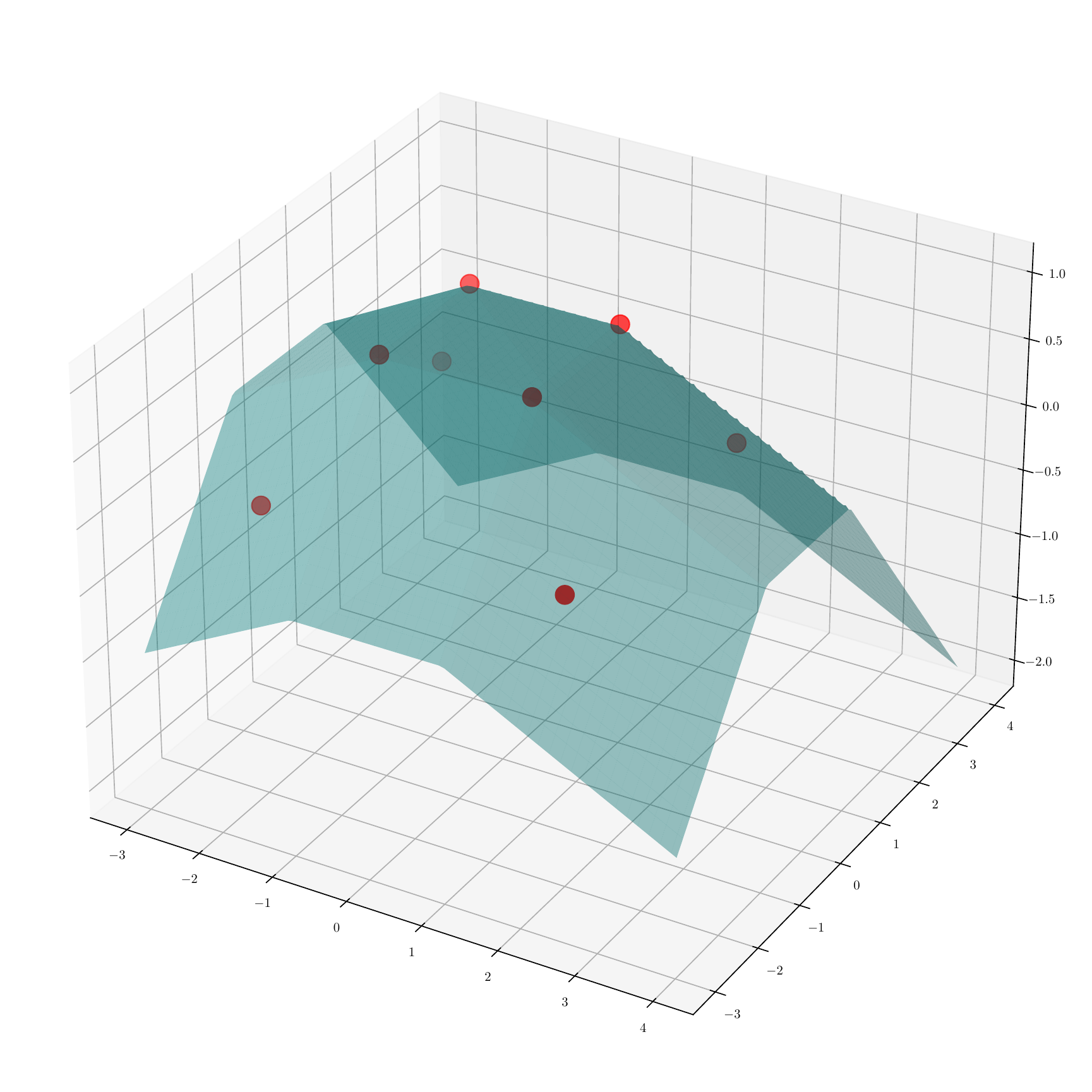}
        \caption{Solution to \eqref{opt:multivariate_problem_T_tasks} with $T=1$ (single-task training)}
        \label{fig:sing_sol_3}
    \end{subfigure}
    \centering
    \bigskip
    \hfill
    \begin{subfigure}{0.45\textwidth}
        \includegraphics[width=\textwidth]{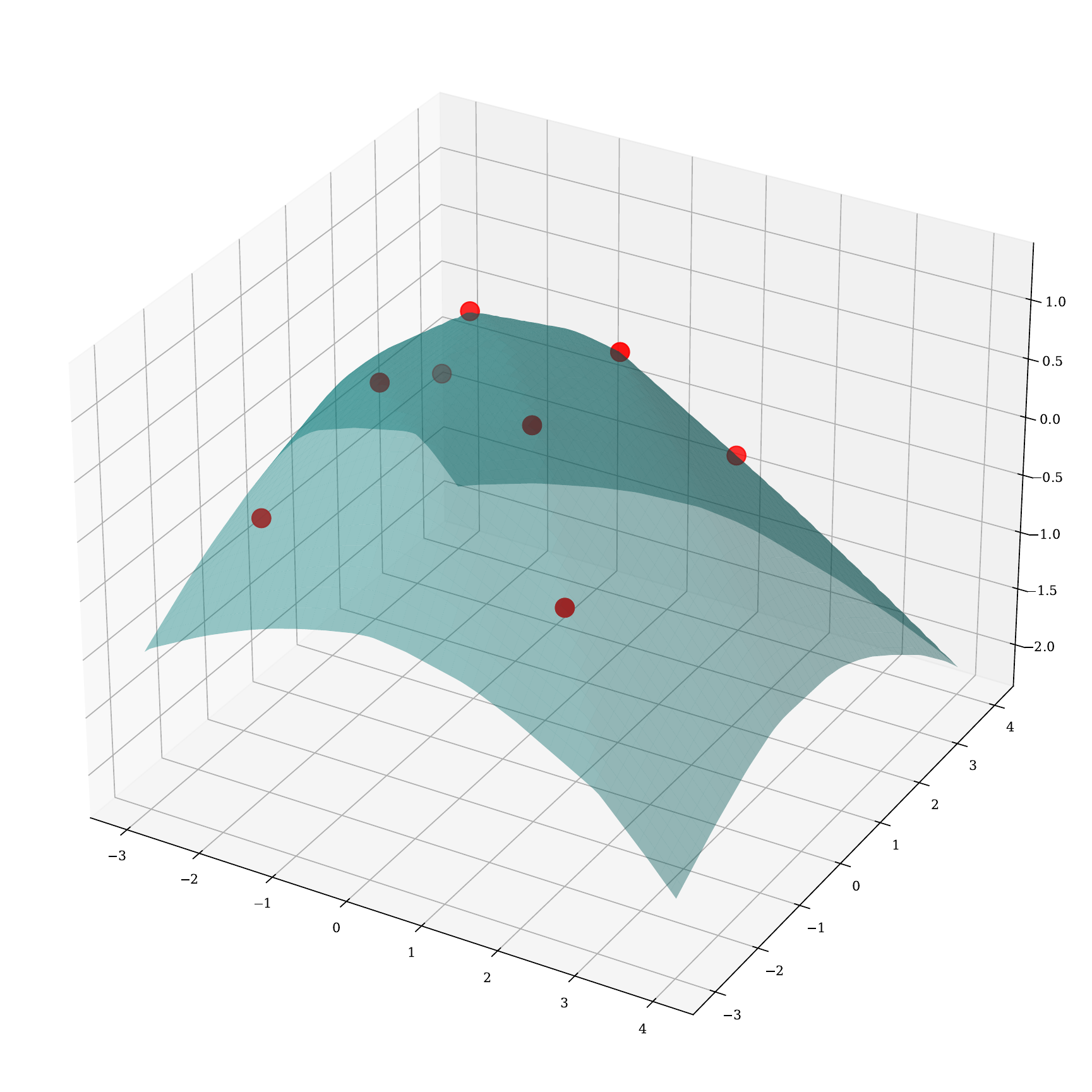}
        \caption{Solution to \eqref{opt:multivariate_problem_T_tasks} with $T=101$\\ (multi-task training)}
        \label{fig:mtl_sol}
    \end{subfigure}
    \hfill
    \begin{subfigure}{0.45\textwidth}
        \includegraphics[width=\textwidth]{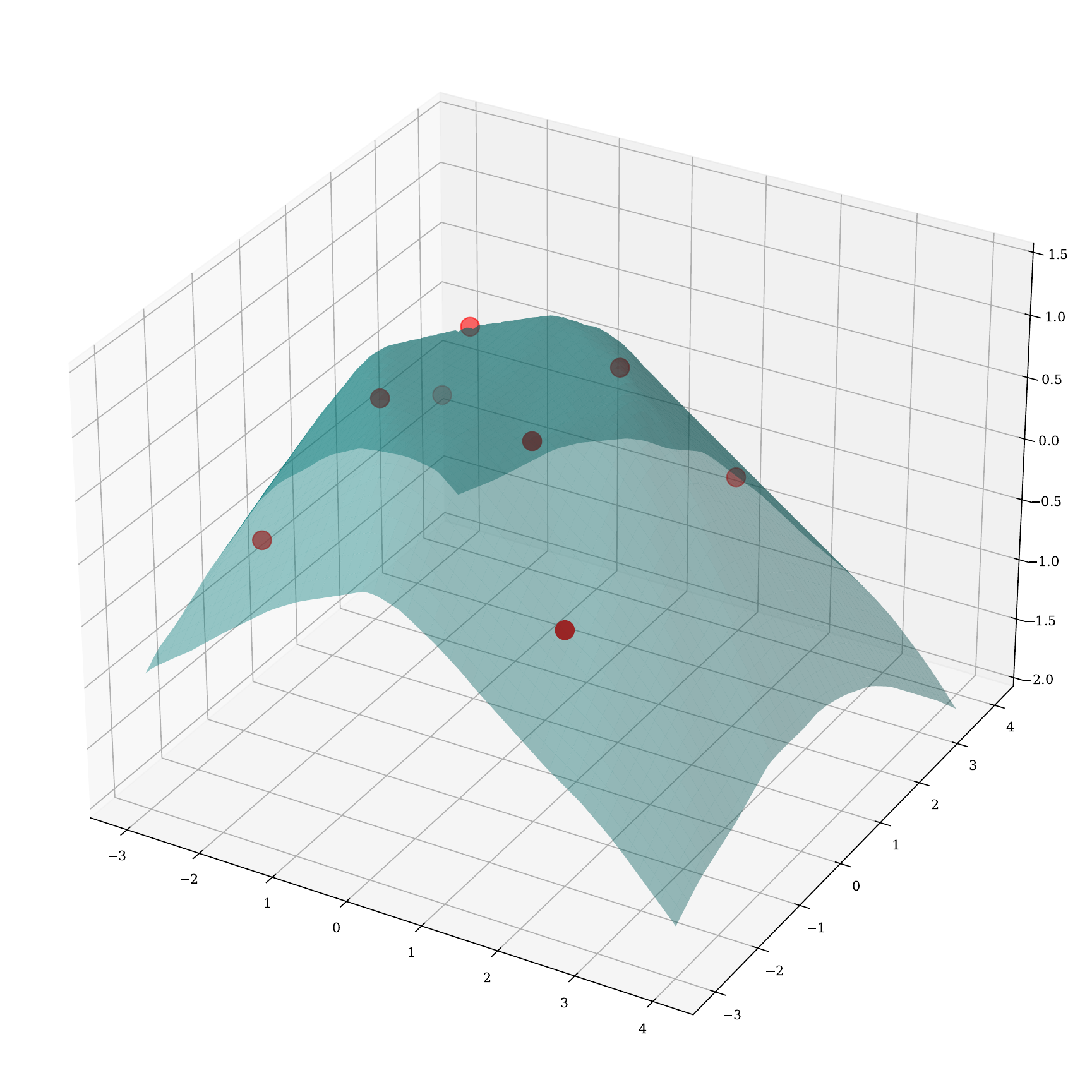}
        \caption{Solution to \eqref{opt:RKHS_problem} \\(RKHS approximation)}
        \label{fig:rkhs_sol}
    \end{subfigure}
    \hfill
    \caption{ReLU network interpolation in two-dimensions.  The solutions shown were obtained with regularization parameter $\lambda \approx 0$. \textit{Top Row -- Solutions to single-task training}: \Cref{fig:sing_sol_1,,fig:sing_sol_2,,fig:sing_sol_3} show solutions to ReLU neural network interpolation (blue surface) of training data (red). The eight data points are located at the vertices of two squares, both centered at the origin. The outer square has side-length two and values of $0$ at the vertices.  The inner square has side-length one and values of $1$ at the vertices.   All three functions interpolate the data and are global minimizers of \eqref{opt:wd} and \eqref{opt:pn} when solving for just this task (i.e., $T=1$). Due to the simplicity of this dataset the optimality of the solutions in the first row were confirmed by solving the equivalent convex optimization to \eqref{opt:wd} developed in \cite{ergen2021convex}. \textit{Bottom Row -- Solutions to multi-task training:} \Cref{fig:mtl_sol} shows the solution to the first output of a multi-task neural network with $T=101$ tasks. The first output is the original task depicted in the first row while the labels for other $100$ tasks are randomly generated i.i.d from a Bernoulli distribution with equal probability for one and zero.  
    Here we show one representative example; more examples are depicted in \cref{appendix:additional_experiments} showing that this phenomenon holds across many runs. \Cref{fig:rkhs_sol} shows the solution to fitting the training data by solving \eqref{opt:RKHS_problem} over a fixed set of features learned by the multi-task neural network with $T=100$ random tasks. 
    We observe that unlike the highly variable solutions of single-task optimization problem, the solutions obtained by solving the multi-task optimizations are nearly identical, as one would have for kernel methods. Moreover, the solution obtained by solving \eqref{opt:RKHS_problem} is also similar to the solution of the full multi-task training problem with all $T=101$ tasks. \jupyter{https://github.com/joeshenouda/effects-mtl-nns}
    } 
    \label{fig:bi}
\end{figure}

\section{Conclusion and Discussion} \label{sec:conclusion_limitations_future_work}
We have shown that univariate, multi-task shallow ReLU neural networks which are trained to interpolate a dataset with minimal sum of squared weights almost always represent a unique function. This function performs straight-line interpolation between consecutive data points for each task. This solution is also the solution to a min-norm data-fitting problem in an RKHS. We provide mathematical analysis and numerical evidence suggesting that a similar conclusion may hold in the mulvariate-input case, as long as the tasks are sufficiently large in number. These results indicate that multi-task training of neural networks can produce solutions that are strikingly different from  those obtained by single-task training, and highlights a novel connection between these multi-task solutions and kernel methods.

 Future work could aim to extend these results to deep neural network architectures. We also focus here on characterizing global solutions to the optimizations in \eqref{opt:wd} and \eqref{opt:pn}. Whether or not networks trained with gradient descent-based algorithms will converge to global solutions remains an open question: our low-dimensional numerical experiments in \cref{sec:univariate,,sec:multivariate} indicate that they do, but a more rigorous analysis of the training dynamics would be an interesting separate line of research. Finally, while our analysis and experiments in \cref{sec:multivariate} indicate that multi-variate, multi-task neural network solutions behave similarly to $\ell^2$ regression over a fixed kernel, we have not precisely characterized what that kernel is in the multi-input case as we have in the single-input case: developing such a characterization is of interest for future work.

\newpage
\bibliography{refs}
\bibliographystyle{abbrvnat}
\newpage

\begin{appendices}
\section{Proofs of Main Results}
\subsection{Proof of \cref{th:univariate_ctd_main_theorem}} \label{appendix:proof_univariate_ctd_main_theorem}
\begin{proof}  
We break the proof into the following sections.

\paragraph{Unregularized Residual Connection.} 
We first discuss the utility of the unregularized residual connection in our analysis. Consider a single-input/output function \( f \) represented by a ReLU network \( f_{\vtheta}: \R \to \R \) with unit norm input weights. Suppose that \( f_{\vtheta} \) includes two neurons, 
$$\eta_1(x) = v_1 (w_1 x + b_1)_+ \quad \text{ and }\quad  \eta_2(x) = v_2 (w_2 x + b_2)_+ $$
with $ b_1/w_1 = b_2/w_2 $, i.e., $\eta_1$ and $\eta_2$ both ``activate'' at the same location. There are two possible cases:

\begin{enumerate}
    \item \textbf{Same Sign Input Weights:} If $ \sgn(w_1) = \sgn(w_2) $, the neurons activate in the same direction and can be merged into a single neuron with input weight $w_1 + w_2$, bias $b_1 + b_2$, and output weight $v_1 + v_2$, without altering the representational cost or the represented function.
    \item \textbf{Opposite Sign Input Weights:} If $ w_1 = 1 $ and $ w_2 = -1 $, their sum can be rewritten using $ x = (x)_+ - (-x)_+ $:
    $$
    v_1 (x + b_1)_+ + v_2 (-x - b_1)_+ = (v_1 + v_2)(x+b_1)_+ - v_2(x+b_1).
    $$
    The term $  -v_2(x+b_1) $ can be absorbed into the residual connection, representing the same function with no greater representational cost.
\end{enumerate}
The residual connection allows us to conclude that for optimal networks solving \eqref{opt:pn}, no two neurons activate at the same location. This provides a one-to-one correspondence between slope changes in the function and neurons in the network. The same conclusion also holds for any $T$ output ReLU network. 

Therefore, any set of $T$ continuous piecewise linear (CPWL) functions with a combined total of $K$ slope changes (i.e., knots) at locations $\tilde{x}_1,\dots, \tilde{x}_K$ is represented, with minimal representational cost, by a network of width $K$, where parameters satisfy $ -b_k/w_k = \tilde{x}_k $ and $ \mu_{kt} = w_k v_{kt} $ (here $\mu_{kt}$ denotes the slope change of the $t^\textrm{th}$ function at $\tilde{x}_k$). This correspondence enables us to analyze networks which solve \eqref{opt:pn} entirely using CPWL functions, treating ``knots'' and ``neurons'' interchangeably and using $ |v_{kt}| $ to denote both the magnitude of the slope change at a knot and the magnitude of the output weight.

\paragraph{Connect-the-dots interpolation is always a solution to \eqref{opt:pn}.} 
Using \cref{lemma:keylemma}, we proceed to prove \cref{th:univariate_ctd_main_theorem}. First note that the objective function in \eqref{opt:pn} is coercive and continuous, and if $K \geq N$, the feasible set of \eqref{opt:pn} is non-empty and---as the preimage of a closed set under the neural network function (which is continuous with respect to its parameters)---closed. Therefore, a solution to \eqref{opt:pn} exists \cite[Lemma 2.14]{beck2017first}. Let $S_{\btheta}^*$ denote the set of parameters of optimal neural networks which solve \eqref{opt:pn} for the given data points, and let
\begin{align}
    S^* := \{ f: \R \to \R^T \ \rvert \  f(x) = f_{\btheta}(x) \, \; \forall x \in \R, \; \btheta \in S_{\btheta}^* \}
\end{align}
be the set of functions represented by neural networks with optimal parameters in $S_{\btheta}^*$. First, note that the connect-the-dots interpolant $f_{\cD}$ is in the solution set $S^*$. To see this, fix any $f \in S^*$, and apply \cref{lemma:keylemma} repeatedly to remove all ``extraneous'' knots (i.e., knots located away from the data points $x_1, \dots, x_N$) from $f$. By \cref{lemma:keylemma}, the resulting function---which is simply $f_{\cD}$---has representational cost no greater than the original $f$, and since $f$ had optimal representational cost, so does $f_{\cD}$. 

\newpage
\paragraph{Condition for non-unique solutions.}
We first illustrate the condition under which \eqref{opt:pn} admits an infinite number of solutions.
For some $i = 2, \dots, N-2$, consider the two vectors
\begin{align}
    \vs_i - \vs_{i-1} = \frac{\vy_{i+1} - \vy_i}{x_{i+1}-x_i} - \frac{\vy_i - \vy_{i-1}}{x_i - x_{i-1}}
\end{align}
and
\begin{align}
    \vs_{i+1} - \vs_i = \frac{\vy_{i+2}-\vy_{i+1}}{x_{i+2}-x_{i+1}} - \frac{\vy_{i+1} - \vy_i}{x_{i+1}-x_i}.
\end{align}
and assume they are aligned. Then we may view the function $f_{\cD}$ around the interval $[x_i, x_{i+1}]$ as an instance of \cref{lemma:keylemma} with $\va = \vs_{i-1}$, $\vb = \vs_i$, and $\vc = \vs_{i+1}$. Fix some point $\tilde{x} \in (x_i, x_{i+1})$ and denote 
$$\tau = \frac{\tilde{x} - x_{i}}{x_{i+1}-x_i}.$$ 
Let $\vdelta$ be any vector which is aligned with $\va-\vb$ and $\frac{1-\tau}{\tau}(\vb - \vc)$ and has smaller norm than both, and let $f: \R \to \R^T$ be the function whose output slopes on $(x_i, \tilde{x})$ are given by $\vdelta$ and whose slopes on $(\tilde{x}, x_{i+1})$ are given by $\vb - \frac{\tau}{1-\tau} \vdelta$. Then by \cref{lemma:keylemma}, $R(f) = R(f_{\cD})$ and thus $f \in S^*$. Since there are infinitely many such $\vdelta$'s, there are infinitely many optimal solutions to \eqref{opt:pn}. 

\paragraph{Necessary and sufficient condition under which $f_{\cD}$ is the unique solution.}
For the other direction of the proof, suppose that the vectors $\vs_i - \vs_{i-1}$ and $\vs_{i+1} - \vs_i$ are \emph{not} aligned for any $i = 1, \dots, N-1$, and assume by contradiction that there is some $f \in S^*$ which is \emph{not} of the form $f_{\cD}$. This $f$ must not have any knots on $(-\infty,x_1]$ or $[x_N, \infty)$, since removing such a knot would strictly decrease $R(f)$ without affecting the ability of $f$ to interpolate the data, contradicting optimality of $f$. So it must be the case that $f$ has an extraneous knot at some $\tilde{x}$ which lies between consecutive data points $x_i$ and $x_{i+1}$. Let $g$ denote the function obtained by removing all extraneous knots from $f$ \emph{except} the one located at $\tilde{x}$. By \cref{lemma:keylemma}, $R(g) \leq R(f)$.

First consider the case where the remaining extraneous knot is between $[x_i, x_{i+1}]$ for $i = 2, \dots, N-2$. Since $g$ has no extraneous knots away from $\tilde{x}$, it must be the case that $g$ agrees with $f_{\cD}$ on $[x_{i-1}, x_i]$ and $[x_{i+1}, x_{i+2}]$. We may view the behavior of $g$ around the interval $[x_i, x_{i+1}]$ as an instance of \cref{lemma:keylemma} with $\va = \vs_{i-1}$, $\vb = \vs_i$, and $\vc = \vs_{i+1}$. By assumption, $\va - \vb$ and $\vb - \vc$ are \emph{not} aligned, so by \cref{lemma:keylemma}, removing the knot at $\tilde{x}$ would strictly reduce $R(g)$. This contradicts optimality of $g$, hence of $f$.

\begin{figure}
    \begin{subfigure}{.5\textwidth}
          \centering
          \includegraphics[width=.9\linewidth]{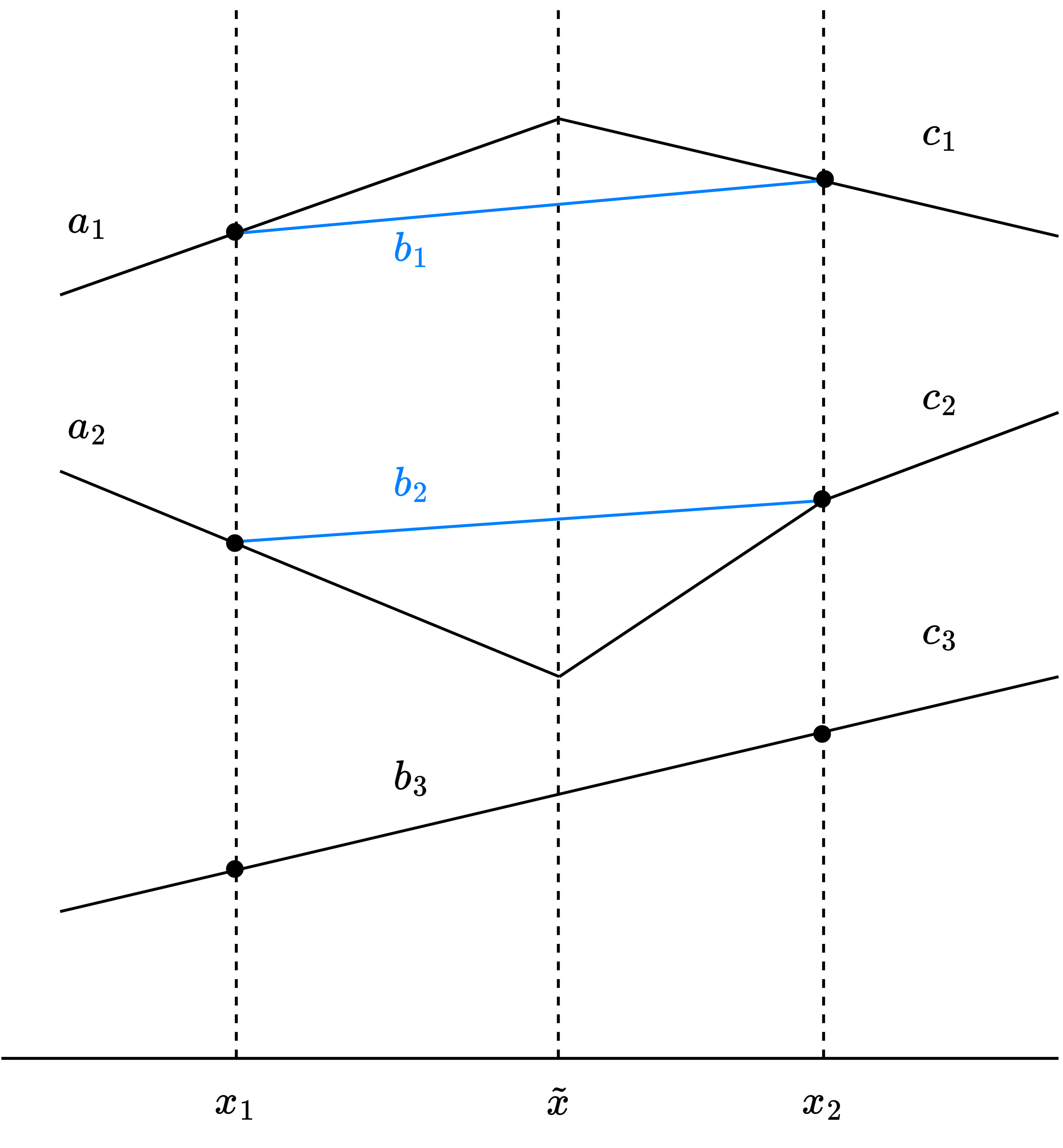}
          \caption{A function $g$ with a knot at $\tilde{x} \in (x_1, x_2)$.}
          \label{fig:g}
    \end{subfigure}%
    \begin{subfigure}{.5\textwidth}
          \centering
          \includegraphics[width=.9\linewidth]{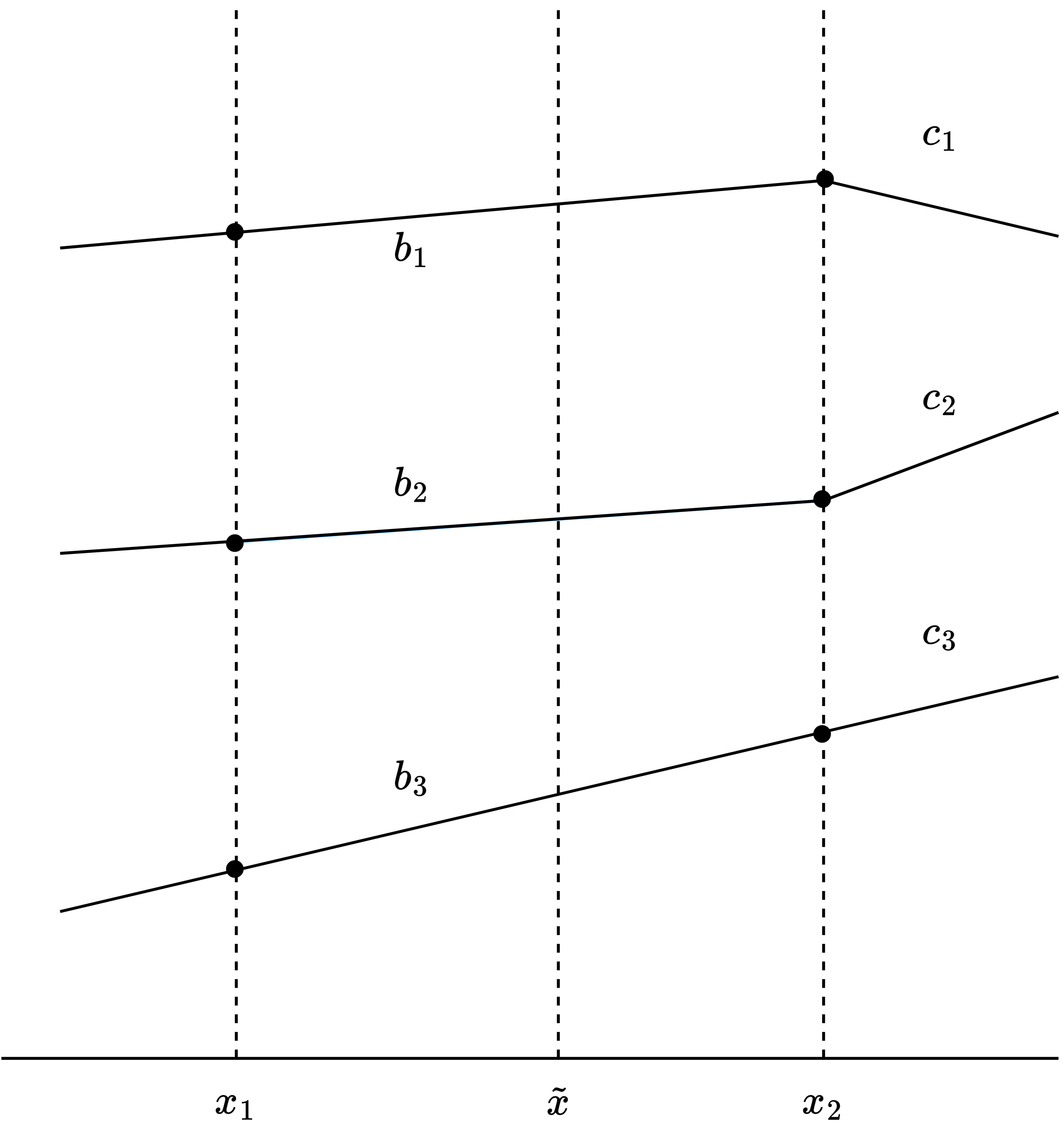}
          \caption{The function $f_{\cD}$.}
          \label{fig:f_D}
    \end{subfigure}
    \caption{Left: a function $g$ which has a knot in one or more of its outputs at a point $\tilde{x} \in (x_1, x_2)$. Right: the connect-the-dots interpolant $f_\cD$. The representational cost of $g$ is strictly greater than that of $f_{\cD}$.}
    \label{fig:univariate_proof_fig}
\end{figure}

Finally, consider the case where the extraneous knot is on the interval $[x_i, x_{i+1}]$ where $i = 1$ (the case $i = N-1$ follows by an analogous argument). Let $\va$ denote the vector of incoming slopes of $g$ at $x_1$. Define 
$$\vb = \frac{\vy_2 - \vy_1}{x_2 - x_1} \quad \vc = \frac{\vy_3 - \vy_2}{x_3 - x_2}.$$ Since $g$ has no extraneous knots except at $\tilde{x}$, the slopes of $g$ coming out of $x_2$ are $\vc$.  By optimality of $g$, it must be the case that $\va - \vb$ and $\vb - \vc$ are aligned (otherwise we could invoke \cref{lemma:keylemma} and strictly reduce the representational cost of $f$ by removing the knot at $\tilde{x}$, a contradiction), which implies that $\sgn(a_t - b_t) = \sgn(b_t - c_t)$ for each $t =1, \dots, T$. For any outputs $t$ which have a knot at $\tilde{x}$, this quantity is nonzero, in which case $|c_t - b_t| < |c_t - a_t|$ (see \cref{fig:g}). Let $ 1, \dots, t_0$ be the indices of the outputs which have a knot at $\tilde{x}$, and let $t_0 + 1, \dots, T $ be the indices of the outputs which do \emph{not} have a knot at $\tilde{x}$. We may again invoke \cref{lemma:keylemma} to remove the knots from $g$, resulting in a new function $\tilde{g}$ (satisfying $R(g) \geq R(\tilde{g})$) which has slopes $\va$ coming into $x_1$, $\vb$ between $x_1$ and $x_2$, and $\vc$ coming out of $x_2$. The contribution of these knots to $R(\tilde{g})$ is then given by:
\begin{align*}
    \left\| \begin{bmatrix}
        b_1 - a_1 \\ \vdots \\ b_{t_0} - a_{t_0}
    \end{bmatrix} \right\|_2 + \left\| \begin{bmatrix}
        c_1 - b_1 \\ \vdots \\ c_{t_0} - b_{t_0}
    \end{bmatrix} \right\|_2 &\geq \left\| \begin{bmatrix}
        b_1 - a_1 + c_1 - b_1 \\ \vdots \\ b_{t_0} - a_{t_0} + c_{t_0} - b_{t_0}
    \end{bmatrix} \right\|_2 \\
    &= \left\| \begin{bmatrix}
        c_1 - a_1 \\ \vdots \\ c_{t_0} - a_{t_0}
    \end{bmatrix} \right\|_2 \\
    &> \left\| \begin{bmatrix}
        c_1 - b_1 \\ \vdots \\ c_{t_0} - b_{t_0}
    \end{bmatrix} \right\|_2
\end{align*}
but the last quantity is exactly the contribution of these knots to $R(f_{\cD})$ (see \cref{fig:f_D}). This contradicts optimality of $\tilde{g}$, hence of $g$ and of $f$. The remainder of the proof is dedicating to showing that such datasets which admit non-unique solutions are rare when the data is randomly sampled from a continuous distribution.

\paragraph{Datasets which admit non-unique solutions have Lebesgue measure zero.} If $N = 2$ or $N = 3$, then $f_{\cD}$ is the only solution to \eqref{opt:pn}, so we focus on the case where $N \geq 4$. Suppose that for some $i = 2, \dots, N-2$, the data points $x_{i-1}, x_i, x_{i+1}, x_{i+2} \in \R$ and labels $\vy_{i-1}, \vy_{i}, \vy_{i+1}, \vy_{i+2} \in \R^{T}$ satisfy the requirement that
    \begin{align}
        \frac{\vy_{i+1}-\vy_i}{x_{i+1}-x_i}-\frac{\vy_i-\vy_{i-1}}{x_i-x_{i-1}} = w \left( \frac{\vy_{i+2}-\vy_{i+1}}{x_{i+2}-x_{i+1}} - \frac{\vy_{i+1} - \vy_i}{x_{i+1}-x_i}  \right)
    \end{align}
    for some $w > 0$, where both vectors are nonzero. After some computation, this is equivalent to the requirement that
    \begin{align}
        \underbrace{[\vy_{i-1}, \vy_i, \vy_{i+1}, \vy_{i+2}]}_{\mY_i \in \R^{T \times 4}} \Bigg( \underbrace{\begin{bmatrix}
            \frac{1}{x_{i}-x_{i-1}} \\ \frac{1}{x_{i+1}-x_i} - \frac{1}{x_i - x_{i-1}} \\ \frac{1}{x_{i+1}-x_i} \\ 0
        \end{bmatrix}}_{\va_1 \in \R^4} - w \underbrace{\begin{bmatrix}
            0 \\ \frac{1}{x_{i+1}-x_i} \\ -\frac{1}{x_{i+2} - x_{i+1}} - \frac{1}{x_{i+1} - x_i}  \\ \frac{1}{x_{i+2} - x_{i+1} }
        \end{bmatrix}}_{\va_2 \in \R^4} \Bigg) = \bm{0}
    \end{align}
    or equivalently, that $\mY \va_1 = w \mY \va_2$ for some $w > 0$. If this condition is satisfied, the matrix $\mU = \mY [\va_1, \va_2] \in \R^{T \times 2}$ has rank at most one, as does its Gram matrix $\mU \mU^\top$, which implies (because $T > 1$) that $\det(\mU \mU^\top) = 0$. Observe that $\det(\mU \mU^\top)$ is a real-valued non-trivial rational function of the variables $x_{i-1},  x_i, x_{i+1}, x_{i+2} \in \R$ and the entries of $\mY_i = [\vy_{i-1}, \vy_i, \vy_{i+1}, \vy_{i+2}] \in \R^{T \times 4}$. The zero set of this rational function is contained in the zero set of its polynomial numerator (an algebraic variety), and thus has Lebesgue measure zero in $\R^4 \times \R^{T \times 4}$, which implies that the set of possible $ x_{i-1}, x_i, x_{i+1}, x_{i+1}, \mY_{i}$ which satisfy the original condition for any $w > 0$ does as well. This holds for any $i = 2, \dots, N-2$, so the union of all such sets over $i = 2, \dots, N-2$ also has Lebesgue measure zero in $\R^{N} \times \R^{T \times N}$.
\end{proof}

\subsection{Proof of \cref{th:main_multivariate}} \label{appendix:proof_multivariate_main_theorem}

\begin{proof}
A cornerstone of our argument is the fact that the vectors $\vy_{\cdot,1}, \dots, \vy_{\cdot,T} \in \R^N$, generated according to the process described in \cref{sec:multivariate}, are exchangeable. Before proceeding, we note that there is an important nuance in characterizing the behavior of solutions to \eqref{opt:multivariate_problem_T_tasks} as random variables, which is that given a fixed (deterministic) dataset, the output weights which solve \eqref{opt:multivariate_problem_T_tasks} may not be unique. To account for this possibility, we assume that the optimization in \eqref{opt:multivariate_problem_T_tasks} is a (measurable) deterministic procedure $F$ which selects a single solution from the solution set, and this procedure is ``permutation invariant'' in the following sense: if $F$ selects $\vv_1^*, \dots, \vv_K^*$ as a solution for $\vy_1, \dots, \vy_N$, then $F$ also selects $\pi(\vv_1^*), \dots, \pi(\vv_K^*)$ as a solution for $\pi(\vy_1), \dots, \pi(\vy_N)$, where $\pi$ is any entry-wise permutation of the $T$ vector elements.
Indeed, if the solution to \eqref{opt:multivariate_problem_T_tasks} is unique, this permutation invariance property is fulfilled trivially since the regularization term in \eqref{opt:multivariate_problem_T_tasks} is itself permutation invariant. We then have the following:

\begin{proposition} \label{prop:exchangeable}
    Under the assumptions stated above: for each $k = 1, \dots, K$, the optimal output weights $v_{k1}^*, \dots, v_{kT}^*$ which solve \eqref{opt:multivariate_problem_T_tasks} are exchangeable random variables.
\end{proposition}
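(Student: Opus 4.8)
The plan is to combine the two structural ingredients that the setup has already arranged: first, that the task-label vectors $\vy_{\cdot,1}, \dots, \vy_{\cdot,T} \in \R^N$ are exchangeable by virtue of the random task-assignment construction, and second, that the chosen solution map $F$ is permutation equivariant in the precise sense stated just before the proposition. Exchangeability of the tasks is the same as saying that the stacked labels are invariant in distribution under the simultaneous coordinate permutation of every label vector $\vy_i \in \R^T$, while equivariance of $F$ says that applying such a coordinate permutation to the inputs permutes, in exactly the same way, each optimal output-weight vector $\vv_k^*$. Composing these two facts should transfer exchangeability from the labels directly to the output weights of each individual neuron, with essentially no computation.

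Concretely, I would fix a neuron index $k$ and an arbitrary permutation $\pi$ of $\{1,\dots,T\}$, writing $\pi$ also for the induced coordinate-permutation operator on $\R^T$ (matching the notation of the permutation-invariance hypothesis). The first step is to note that permuting the tasks is precisely the operation $(\vy_1,\dots,\vy_N) \mapsto (\pi\vy_1,\dots,\pi\vy_N)$, so exchangeability of $\vy_{\cdot,1},\dots,\vy_{\cdot,T}$ furnishes the distributional identity $(\vy_1,\dots,\vy_N) \overset{d}{=} (\pi\vy_1,\dots,\pi\vy_N)$. The second step is to write $\vv_k^* = [F(\vy_1,\dots,\vy_N)]_k$, where $[\,\cdot\,]_k$ is the measurable map extracting the $k$-th output-weight vector; since $[F(\cdot)]_k$ is a fixed measurable function applied to a distributionally invariant argument, we get $[F(\vy_1,\dots,\vy_N)]_k \overset{d}{=} [F(\pi\vy_1,\dots,\pi\vy_N)]_k$. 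The third step invokes the assumed permutation invariance of $F$ as a pointwise (almost sure) identity, namely $[F(\pi\vy_1,\dots,\pi\vy_N)]_k = \pi\,\vv_k^*$. Chaining the three steps yields $\vv_k^* \overset{d}{=} \pi\,\vv_k^*$, i.e. $(v_{k1}^*,\dots,v_{kT}^*) \overset{d}{=} (v_{k\pi(1)}^*,\dots,v_{k\pi(T)}^*)$. Since $\pi$ and $k$ are arbitrary, this is exactly exchangeability of $v_{k1}^*,\dots,v_{kT}^*$.

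The main obstacle is not the algebra but the non-uniqueness of minimizers of \eqref{opt:multivariate_problem_T_tasks}, which is the entire reason the deterministic, permutation-invariant selection rule $F$ was introduced. Without pinning down a single minimizer, ``$\vv_k^*$'' would not be a well-defined random variable and the distributional identities above would be meaningless; and without equivariance of the selection rule, the third step could fail, since an arbitrary tie-breaking convention might map permuted inputs to an unpermuted solution. I would therefore be explicit that the only properties of $F$ used are its measurability (needed to push forward distributions in the second step) and its equivariance (needed in the third step), and I would remark that whenever the minimizer is unique these both hold automatically, because the objective of \eqref{opt:multivariate_problem_T_tasks} is itself invariant under simultaneous coordinate permutations of the labels and of the output weights.
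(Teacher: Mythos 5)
Your argument is correct and is essentially identical to the paper's own proof: both transfer exchangeability of the label vectors to the output weights by chaining the permutation equivariance of the selection rule $F$ with the distributional invariance of the labels under coordinate permutation. Your version merely makes the measurability of $F$ and the almost-sure nature of the equivariance step more explicit, which is a fine elaboration but not a different route.
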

\begin{proof}
    For any $k = 1, \dots, K$ and any permutation $\pi$ of the indices $1, \dots, T$, we have:
    \begin{align}
        [v_{k \pi(1)}^*, \dots, v_{k \pi(T)}^*] &= F \left( \{ y_{i\pi(1)}, \dots, y_{i\pi(T)} \}_{i=1}^N  \right) \\
        &\overset{d}{=} F \left( \{ y_{i1}, \dots, y_{iT} \}_{i=1}^N  \right) \\
        &= [v_{k1}^*, \dots, v_{kT}^*]
    \end{align}
    The second line above follows from exchangeability of $y_{i1}, \dots, y_{iT}$, and $\overset{d}{=}$ denotes equivalence in distribution.
\end{proof}
Exchangeability of the $v_{k1}^*, \dots, v_{kT}^*$ (and hence of their squares $(v_{k1}^*)^2, \dots, (v_{kT}^*)^2$) provides us with the following fact\footnote{In the following discussion, the random variables $\frac{(v_{ks}^*)^2}{ \frac{1}{T} \sum_{t=1}^T (v_{kt}^*)^2}$ and $\frac{(v_{ks}^*)^2}{\sum_{t \neq s} (v_{kt}^*)^2}$ are understood to be $[0, \infty]$-valued, taking the value $\infty$ if and only if the denominator is equal to zero and the numerator is nonzero, and taking the value zero if both the numerator and denominator are equal to zero. \label{fn:inf_val_rv}}:
\begin{lemma} \label{lemma:exp_one}
    For any $s = 1, \dots, T$ and any $k = 1, \dots, K$:
    \begin{eqnarray}
        \bE \left[ \frac{(v_{ks}^*)^2}{ \frac{1}{T} \sum_{t=1}^T (v_{kt}^*)^2} \right] = 1
    \end{eqnarray}
\end{lemma}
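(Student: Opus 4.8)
The plan is to reduce the statement to a symmetry computation for exchangeable nonnegative random variables. Write $Z_t := (v_{kt}^*)^2$ for $t = 1, \dots, T$ and $\bar{Z} := \frac{1}{T} \sum_{t=1}^T Z_t$, so that the target quantity is $\bE[W_s]$ with $W_s := Z_s / \bar{Z}$ interpreted through the convention of footnote~\ref{fn:inf_val_rv}. For this particular ratio the denominator vanishes only when every $Z_t = 0$ (since the $Z_t$ are nonnegative), so the value $+\infty$ never occurs and $W_s$ is simply set to $0$ on the event $\{\bar Z = 0\}$. By \cref{prop:exchangeable} the tuple $(v_{k1}^*, \dots, v_{kT}^*)$ is exchangeable, and applying the coordinatewise map $x \mapsto x^2$ preserves exchangeability, so $(Z_1, \dots, Z_T)$ is exchangeable too.

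The first step is to show that $\bE[W_s]$ is the same for every $s$. Since $\bar Z$ is a permutation-invariant function of $(Z_1, \dots, Z_T)$, for the transposition $\pi$ exchanging the indices $1$ and $s$ we have $W_s = Z_s / \bar Z = Z_{\pi(1)} / \bar Z$, and exchangeability of $(Z_1, \dots, Z_T)$ gives $Z_{\pi(1)}/\bar Z \overset{d}{=} Z_1 / \bar Z = W_1$. Hence $W_s \overset{d}{=} W_1$ and $\bE[W_s] = \bE[W_1]$ for all $s$.

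The second step exploits the telescoping identity $\sum_{s=1}^T W_s = T$, which holds on $\{\bar Z > 0\}$, together with $\sum_{s=1}^T W_s = 0$ on $\{\bar Z = 0\}$ (by the convention); compactly, $\sum_{s=1}^T W_s = T\, \bOne\{\bar Z > 0\}$. Taking expectations and inserting the common value from the first step yields $T\,\bE[W_1] = T\, \bP(\bar Z > 0)$, so that $\bE[W_s] = \bP(\bar Z > 0)$.

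The final and most delicate step is to argue that $\bP(\bar Z > 0) = 1$, i.e.\ that the neuron $k$ is active ($\|\vv_k^*\|_2 > 0$) with probability one under the random task-assignment law; this is exactly what promotes $\bE[W_s] = \bP(\bar Z > 0)$ to the asserted value $1$. I expect this to be the main obstacle, since it requires ruling out, up to a null event, the degenerate realizations in which the selected solution $F$ sets the output weight of neuron $k$ to zero across all $T$ tasks at once. I would address it using the permutation-invariance of $F$ together with the role of the active set $\mathcal{S}$, noting moreover that on the residual event $\{\bar Z = 0\}$ one has $\vv_k^* = \vzero$, so the downstream consequence of this lemma (the high-probability bound of \cref{lemma:subvectors_gamma}, obtained from $\bE[W_s] = 1$ via Markov's inequality) holds there trivially. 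All remaining manipulations are routine properties of exchangeable sequences.
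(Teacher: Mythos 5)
Your argument is correct in all the steps you actually carry out, and it takes a genuinely different route from the paper. The paper conditions on the sum: it uses exchangeability to get $\bE\big[(v_{ks}^*)^2 \,\big\rvert\, \frac{1}{T}\sum_t (v_{kt}^*)^2\big] = \bE\big[(v_{kj}^*)^2 \,\big\rvert\, \frac{1}{T}\sum_t (v_{kt}^*)^2\big]$ for all $s,j$, deduces that each such conditional expectation equals $\frac{1}{T}\sum_t (v_{kt}^*)^2$ itself, divides, and finishes with the tower property. You instead symmetrize the ratios directly ($W_s \overset{d}{=} W_1$ because $\bar Z$ is permutation-invariant) and use the telescoping identity $\sum_{s=1}^T W_s = T\,\bOne\{\bar Z > 0\}$. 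Your version is more elementary (no conditional expectations, hence no measure-theoretic care about versions of $\bE[\cdot\mid\bar Z]$) and it makes the conclusion $\bE[W_s] = \bP(\bar Z > 0)$ explicit rather than implicit.

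That explicitness is the real value of your approach: the final step $\bP(\bar Z > 0) = 1$, which you flag as the main obstacle and do not complete, is exactly the step the paper's proof silently elides. In the paper, the identity $\bE\big[W_s \,\big\rvert\, \bar Z\big] = \frac{1}{\bar Z}\,\bE\big[(v_{ks}^*)^2 \,\big\rvert\, \bar Z\big] = 1$ is only valid on $\{\bar Z > 0\}$; under the footnote's convention the conditional expectation is $0$ on $\{\bar Z = 0\}$, so the paper too only establishes $\bE[W_s] = \bP(\|\vv_k^*\|_2 > 0)$. As stated ``for any $k = 1,\dots,K$'' the lemma therefore requires every neuron to be active almost surely, which is not argued and is in tension with the theorem's explicit restriction to the active set $\mathcal{S}$. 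You are right that this does not damage anything downstream: \cref{lemma:markov_bound} and \cref{lemma:subvectors} only use Markov's inequality, for which $\bE[W_s] \le 1$ suffices, and on $\{\vv_k^* = \vzero\}$ their conclusions hold vacuously. So your proof is as complete as the paper's, and more candid about where the residual assumption lives.
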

\begin{proof}

    By exchangeability, the marginal distributions of $(v_{k1}^*)^2, \dots, (v_{kT}^*)^2$ are identical. Therefore we have that,
\begin{align}
    \bE \left[ (v_{ks}^*)^2 \Bigg\rvert \frac{1}{T} \sum_{t=1}^T (v_{kt}^*)^2 \right] = \bE \left[ (v_{kj}^*)^2 \Bigg\rvert \frac{1}{T} \sum_{t=1}^T (v_{kt}^*)^2 \right]
\end{align}
for any $s, j = 1, \dots, T $ and any $k = 1, \dots, K$. Therefore:
\begin{align}
    \frac{1}{T} \sum_{t=1}^T (v_{kt}^*)^2 &= \bE \left[ \frac{1}{T} \sum_{t=1}^T (v_{kt}^*)^2 \Bigg\rvert \frac{1}{T} \sum_{t=1}^T (v_{kt}^*)^2  \right] \\
    &= \frac{1}{T} \sum_{t=1}^T \bE \left[ (v_{kt}^*)^2 \Bigg\rvert \frac{1}{T} \sum_{t=1}^T (v_{kt}^*)^2  \right] \\
    &= \bE \left[ (v_{ks}^*)^2 \Bigg\rvert \frac{1}{T} \sum_{t=1}^T (v_{kt}^*)^2  \right]
\end{align}
for any $s = 1, \dots, T$, and thus
\begin{align}
    \bE \left[ \frac{(v_{ks}^*)^2}{\frac{1}{T} \sum_{t=1}^T (v_{kt}^*)^2} \Bigg\rvert \frac{1}{T} \sum_{t=1}^T (v_{kt}^*)^2  \right] = \frac{1}{\frac{1}{T} \sum_{t=1}^T (v_{kt}^*)^2} \bE \left[ (v_{kj}^*)^2 \Bigg\rvert \frac{1}{T} \sum_{t=1}^T (v_{kt}^*)^2 \right] = 1
\end{align}
which implies that
\begin{align}
    \bE \left[ \frac{(v_{ks}^*)^2}{\frac{1}{T} \sum_{t=1}^T (v_{kt}^*)^2} \right] = \bE \left[  \bE \left[ \frac{(v_{ks}^*)^2}{\frac{1}{T} \sum_{t=1}^T (v_{kt}^*)^2} \Bigg\rvert \frac{1}{T} \sum_{t=1}^T (v_{kt}^*)^2  \right]  \right] = \bE[1] = 1
\end{align}
\end{proof}

Applying Markov's inequality to \cref{lemma:exp_one} yields the following useful result:

\begin{lemma} \label{lemma:markov_bound}
    For any $s = 1, \dots, T$, any $k = 1, \dots, K$, and any $0 < \beta < 1$:
    \begin{align} \label{eq:Markov_bound}
    \bP \left( \frac{(v_{ks}^*)^2}{\sum_{t \neq s} (v_{kt}^*)^2} \geq \frac{1}{T^\beta} \right) \leq \frac{T^\beta + 1}{T}
\end{align}
\end{lemma}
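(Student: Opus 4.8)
The plan is to apply Markov's inequality directly to the nonnegative random variable controlled by \cref{lemma:exp_one}, after first rewriting the event in the statement so that its denominator is the \emph{full} sum $\sum_{t=1}^{T} (v_{kt}^*)^2$ rather than the leave-one-out sum $\sum_{t \neq s} (v_{kt}^*)^2$. To set up the bookkeeping, I would introduce the shorthand $X := (v_{ks}^*)^2$ and $S := \sum_{t=1}^{T} (v_{kt}^*)^2$, so that $\sum_{t \neq s} (v_{kt}^*)^2 = S - X$, and the random variable whose expectation equals $1$ by \cref{lemma:exp_one} is $Z := X / \big( \tfrac{1}{T} S \big) = TX/S$.

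The central step is the observation that the event in the statement is \emph{exactly} an event on $Z$. First I would check that, whenever $S - X > 0$, clearing denominators turns $X/(S-X) \geq T^{-\beta}$ into $X T^{\beta} \geq S - X$, i.e. $X(T^{\beta}+1) \geq S$, i.e. $X/S \geq 1/(T^{\beta}+1)$, i.e. $Z \geq T/(T^{\beta}+1)$. Granting this equivalence of events, the conclusion is immediate: Markov's inequality applied to the nonnegative random variable $Z$ gives
\begin{align}
    \bP\!\left( Z \geq \frac{T}{T^{\beta}+1} \right) \ \leq \ \frac{\bE[Z]}{T/(T^{\beta}+1)} \ = \ \frac{T^{\beta}+1}{T},
\end{align}
which is precisely the claimed bound.

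The main obstacle---really the only subtlety, since the algebra above is routine---is to handle the degenerate cases in which a denominator vanishes, using the convention (stated in the footnote to \cref{lemma:exp_one}) that these ratios are $[0,\infty]$-valued. I would verify that the event equivalence survives all such cases: if $S - X = 0$ but $X > 0$, then the left-hand ratio is $+\infty$, so the original event holds, while simultaneously $S = X > 0$ forces $Z = T$, and $T \geq T/(T^{\beta}+1)$ holds because $T^{\beta}+1 \geq 1$, so the $Z$-event holds as well; and if $X = S - X = 0$, then both ratios are defined to be $0$, so neither event holds. Hence the two events coincide on the entire probability space, and Markov's inequality applies verbatim, completing the proof.
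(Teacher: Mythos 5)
Your proof is correct and follows essentially the same route as the paper's: both apply Markov's inequality to the full-sum ratio $Z = (v_{ks}^*)^2/\big(\tfrac{1}{T}\sum_{t=1}^T (v_{kt}^*)^2\big)$ whose expectation is $1$ by \cref{lemma:exp_one}, and then rewrite the resulting tail event in terms of the leave-one-out denominator with the threshold choice corresponding to $\epsilon = T/(T^\beta+1)$. Your explicit verification of the degenerate zero-denominator cases is a careful addition but does not change the argument.
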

\begin{proof}
    By Markov's inequality and \cref{lemma:exp_one}, we have
\begin{align}
    \bP \left( \frac{(v_{ks}^*)^2}{\frac{1}{T} \sum_{t=1}^T (v_{kt}^*)^2} \geq \epsilon \right) \leq \frac{\bE \left[ \frac{(v_{ks}^*)^2}{\frac{1}{T} \sum_{t=1}^T (v_{kt}^*)^2} \right]}{\epsilon} = \frac{1}{\epsilon}
\end{align}
for any $s$, $k$ and any $\epsilon > 0$. Equivalently:
\begin{align}
    \bP \left( \frac{(v_{ks}^*)^2}{\sum_{t \neq s} (v_{kt}^*)^2} \geq \frac{1}{T/\epsilon-1} \right) \leq \frac{1}{\epsilon}
\end{align}
The lemma follows by taking $\epsilon = T/(T^\beta + 1)$.
\end{proof}
\begin{lemma} \label{lemma:subvectors}
For any $s = 1, \dots, T$, any $k = 1, \dots, K$, and any $0 < \beta < 1$:
\begin{align}
    \left(1-T^{\beta-1} \right) \left( \sum_{t=1}^T (v_{kt}^*)^2 \right) \leq \sum_{t \neq s} (v_{kt}^*)^2 \leq \sum_{t=1}^T (v_{kt}^*)^2
\end{align}
with probability at least $1-T^{-\beta}$.
\end{lemma}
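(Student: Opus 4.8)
The plan is to obtain \cref{lemma:subvectors} as an immediate consequence of the unit-expectation identity in \cref{lemma:exp_one}, applied through Markov's inequality; no new probabilistic input is required. First I would dispose of the upper bound, which is entirely deterministic: since every summand $(v_{kt}^*)^2$ is nonnegative, omitting the single term $t = s$ can only decrease the sum, so $\sum_{t \neq s} (v_{kt}^*)^2 \leq \sum_{t=1}^T (v_{kt}^*)^2$ holds with probability one. All the real content is therefore in the lower bound.

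For the lower bound, I would recast it as a statement controlling the size of the omitted term. Writing $\sum_{t \neq s} (v_{kt}^*)^2 = \sum_{t=1}^T (v_{kt}^*)^2 - (v_{ks}^*)^2$ and cancelling the common factor $\sum_{t=1}^T (v_{kt}^*)^2$, the target inequality $\left(1 - T^{\beta-1}\right)\sum_{t=1}^T (v_{kt}^*)^2 \leq \sum_{t \neq s}(v_{kt}^*)^2$ is seen to be equivalent to
\[
(v_{ks}^*)^2 \ \leq \ T^{\beta-1} \sum_{t=1}^T (v_{kt}^*)^2,
\]
which, after multiplying through by $T$, is in turn the same as the ratio bound
\[
\frac{(v_{ks}^*)^2}{\frac{1}{T}\sum_{t=1}^T (v_{kt}^*)^2} \ \leq \ T^{\beta}.
\]
So the lower bound holds exactly on the event that this normalized ratio does not exceed $T^{\beta}$, and it remains only to lower-bound the probability of that event.

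The final step is a one-line tail estimate. The normalized ratio is precisely the quantity shown in \cref{lemma:exp_one} to have expectation equal to $1$, so Markov's inequality at level $T^{\beta}$ gives
\[
\bP\!\left( \frac{(v_{ks}^*)^2}{\frac{1}{T}\sum_{t=1}^T (v_{kt}^*)^2} \ \geq \ T^{\beta} \right) \ \leq \ \frac{1}{T^{\beta}} \ = \ T^{-\beta}.
\]
Passing to the complement yields the lower bound with probability at least $1 - T^{-\beta}$, and intersecting with the almost-sure upper bound finishes the proof.

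I do not expect a genuine obstacle here, since the exchangeability argument and the expectation computation have already been carried out in \cref{prop:exchangeable} and \cref{lemma:exp_one}; this lemma is essentially a repackaging of \cref{lemma:markov_bound}. The only point needing a word of care is the degenerate case $\sum_{t=1}^T (v_{kt}^*)^2 = 0$, in which the ratio is not literally a fraction. This is absorbed by the $[0,\infty]$-valued convention introduced alongside \cref{lemma:exp_one}: when the denominator vanishes the numerator does too, the ratio is assigned the value $0 \leq T^{\beta}$, and both inequalities of the lemma collapse to $0 \leq 0$. Thus the degenerate case lies inside the favorable event and requires no separate treatment.
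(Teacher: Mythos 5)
Your proof is correct and follows essentially the same route as the paper's: the upper bound is the same deterministic observation, and the lower bound is obtained by rewriting the event as a tail bound on the normalized ratio $\frac{(v_{ks}^*)^2}{\frac{1}{T}\sum_{t=1}^T (v_{kt}^*)^2}$ and applying Markov's inequality to \cref{lemma:exp_one} at level $T^{\beta}$, exactly as the paper does with $\epsilon = T^{\beta}$. Your explicit handling of the degenerate zero-denominator case is a welcome extra word of care that the paper leaves implicit.
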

\begin{proof}
    The upper bound holds with probability one simply because all terms in the sum are nonnnegative. For the lower bound, note that again by Markov's inequality and \cref{lemma:exp_one}, we have
\begin{align}
    \bP \left( (v_{ks}^*)^2 \geq \frac{\epsilon}{T} \sum_{t=1}^T (v_{kt}^*)^2 \right) \leq \frac{1}{\epsilon}
\end{align}
for any $s$, $k$ and any $\epsilon > 0$. Equivalently:
\begin{align}
    \bP \left( \sum_{t=1}^T (v_{kt}^*)^2 - \sum_{t \neq s} (v_{kt}^*)^2  \geq \frac{\epsilon}{T} \sum_{t=1}^T (v_{kt}^*)^2 \right) = \bP \left( \sum_{t \neq s} (v_{kt}^*)^2 \leq \left( 1- \frac{\epsilon}{T} \right) \left( \sum_{t=1}^T (v_{kt}^*)^2 \right) \right) \leq \frac{1}{\epsilon}
\end{align}
Therefore:
\begin{align}
    \bP \left( \sum_{t \neq s} (v_{kt}^*)^2 \geq \left( 1- \frac{\epsilon}{T} \right) \left( \sum_{t=1}^T (v_{kt}^*)^2 \right) \right) \geq 1- \frac{1}{\epsilon}
\end{align}
The result follows by taking $\epsilon = T^\beta$.
\end{proof}

Additionally, we will use the following fact:
\begin{lemma} \label{lemma:v_bounded}
    Suppose that each label $y_{it}$ satisfies $|y_{it}| \leq B$ almost surely for some constant $B$ independent of $T$. Then each $v_{kt}^*$ solving \eqref{opt:multivariate_problem_T_tasks} satisfies $|v_{kt}^*| \leq \sqrt{CT}$, where $C$ is a constant independent of $T$. 
\end{lemma}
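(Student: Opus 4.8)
The plan is to upper-bound the optimal objective value of \eqref{opt:multivariate_problem_T_tasks} by the value attained at a cheaply constructed reference network, and then read off the bound on the weights from the nonnegative, additive structure of the problem. First I would record that a minimizer $\vtheta^*$ exists (the regularizer $\lambda \sum_k \|\vv_k\|_2$ is coercive in the output weights, the loss is lower semicontinuous and bounded below, the directions $\vw_k$ live on the compact sphere $\bS^{d-1}$, and biases whose kink leaves the data range are redundant so one may restrict to a compact set), so that the scalars $v_{kt}^*$ are well defined.

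Second, and this is the crux, I would build a single reference network $\bar f$ of the form \eqref{eq:nn_no_skip} that already attains the smallest possible data-fitting value while using only a $T$-independent number of neurons with $O(1)$-sized output-weight entries. Concretely, for each $i,t$ let $\hat y_{it}$ minimize $\tilde{\cL}(y_{it}, \cdot)$; since $|y_{it}| \le B$, these targets are bounded by a constant depending only on $B$ and $\tilde{\cL}$, not on $T$. Choosing a unit vector $\vu \in \bS^{d-1}$ whose projections $\vu^\top \vx_1, \dots, \vu^\top \vx_N$ are distinct reduces fitting $\{\hat y_{it}\}$ to $T$ univariate connect-the-dots problems along the common direction $\vu$. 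These can be realized \emph{simultaneously} by $O(N)$ ReLU neurons that all share the input direction $\vw_k = \vu$ and have biases at the sorted projected coordinates, with affine and constant pieces on the data range produced by two extra kinks placed to the left of all projections (a difference of two such ReLUs is constant on the data range). The $t$-th entry of each output weight is a slope change of the $t$-th interpolant, hence bounded by $O(\max_{i,t}|\hat y_{it}|/\mathrm{gap}) = O(1)$ uniformly in $T$, where $\mathrm{gap}$ is the minimal spacing of the projected coordinates.

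Third, I would assemble the comparison. By construction $\bar f$ attains the per-point loss minimum $L_{\min} := \sum_{i=1}^N \sum_{t=1}^T \tilde{\cL}(y_{it}, \hat y_{it})$, and since each of its $O(N)$ neurons has an output vector in $\R^T$ with $O(1)$ entries, its Euclidean norm is $O(\sqrt T)$ and its regularizer is $\sum_k \|\bar\vv_k\|_2 = O(\sqrt T)$. Optimality of $\vtheta^*$ then gives $\mathrm{OPT} \le L_{\min} + \lambda\, O(\sqrt T)$. On the other hand the data-fitting term of \emph{any} network is at least $L_{\min}$ (since $\tilde{\cL}(y_{it},\cdot) \ge \tilde{\cL}(y_{it},\hat y_{it})$ termwise), so $\mathrm{OPT} \ge L_{\min} + \lambda \sum_k \|\vv_k^*\|_2$. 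Subtracting $L_{\min}$ yields $\sum_k \|\vv_k^*\|_2 = O(\sqrt T)$, and therefore for every $k,t$ we get $|v_{kt}^*| \le \|\vv_k^*\|_2 \le \sum_{k'} \|\vv_{k'}^*\|_2 \le \sqrt{CT}$ for a constant $C$ depending on $N, B, \lambda, d$ and the geometry of $\{\vx_i\}$ but not on $T$.

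The main obstacle I anticipate is the second step: making the reference construction fully rigorous inside the residual-free architecture \eqref{eq:nn_no_skip}. One must verify that a generic $\vu$ separates the projected data (immediate, as the excluded directions form a measure-zero union of hyperplanes), that the needed affine and constant offsets on the data range really are expressible as differences of ReLUs with far-away kinks without inflating weights beyond $O(1)$, and that the neuron count stays independent of $T$. A minor secondary point is the assumption that $\tilde{\cL}(y,\cdot)$ admits a minimizer bounded uniformly for $|y| \le B$; this holds for the standard regression losses and is exactly what upgrades the rate from the trivial $O(T)$ one gets by comparing against $\vv_k \equiv \vzero$ to the claimed $O(\sqrt T)$.
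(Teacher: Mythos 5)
Your proposal is correct but takes a genuinely different route from the paper's. The paper argues task-by-task: it first bounds the single-task optimal output weights by comparing against the $\ell^1$-minimal interpolant over a fixed dictionary $\mD$ (via $\|\mD^+\vy\|_1 \le \sqrt{K}\|\vy\|_2/\sigma_{\min}(\mD)$, giving a $T$-independent constant $C$ per task), stacks $T$ such single-task solutions into one wide network to conclude $\sum_k \|\vv_k^*\|_2 \le CT$, and then invokes the neural balance theorem ($\tfrac12\sum_k \|\vw_k^*\|_2^2 + \|\vv_k^*\|_2^2 = \sum_k \|\vv_k^*\|_2\|\vw_k^*\|_2$ at optimality, with $\|\vw_k^*\|_2 = 1$) to convert that $O(T)$ bound on $\sum_k\|\vv_k^*\|_2$ into an $O(T)$ bound on $\sum_k\|\vv_k^*\|_2^2$, whence $|v_{kt}^*| \le \sqrt{2CT}$. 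You instead build a single multi-task comparator with $O(N)$ shared-direction neurons whose output vectors have $O(1)$ entries and hence $\ell^2$ norms $O(\sqrt{T})$, and compare objective values directly; this yields the strictly stronger intermediate bound $\sum_k\|\vv_k^*\|_2 = O(\sqrt{T})$ and dispenses with the balance theorem entirely. The costs of your route are (a) the extra bookkeeping needed to realize the affine and constant pieces inside the skip-free architecture \eqref{eq:nn_no_skip} (your far-apart-kink trick does work with $O(1)$ weights because the data geometry is $T$-independent), and (b) the explicit assumption that $\cL(y,\cdot)$ attains a minimizer bounded uniformly for $|y|\le B$. On point (b) you are not actually assuming more than the paper: its step ``$S^* \ge \|\vv^*\|_1$, since otherwise the interpolant would beat the optimum'' implicitly requires the loss to be minimized at the true label, which is your assumption with $\hat y_{it} = y_{it}$. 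Both arguments, yours and the paper's, deliver the claimed $|v_{kt}^*| \le \sqrt{CT}$.
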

\begin{proof}
    Consider a set of single-task labels $y_1, \dots, y_N \in [-B,B]$ and let $\vv^* = [v_1^*, \dots, v_K^*]$ be an optimal set of output weights for \eqref{opt:multivariate_problem_T_tasks} with these labels (taking $T = 1$). Let $\vu^*$ be a solution to
    \begin{align} \label{opt:l1_fixed_dict}
        \min_{\vu \in \R^K} \| \vu \|_1 \qquad \mathrm{s.t.} \qquad \mD \vu = \vy
    \end{align}
    where $\vy = [y_1, \dots, y_N]^\top$ and $\mD \in \R^{N \times K}$ is a matrix whose $i$, $k^\textrm{th}$ entry is $\mD_{ik} = \left( \vw_k^\top \vx_i + b_k \right)_+$, where $\vw_k$, $b_k$ are chosen such that $\vy \in \text{range}(\mD)$, which guarantees that the problem above is well-posed. 
    Then $\| \vv^* \|_1 \leq \| \vu^* \|_1$. To see this, note that $\| \vu^* \|_1$ is an upper bound on the optimal objective value $S^*$ of the neural network interpolation problem \eqref{opt:pn} for single-task labels $y_1, \dots, y_N$, since \eqref{opt:pn} allows for optimizing over the $\vw_k, b_k$ in addition to the $v_k$. Moreover, $S^* \geq \| \vv^* \|_1$, since otherwise the optimal parameters for \eqref{opt:pn} would provide a smaller-than-optimal objective value for \eqref{opt:multivariate_problem_T_tasks}. Additionally, $\| \vu^* \|_1 \leq \| \mD^+ \vy \|_1$, where $\mD^+ = \mD^\top (\mD \mD^\top)^{-1}$ is the pseudo-inverse of $\mD$, this is because the vector $\tilde{\vu} = \mD^+ \vy$ necessarily satisfies the interpolation constraint in \eqref{opt:l1_fixed_dict}. Furthermore, 
    $$\| \mD^+ \vy \|_1 \leq \sqrt{K} \| \mD^+ \vy \|_2 \leq \sqrt{K} \| \mD^+ \|_2 \| \vy \|_2 = \sqrt{K} \| \vy \|_2 / \sigma_{\min}(\mD), $$
    where $\sigma_{\min}(\mD)$ denotes the smallest non-zero singular value of $\mD$. Combining these inequalities with boundedness of the $y_i$, we have $|v_k^*| \leq C$, where $C:= B \sqrt{KN}/\sigma_{\min}(\mD)$.

Now consider the full $T$-task problem with labels $\vy_1, \dots, \vy_N \in \R^T$. Note that any network $f_{\vtheta}$ with width $K \geq NT$ can represent the function $(f_{\vtheta_1^*}, \dots, f_{\vtheta_T^*})$, where each $f_{\vtheta_t^*}$ is an optimal solution with no more than $N$ neurons to the single-task problem \eqref{opt:multivariate_problem_T_tasks} for task-$t$ labels $y_{1t}, \dots, y_{Nt}$. Therefore, the optimal output weights $\vv_1^*, \dots, \vv_N^*$ solving \eqref{opt:multivariate_problem_T_tasks} for $\vy_1, \dots, \vy_N$ must satisfy $\sum_{k=1}^K \| \vv_k^* \|_2 \leq  \sum_{k=1}^K \| \vv_k^* \|_1  \leq CT$.  This upper bound also holds for solutions to \eqref{opt:multivariate_problem_T_tasks} when $K \geq N^2$, for which the optimal objective value is the same as when $K \geq NT$ (see footnote \footref{fn:K_N_squared} and Theorem 5 in \cite{shenouda2024variation}). Furthermore, by the neural balance theorem \cite{shenouda2024variation, yang2022better, parhi2023deep} the optimal input and output weights for \eqref{opt:multivariate_problem_T_tasks} must satisfy
    \begin{align*}
        \frac{1}{2} \sum_{k=1}^K \| \vw_k^* \|_2^2 + \| \vv_k^* \|_2^2 = \sum_{k=1}^K \| \vv_k^*\|_2 \|\vw^{*}_k\|_2 \leq CT
    \end{align*}
    and since each $\vw_k^* \in \bS^{d-1}$, the above implies that
    \begin{align*}
        |v_{kt}^*|^2 \leq \sum_{k=1}^K \| \bm{v}_k^* \|_2^2 \leq 2CT - K \leq 2CT \implies |v_{kt}^*| \leq \sqrt{2CT}
    \end{align*}
    for each $k$, $t$.
\end{proof}

\paragraph{Claim (i): $J$ is closely approximated by $H$ in a neighborhood of $J$'s minimizer.} With these results in hand, take $\vtheta^*$ to be a set of optimal \textit{active} neuron parameters as in the statement of \cref{th:main_multivariate}. For an individual $k$, define $g(v_{ks}):= \left\| \begin{bmatrix}
    v_{ks} \\ \vv_{k \setminus s}^*
\end{bmatrix} \right\|_2 = \sqrt{v_{ks}^2 + \| \vv_{k \setminus s}^* \|_2^2}$. Note that $g$ is infinitely differentiable everywhere unless $\| \vv_{k \setminus s}^* \|_2 = 0$. Since $\vtheta^*$ is restricted to active neurons, this is only possible if $v_{ks}^* \neq 0$, in which case $ \frac{(v_{ks}^*)^2}{\| \vv_{k \setminus s}^* \|_2^2} = \infty$ (see footnote \footref{fn:inf_val_rv}). Therefore, in the event that $\frac{(v_{ks}^*)^2}{\| \vv_{k \setminus s}^* \|_2^2} \leq T^{-\beta}$ (for some fixed $0 < \beta < 1$)---which occurs with probability at least $1-(T^\beta+1)/T$ by \cref{lemma:markov_bound}---we can express $g$ (for this $k$) using Taylor's theorem as
\begin{align} \label{eq:g_Taylor}
    g(v_{ks}) &= g(0) + g'(0) v_{ks} + \frac{1}{2} g''(0) v_{ks}^2 + \frac{1}{6} g'''(c) v_{ks}^3 \\
    &= \| \vv_{k \setminus s}^* \|_2 + \frac{1}{2} \frac{v_{ks}^2}{ \| \vv_{k \setminus s}^* \|_2} -\underbrace{ \frac{1}{2} \frac{c \| \vv_{k \setminus s}^* \|_2^2 \, v_{ks}^3}{ ( \| \vv_{k \setminus s}^*\|_2^2  + c^2 )^{5/2}}}_{R(v_{ks})}
\end{align}
for some $c \in (-|v_{ks}|, |v_{ks}|)$, where
\begin{align}
    |R(v_{ks})| \leq \frac{|c| |v_{ks}|^3}{2 \| \vv_{k \setminus s}^* \|_2^3} \leq \frac{|c|T^{3 \alpha}}{2} \left( \frac{(v_{ks}^*)^2}{\| \vv_{k \setminus s}^* \|_2^2} \right)^{3/2} \leq \frac{|c|T^{3 \alpha-3 \beta/2}}{2}
\end{align}
whenever $|v_{ks}| \leq T^{\alpha}|v_{ks}^*|$, for any $\alpha > 0$. By \cref{lemma:v_bounded}, $|c| \leq T^\alpha|v_{ks}^*| \leq T^\alpha\sqrt{CT} = T^{\alpha+1/2} \sqrt{C}$ for a constant $C$ independent of $T$. Therefore:
\begin{align}
    |R(v_{ks})| \leq \frac{ \sqrt{C}}{2} T^{4\alpha + 1/2 - 3 \beta/2}
\end{align}
with probability at least $1-(T^{\beta} + 1)/T $ whenever $|v_{ks}| \leq T^{\alpha} |v_{ks}^*|$. By the union bound:
\begin{align}
    \sum_{k=1}^K |R(v_{ks})| \leq \frac{K\sqrt{C}}{2} T^{4\alpha + 1/2 - 3 \beta/2} = O \left( T^{4\alpha + 1/2 - 3 \beta/2} \right)
\end{align}
with probability at least $1-K(T^\beta+1)/T = 1-O \left( T^{\beta-1} \right)$ whenever $|v_{ks}| \leq T^{\alpha} |v_{ks}^*|$ for all $k = 1, \dots, K$. Since $J$ and $H$ in the theorem statement differ only by $\sum_{k=1}^K |R(v_{ks})|$, this concludes the proof of the first claim, taking $\beta = 2/3$ and $\alpha = 1/16$.

\paragraph{Claim (ii): the exact minimizer of $H$ is a near-minimizer of $J$.} Note that
\begin{align}
    \sqrt{v_{ks}^2 + \| \vv_{k \setminus s}^* \|_2^2} &\leq \| \vv_{k \setminus s}^* \|_2 + \frac{v_{ks}^2}{2 \| \vv_{k \setminus s}^* \|_2} \\
    \iff v_{ks}^2 + \| \vv_{k \setminus s}^* \|_2^2 &\leq \left( \| \vv_{k \setminus s}^* \|_2 + \frac{v_{ks}^2}{2 \| \vv_{k \setminus s}^* \|_2} \right)^2 \\
    &= \| \vv_{k \setminus s}^* \|_2^2 + v_{ks}^2 + \frac{v_{ks}^4}{4 \| \vv_{k \setminus s}^* \|_2^2} \\
    \iff 0 &\leq \frac{v_{ks}^4}{4 \| \vv_{k \setminus s}^* \|_2^2} 
\end{align}
which is always true for any $v_{ks}$. Therefore, $J(v_{1s}, \dots, v_{Ks}) \leq H(v_{1s}, \dots, v_{Ks})$ for any $v_{1s}, \dots, v_{Ks}$. For brevity, denote $\vv_s := (v_{1s}, \dots, v_{Ks})$, $\vv_s^*:= (v_{1s}^*, \dots, v_{Ks}^*)$, and $\vv_s':= (v_{1s}', \dots, v_{Ks}')$. Write
\begin{align}
    J(\vv') - J(\vv^*) = J(\vv') - H(\vv') + H(\vv') - H(\vv^*) + H(\vv^*) - J(\vv^*)
\end{align}
Because $J(\vv) \leq H(\vv)$ for any $\vv$, $J(\vv') - H(\vv') \leq 0$, and because $\vv'$ minimizes $H$, $H(\vv') - H(\vv^*) \leq 0$. By claim (i) in the proof, $H(\vv^*) - J(\vv^*) \leq \frac{K\sqrt{C}}{2} T^{4\alpha + 1/2 - 3 \beta/2}$. Therefore: 
\begin{align}
    J(\vv') - J(\vv^*) \leq \frac{K\sqrt{C}}{2} T^{4\alpha + 1/2 - 3 \beta/2}  = O \left( T^{4\alpha + 1/2 - 3 \beta/2} \right)
\end{align}
with probability at least $1-K(T^\beta+1)/T = 1-O \left( T^{\beta-1} \right)$.
\end{proof}
\section{Additional Experimental Results and Details}\label{appendix:additional_experiments}
All of our experiments were carried out in PyTorch and used the Adam optimizer. We trained the models with mean squared error loss and weight decay with $\lambda=1e-5$ for the univariate experiments and $\lambda=1e-3$ for the multi-variate experiments. All models were trained to convergence. The networks were initialized with $20$ neurons for the univariate experiments and $800$ neurons for the multi-variate experiments. For solving \eqref{eq:H} over a fixed set of optimal neurons we utilized CVXPy \cite{diamond2016cvxpy}.
\subsection{Additional Experiments from \Cref{sec:multivariate}}

\cref{fig:extra-multivariate-exps} below provides additional experimental results to accompany the discussion in \cref{sec:multivariate}. The results demonstrate that our observations from \cref{sec:multivariate} are consistent across multiple random initializations of the network.
\begin{figure}[hb]
    \centering
    \begin{subfigure}{0.3\textwidth}
        \includegraphics[width=\textwidth]{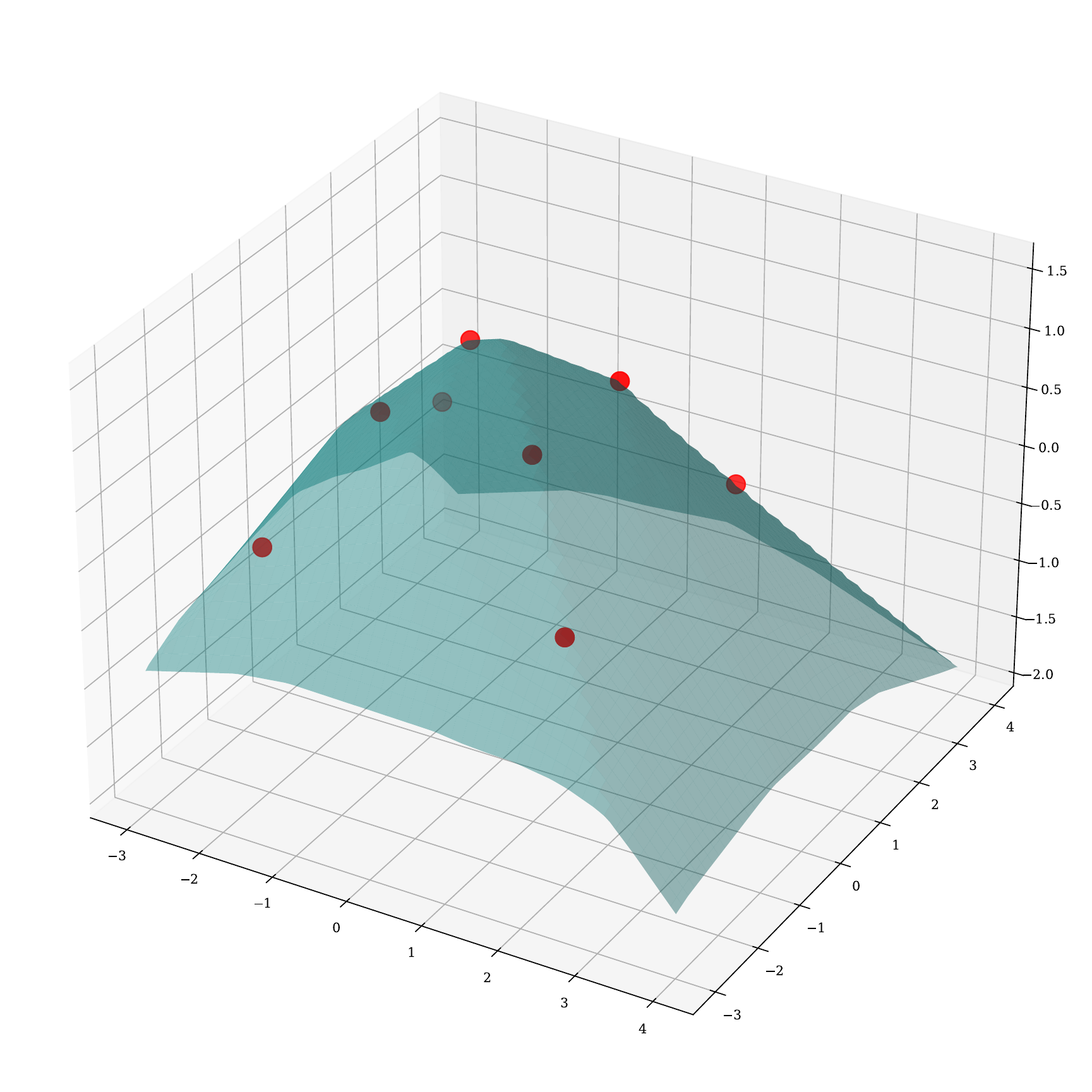}
        \caption{}
    \end{subfigure}
    \hfill
    \begin{subfigure}{0.3\textwidth}
        \includegraphics[width=\textwidth]{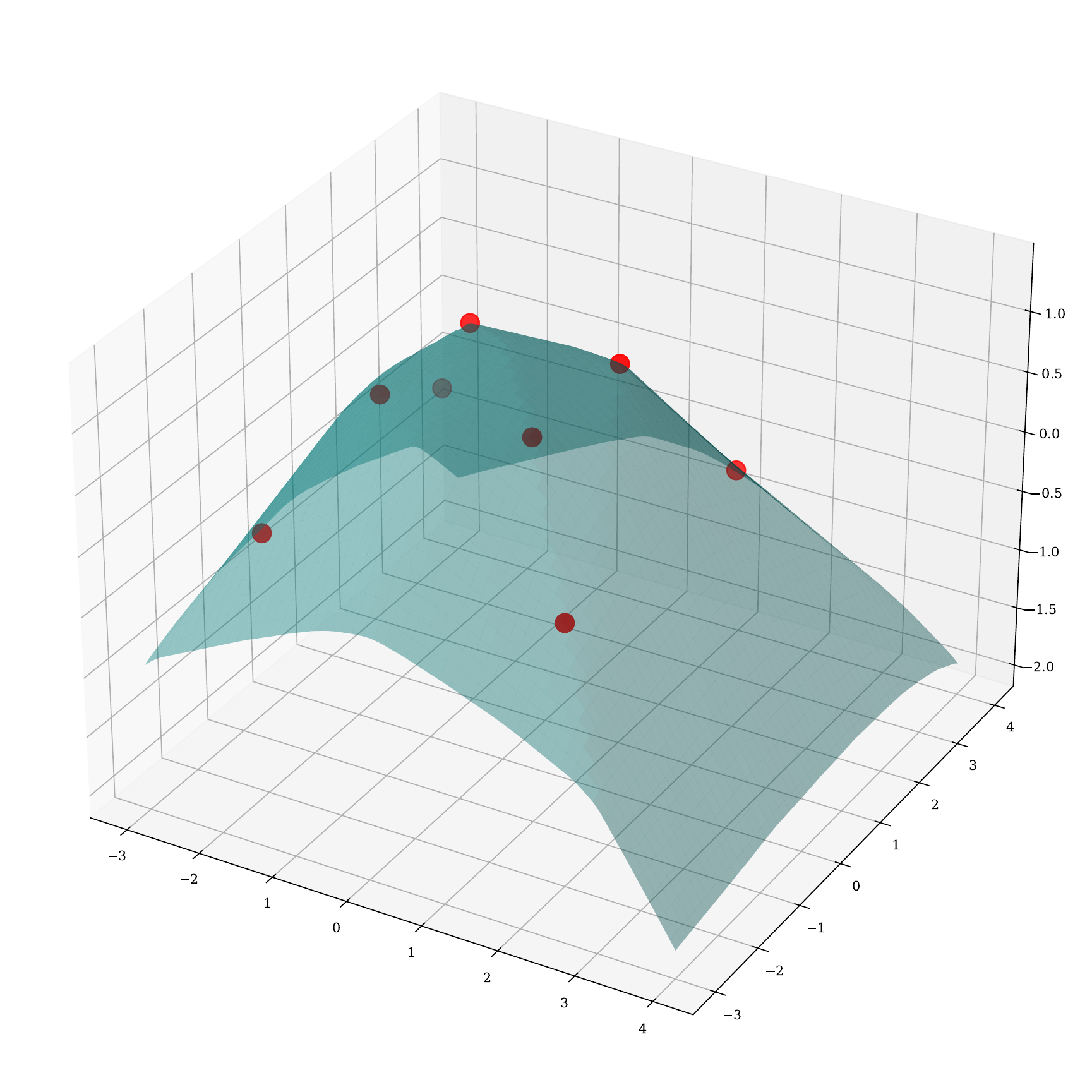}
        \caption{}
    \end{subfigure}
    \hfill
    \begin{subfigure}{0.3\textwidth}
        \includegraphics[width=\textwidth]{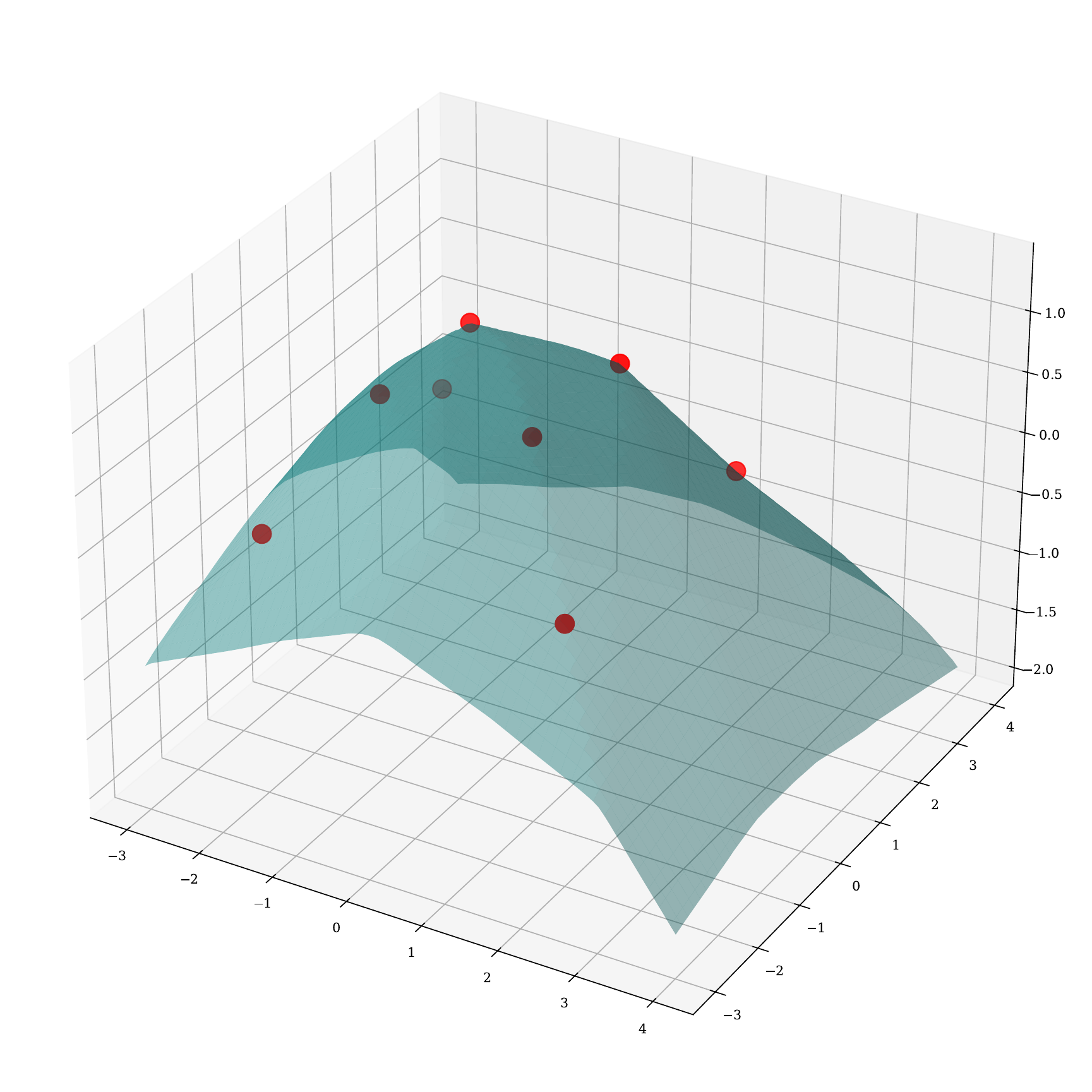}
        \caption{}
    \end{subfigure}
    \bigskip
    \begin{subfigure}{0.3\textwidth}
        \includegraphics[width=\textwidth]{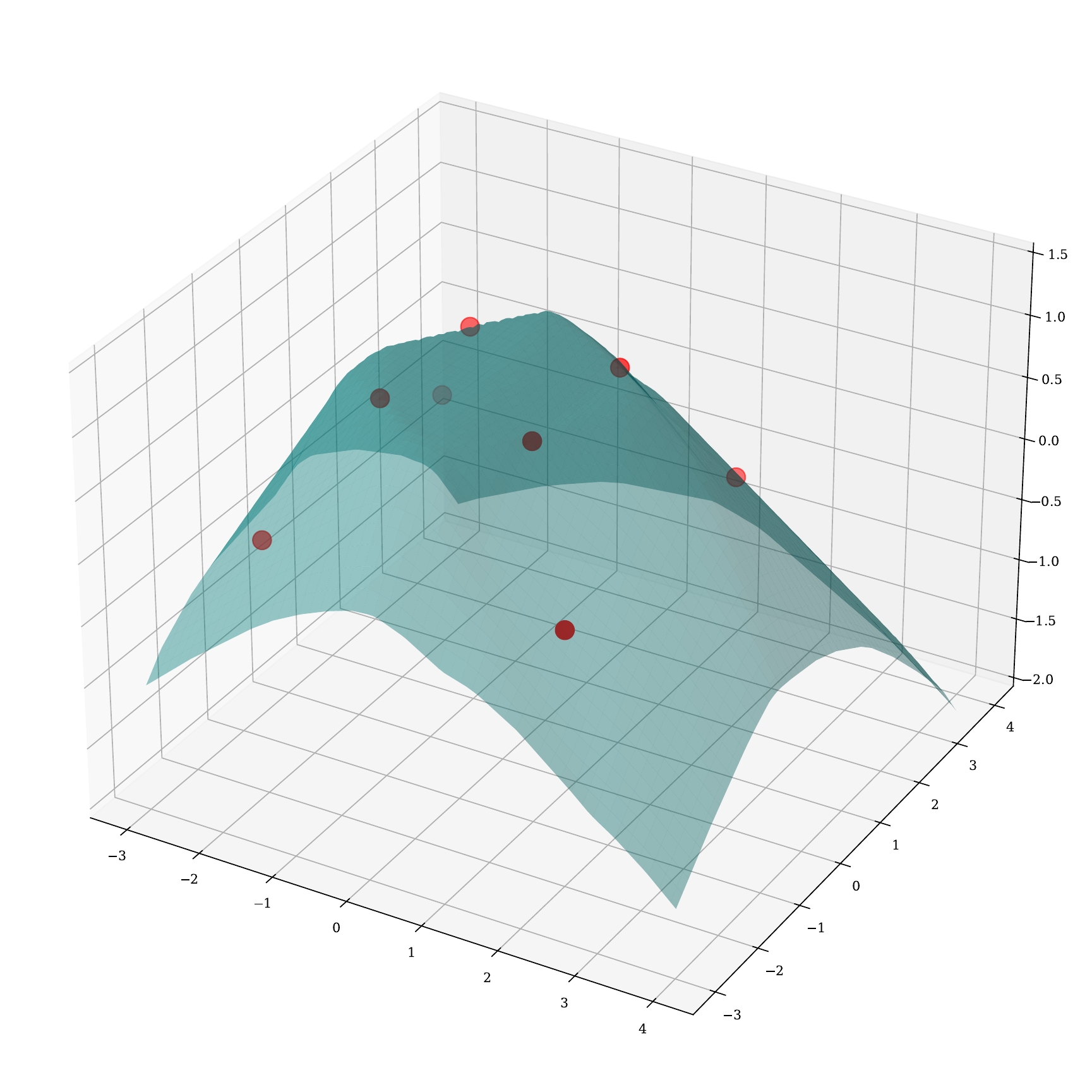}
        \caption{}
    \end{subfigure}
    \hfill
    \begin{subfigure}{0.3\textwidth}
        \includegraphics[width=\textwidth]{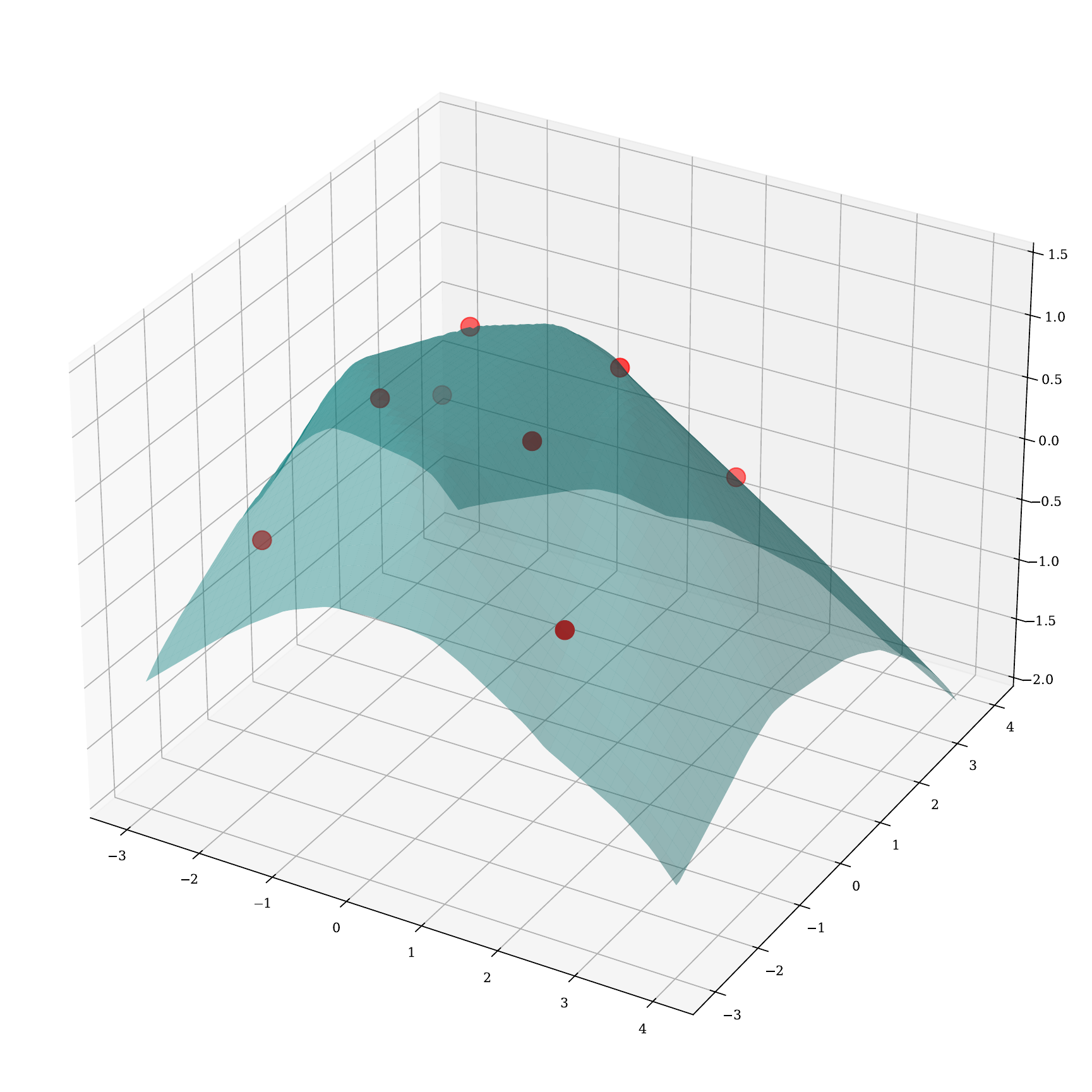}
        \caption{}
    \end{subfigure}
    \hfill
        \begin{subfigure}{0.3\textwidth}
        \includegraphics[width=\textwidth]{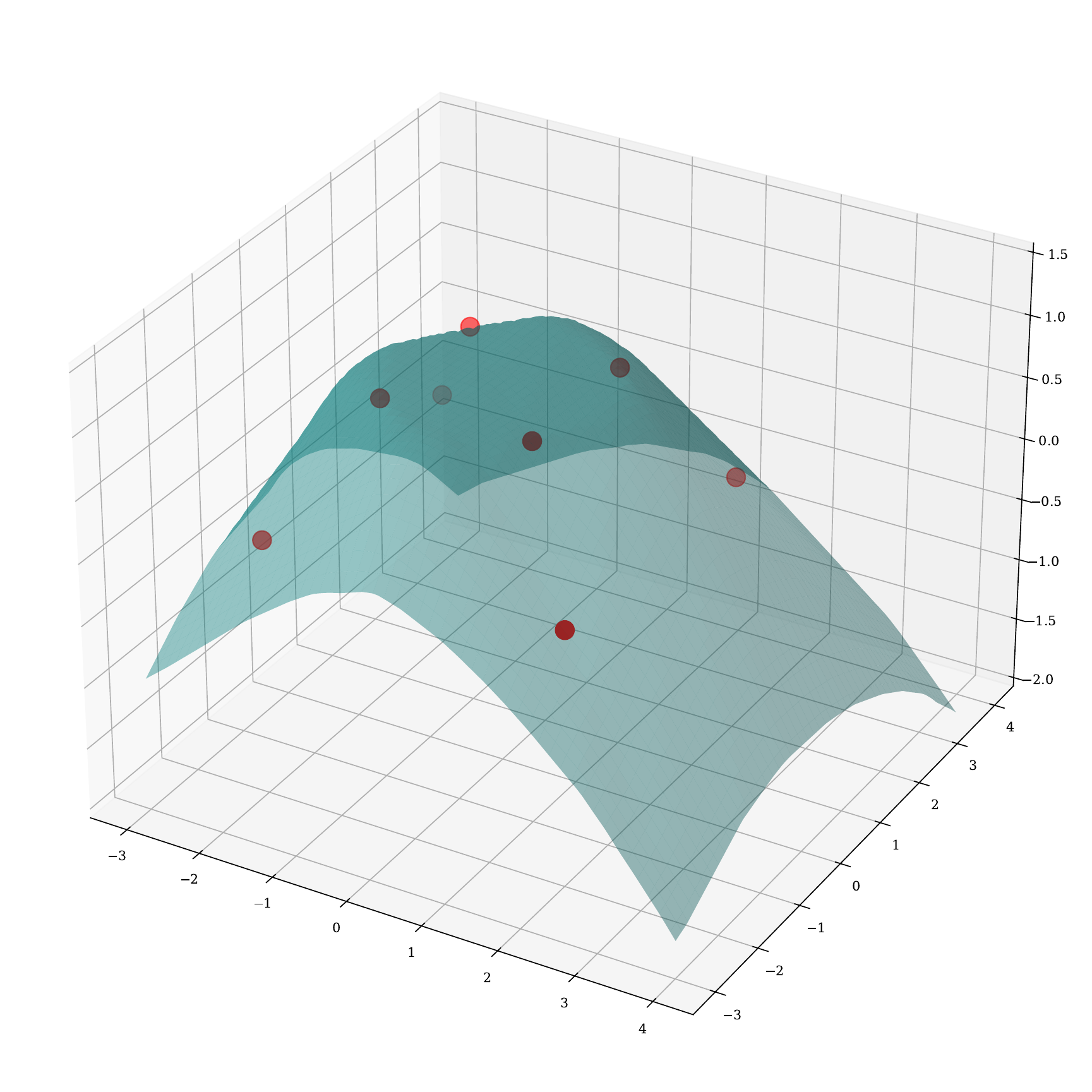}
        \caption{}
    \end{subfigure}
    \caption{We present three more trials of the same experiment from \cref{sec:multivariate}. The top row corresponds to the solution of the fist output of a multi-task neural network with $T=101$ tasks. The first task is the original (i.e. interpolating the red points), the other $100$ are randomly sampled i.i.d from a Bernoulli distribution with equal probability for one and zero. The second row corresponds to the solutions obtained by solving \eqref{opt:RKHS_problem}. We see again that for the $T=101$ multi-task neural network the learned function is consistent across multiple random initializations. Moreover, those solutions are also similar to the ones obtained by solving \eqref{opt:RKHS_problem}. These results suggest that with many diverse tasks, the contributions of any one task on the optimal neurons are not significant.}
    \label{fig:extra-multivariate-exps}
\end{figure}
\subsection{High Dimensional Multivariate Experiments}
In this section we provide additional experiments in a higher dimensional setting to demonstrate how multi-task solutions can differ drastically from single-task. For these experiments we consider a student-teacher model. In particular, we generated $25$ random ReLU neurons with unit norm input weights $\vw_t \in \R^{5}$ for $t = 1,\dots, 25$. These served as ``teacher" neurons. We then generated a multi-task dataset $\{\vx_i, \vy_i\}^{20}_{i=1}$ with $\vx_i \in \R^{5}$ and sampled i.i.d from a standard multi-variate Gaussian distribution. The labels $\vy_i \in \R^{25}$ were then generated according to the teacher ReLU neurons, that is,
\begin{align*}
    y_{it} = (\vw^{T}_t \vx_i)_{+}.
\end{align*}
We then trained 25 student single-output ReLU neural networks on each tasks as well as a 25-output multi-task ReLU neural network on all the tasks. Both were trained to minimize MSE loss and were regularized a weight decay parameter of $\lambda = 1e-4$. All networks nearly interpolated the data with MSE value less than $1e-4$. \Cref{fig:single_task_nets} shows the learned single task networks evaluated along a a unit norm vector $\vw \in \R^{5}$. From the plots it is clear that the single task networks recover the ground truth function (i.e. a single ReLU neuron) as it looks like a ReLU ridge function in every direction. Moreover, we observed an average sparsity of five active neurons across all the trained single-output networks.  

In the \Cref{fig:mtl_high_dim_nets} we also plot the  output of the $t^{\text{th}}$ function from the learned multi-task network evaluated at the same $\vw$. In this case, the functions look very different from a ReLU ridge function and do not recover the ground truth data-generating function for the respective task. \Cref{fig:mtl_sparsity} shows the sparsity pattern of the weights for each neuron with roughly $150$ neurons contributing to all the outputs.
\begin{figure}
    \centering
    \includegraphics[width=\linewidth]{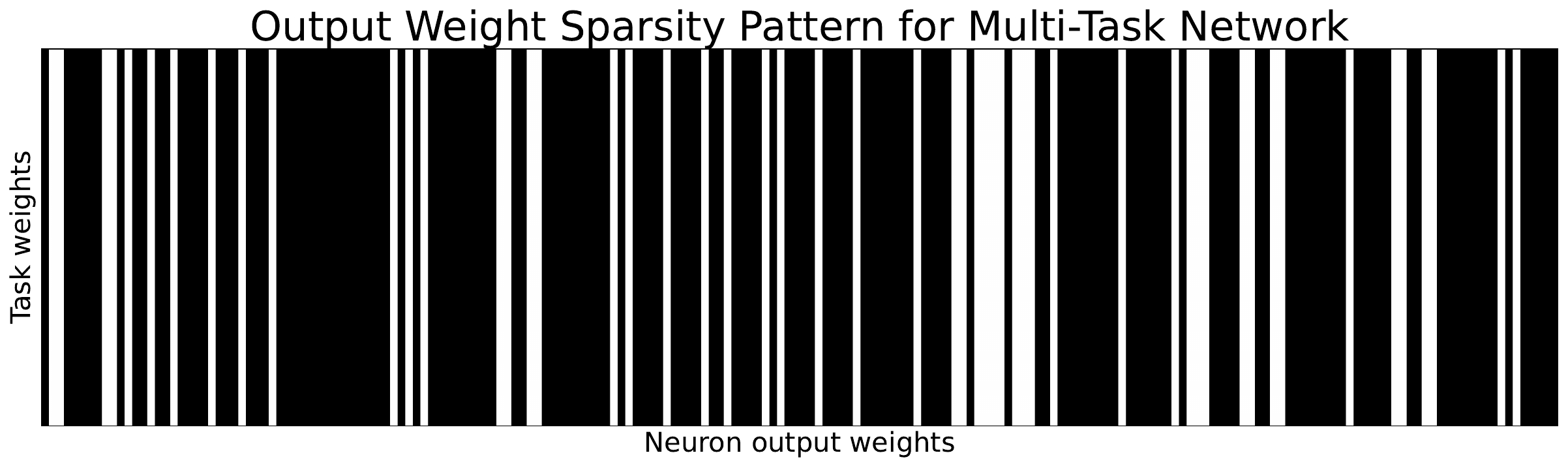}
    \caption{Sparsity pattern for output weight matrix of the multi-task student network. The $k^{\text{th}}$ column in the matrix corresponds to the output weight of the $k^{\text{th}}$ neuron. We observe that each neuron either contributes to all the tasks or none.}
    \label{fig:mtl_sparsity}
\end{figure}
\begin{figure}
    \centering
    \includegraphics[width=\linewidth]{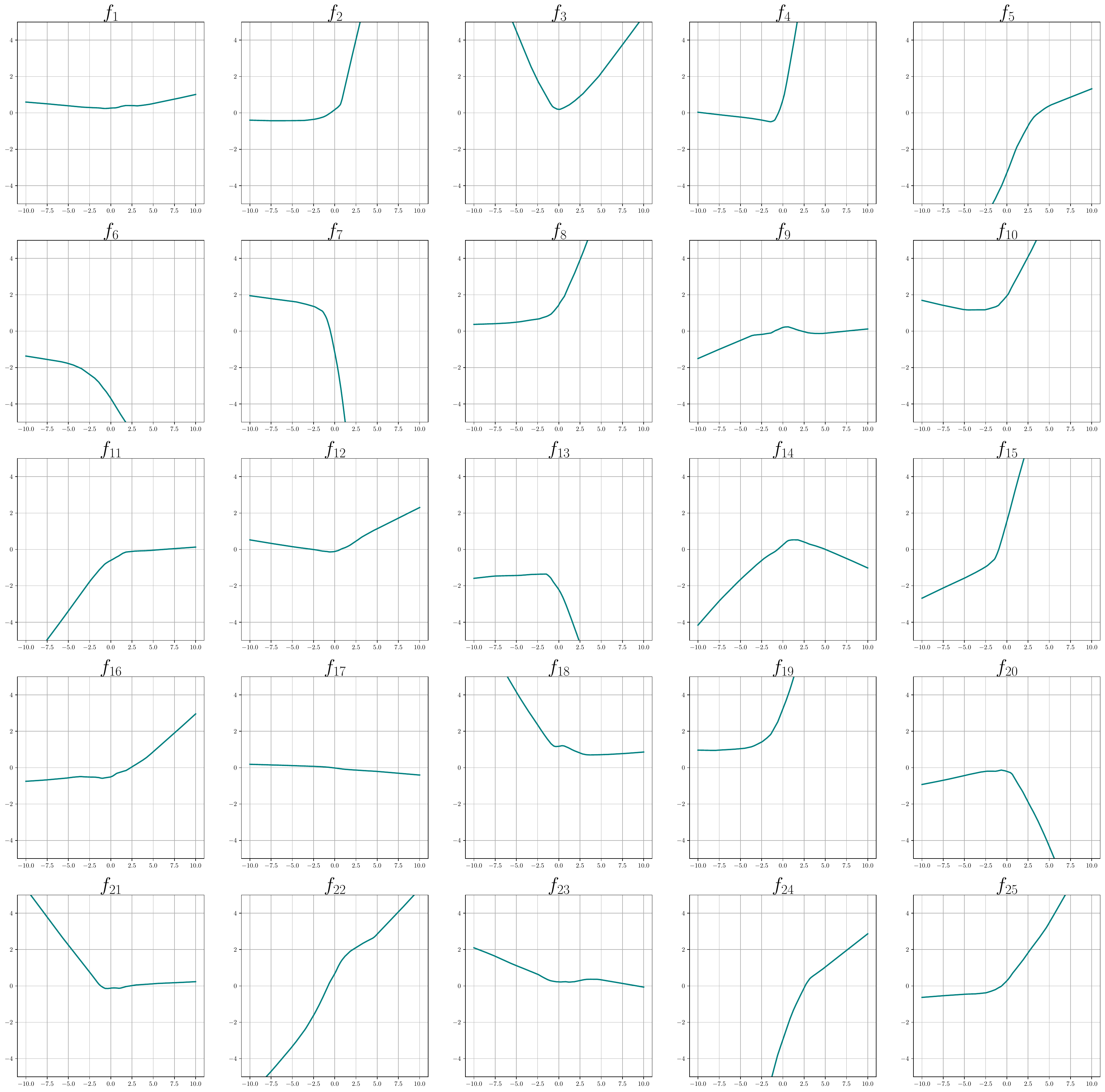}
    \caption{Multi-task solutions along the same direction $\vw$. Here $f_{t}$ denotes the $t^{\text{th}}$ output of the multi-task network. We observe that unlike \Cref{fig:single_task_nets} the functions do not look like ReLU ridge functions and have variation in multiple directions.}
    \label{fig:mtl_high_dim_nets}
\end{figure}
\begin{figure}
    \centering
    \includegraphics[width=\linewidth]{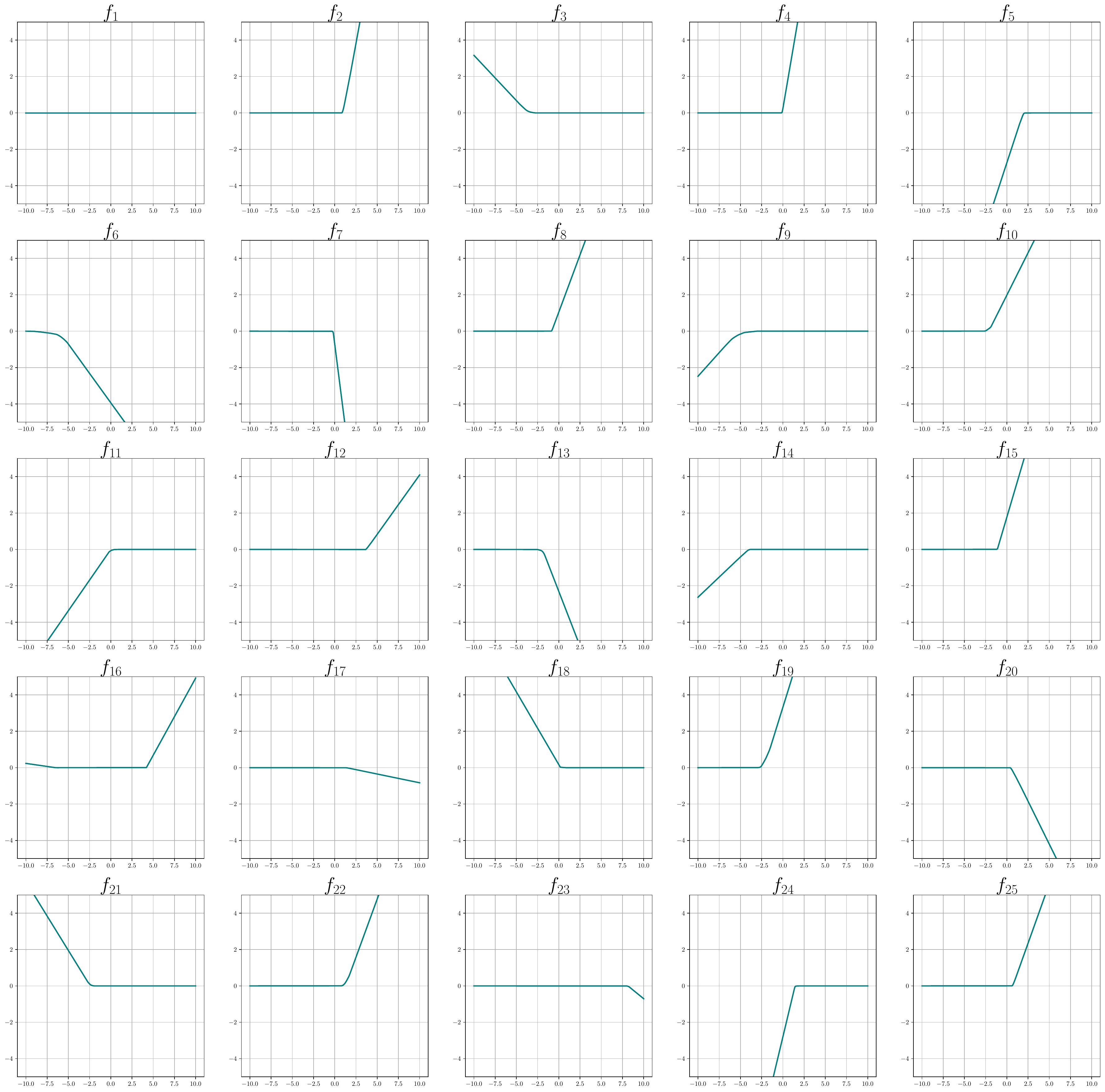}
    \caption{The 25 single-task networks evaluated along the same direction $\vw$ as in \Cref{fig:mtl_high_dim_nets}. Here $f_{t}$ denotes the $t^{\text{th}}$ single-task network trained on task $t$ according to the data generating function above.  Here as we expect the single-task nets are ReLU ridge functions. We note that these observations hold across different choices of the one-dimensional subspace $\vw$.}
    \label{fig:single_task_nets}
\end{figure}
\end{appendices}
\end{document}